\newcommand{\T}{{\rm T}}
\renewcommand{\H}{{\rm H}}
\newcommand{\bz}{{\bf Z}}
\newcommand{\bw}{{\bf W}}
\newcommand{\wtd}{\widetilde}
\newcommand{\wht}{\widehat}
\newcommand{\R}{\mathbb{R}}
\newcommand{\I}{\mathcal{I}}
\newcommand{\PO}{\Pi_{\Omega}}
\newcommand{\PT}[1]{\Pi_{\Omega_{#1}}}
\newcommand{\POT}{\mathscr{P}_{\Omega_t}}
\DeclareMathOperator{\tol}{tol}
\DeclareMathOperator{\opt}{opt}
\DeclareMathOperator{\rank}{rank}
\DeclareMathOperator{\diag}{diag}
\DeclareMathOperator{\argmin}{argmin}
\DeclareMathOperator{\tr}{trace}
\DeclareMathOperator{\supp}{supp}
\DeclareMathOperator{\subspan}{span}
 \newtheorem{theorem}{Theorem}
 \newtheorem{lemma}{Lemma}
 \newtheorem{definition}{Definition}
 \newtheorem{remark}{Remark}
 \newtheorem{example}{Example}
\begin{document}

%

%

\runningtitle{Solving the Robust Matrix Completion Problem via  a System of Nonlinear Equations}

\twocolumn[

\aistatstitle{\vspace{-0in}Solving the Robust Matrix Completion Problem via \\ a System of Nonlinear Equations\vspace{-0in}}

\aistatsauthor{ Yunfeng Cai and Ping Li }

\aistatsaddress{  Cognitive Computing Lab\\
Baidu Research\\
No. 10 Xibeiwang East Road, Beijing 100085, China\\
10900 NE 8th St. Bellevue, WA 98004, USA\\ \{caiyunfeng, liping11\}@baidu.com}]

\begin{abstract}\vspace{-0.1in}
We consider the problem of robust matrix completion,
which aims to recover a low rank matrix $L_*$ and a sparse matrix $S_*$ from incomplete observations of their sum $M=L_*+S_*\in\mathbb{R}^{m\times n}$.
Algorithmically, the robust matrix completion problem is transformed into a problem of solving a system of nonlinear equations,
and the alternative direction method is then used to solve the nonlinear equations.
In addition, the algorithm is highly parallelizable and suitable for large scale problems.
Theoretically, we characterize the sufficient conditions for when $L_*$ can be approximated by a low rank approximation of the observed $M_*$.
And under proper assumptions, it is shown that the algorithm converges to the true solution linearly.
Numerical simulations show that the simple method works as expected and is comparable with state-of-the-art methods.
\end{abstract}

\vspace{-0.2in}
\section{Introduction}

{\em Robust matrix completion} (RMC) \citep{Proc:Chen_ICML11,  tao2011recovering, Proc:Cherapanamjeri_ICML17, klopp2017robust, Article:Zeng_TSP18}  aims to recover a low rank matrix $L_*$ and a sparse matrix $S_*$ from a sampling of $M=L_*+S_*$.
Mathematically, RMC can be formulated as the following optimization problem \citep{Article:Candes_JACM11,chandrasekaran2011rank}:
\begin{align*}
\textrm{RMC}:\qquad  \min_{L, S} \rank(L) + \lambda |\supp(S)|,\\
 \textrm{s.t.}\quad L_{ij}+S_{ij}=M_{i,j}, \quad (i,j)\in\Omega,
\end{align*}
where $\lambda$ is a tuning parameter, $\Omega$ is a subset of $\{1,\dots,m\}\times\{1,\dots,n\}$.
When $S=0$, RMC becomes the matrix completion (MC) problem~\citep{candes2009exact,Proc:Meka_NIPS09,cai2010singular, Article:Candes_IT10, Proc:Jain_COLT15, Article:Liu_TSP16};
When $\Omega=\{1,\dots,m\}\times\{1,\dots,n\}$, RMC becomes the robust principal component analysis (RPCA) \citep{jolliffe2011principal}.
Thus, RMC can be taken as a combination/generalization of MC and RPCA \citep{Article:Candes_IEEE10,Proc:Jain_COLT15,Proc:Jain_STOC13,Article:Keshavan_IT10}.

In many scientific and engineering problems, people need to recover a low rank matrix from observed data,
e.g., the recommender system~\citep{funk2006netflix,Article:Candes_IEEE10,Proc:Hu_WWW17,Proc:Hu_WSDM18,Proc:Hu_CIKM18}, social network analysis~\citep{Proc:Huang_IJCAI13},
machine learning~\citep{Article:Candes_IEEE10,Article:Davenport_2016},  image impainting~\citep{Proc:Bertalmio_SIGGRAPH00},
computer vision~\citep{Article:Candes_IEEE10}, bioinformatics~\citep{Article:Kim_BioInf05}, etc.

MC/RPCA/RMC has been studied extensively from an optimization point of view,
many algorithms are proposed, and exact recovery is discussed under proper assumptions.
Most of well-known algorithms are based on convex optimization,
in which the rank of a matrix is relaxed to its nuclear norm (the sum of all singular values),
and the number of nonzero entries of a matrix is relaxed to its $\ell_1$-norm (the sum of absolute values of all entries),
e.g.,~\citep{cai2010singular,Article:Candes_JACM11,candes2009exact,Article:Candes_IT10,recht2010guaranteed,klopp2017robust}.
However, the computation of the nuclear norm of $L$, which requires the computation of its singular value decomposition (SVD),
is expensive and unsuitable for parallelization, as a result, algorithms based on the nuclear norm relaxation are often not very realistic for large matrices.

To deal with large problems, the low rank matrix can be represented as the product of a tall-skinny matrix and a short-fat matrix,
 so that the low rank property is satisfied automatically. But, the optimization problem becomes nonconvex, which makes it difficult to solve.
Bi-convex as the problem is, the alternative minimization can be used to solve it more efficiently, e.g.,~\citep{Proc:Jain_STOC13}.
In~\citep{Proc:Yi_NIPS16}, gradient descent method is used to solve RMC, which is shown to be fast.
In~\citep{Proc:Cherapanamjeri_ICML17}, projected gradient method with hard-thresholding is used to solve RMC, with nearly-optimal observation and corruption.
We refer the readers to~\citep{Article:Zeng_TSP18} and reference therein for more methods.
Besides the optimization based method, quite recently, in~\citep{Proc:Dutta_AAAI19}, the RPCA/RMC is solved via an alternating nonconvex projection method. This method does not require any objective function, convex relaxation or surrogate convex constraint.

\textbf{Contribution.} \ In this paper, we solve the RMC problem via solving a system of nonlinear equations (NLEQ).
This method does not require any objective function, convex relaxation or surrogate convex constraint, either.
Let $L_*=XY^{\T}$, where $X=[x_1,\dots, x_m]^{\T}\in\R^{m\times r}$, $Y=[y_1,\dots, y_n]^{\T}\in\R^{n\times r}$.
The RMC problem can be formulated as the following system of NLEQ:
\begin{equation}\label{eq:rmc}
x_i^{\T}y_j=M_{ij}, \ \mbox{for } (i,j)\in\Omega\setminus\supp(S_*),
\end{equation}
where $\supp(S_*)$ is the support set of $S_*$.
When the rank $r$ and the support set of $S_*$ are both known, solving RMC amounts solving a system of nonlinear equations.
Any numerical method for nonlinear equations can be used to solve it (e.g., the steepest descent method, the Newton method, etc.),
among which the simplest one is the alternative direction method (ADM):
Fixing $X$ (or $Y$), $Y$ (or $X$) can be updated via solving an (overdetermined) linear system of equations (usually in least square sense).
Thus, solving RMC via solving \eqref{eq:rmc} heavily depends on whether we can solving \eqref{eq:rmc} without knowing $r$ and $\supp(S_*)$.
It is worth mentioning here that in~\citep{Proc:Meka_NIPS09}, such a method was proposed to solve the MC problem.
And it was stated there that `` ... its variants outperform most methods in practice. However, analyzing the performance of alternate minimization is a notoriously hard problem.''

The contributions of this paper are three folds.
First, for both full and partial observation cases, we characterize some sufficient conditions for when the low rank approximation of the observed $M$ is approximate $L_*$.
Second, we propose to solve RMC via solving a system of NLEQ rather than optimization,
and an ADM method, which carefully handles the unknown $r$ and $\supp(S_*)$ issue, is developed.
Third, we carefully analyze the convergence of the ADM,
and it is shown that under proper assumptions, the ADM converges to the true solution linearly, i.e.,  exact recovery can be achieved.
So, we give an answer to a problem which is even more difficult than the aforementioned ``notoriously hard problem''.
In addition, the algorithm is highly parallelizable and naturally suitable for large scale problems.
It is also worth mentioning here that the results of this paper are applicable to the MC problem as well as the RPCA problem.

The rest of this paper is organized as follows.
In Section~\ref{sec:alg}, we first develop the algorithm, followed by its convergence analysis in Section~\ref{sec:convergence}.
Numerical experiments are presented in Section~\ref{sec:numer}.
Concluding remarks are given in Section~\ref{sec:conclusion}.

\noindent{\bf Notation.}
We shall adopt the MATLAB style convention to access the entries of vectors and matrices.
The set of integers from $i$ to $j$ inclusive is $i : j$.
For a matrix $A$, its submatrices $A_{(k:\ell,i:j)}$, $A_{(k:\ell,:)}$, $A_{(:,i:j)}$ consist of intersections of
row $k$ to row $\ell$ and column $i$ to column $j$,
row $k$ to row $\ell$ and all columns,
all rows and column $i$ to column $j$, respectively.
$A_{(j,:)}$ and $A_{(:,k)}$ denote the $j$th row and $k$th column of $A$, respectively.
 $\|A\|$ stands for the spectral norm of $A$,
$\|A\|_F$ denotes the Frobenius norm,
$\|A\|_1=\sum_{i,j}|a_{ij}|$, $\|A\|_{\max}=\max_{i,j}|a_{ij}|$, $\|A\|_{2,\infty}= \max_i\|A_{(i,:)}\|$, $A^{\dagger}$ stands for the Moore--Penrose inverse,
and $\kappa(A)=\|A\|\|A^{\dagger}\|$ denote the condition number of $A$.
Denote by $\sigma_j(A)$ for $1\le j\le \min\{m,n\}$
the singular values of $A$ and
they are always arranged in a non-increasing order:
$\sigma_1(A)\ge\sigma_2(A)\ge\dots\ge\sigma_{\min\{m,n\}}(A)$.
$\mathcal{R}(A)$ stands for the range space of $A$, i.e., $\mathcal{R}(A)=\subspan\{y\in\R^m\; | \; y=Ax, x\in\R^n\}$.
The vector $e_j$ stands for the $j$th column of the identity matrix $I$.
Furthermore,
for an index set $\Omega\subset\{1,\dots, m\}\times \{1,\dots,n\}$, $|\Omega|$ denotes the cardinality of $\Omega$,
$\PO(A)=[{\I}_{\{(i,j)\in\Omega\}} a_{ij}]\in\R^{m\times n}$, where ${\I}$ is the indicator function.

\vspace{-0.07in}
\section{Algorithm}\label{sec:alg}
\vspace{-0.07in}

In this section, we first reformulate the RMC problem as a problem of solving a system of nonlinear equations (NLEQ),
show how to solve the NLEQ via an alternative direction method (ADM), then the overall algorithm is summarized.

\vspace{-0.05in}
\subsection{Problem Reformulation, Difficulty and Solution}\label{sec:adm}

Let $M=XY^{\T}$, where $X=[x_1,\dots, x_m]^{\T}\in\R^{m\times r}$, $Y=[y_1,\dots, y_n]^{\T}\in\R^{n\times r}$.
The MC problem can be formulated as
\begin{equation}\label{eq:mc}
x_i^{\T}y_j=M_{ij}, \ \mbox{for } (i,j)\in\Omega.
\end{equation}
Now recall \eqref{eq:rmc},
the task of RMC becomes solving \eqref{eq:mc} with unknown $r$ and minimum violators.
Here by  a ``violator'', denoted by $(i',j')$, we mean that $x_{i'}^{\T}y_{j'} \ne M_{i'j'}$, it will be also referred to as  an ``outlier'' hereafter.

The ADM is the simplest method to solve the NLEQ of form \eqref{eq:mc} -- given an initial guess for $X$,
we fix $X$,  \eqref{eq:mc} becomes a linear system in $Y$ (which is assumed to be overdetermined),
 we can solve the linear system for $Y$;  similarly, we fix $Y$ to solve $X$; the iteration continues until convergence.
As $r$ and $\supp(S_*)$ are unknown, our task is to determine them during the iteration of ADM.
Next, we first show how to determine $r$, then $\supp(S_*)$.

\vspace{-0.1in}
\paragraph{Determine the rank adaptively}
In the $t$th iteration of ADM,
let the current rank estimation be $r_t$,
$Y_t$ be current estimation for $Y$ and
$Y_t$ have orthonormal columns, i.e., $Y_t^{\T}Y_t=I_{r_t}$.
By ADM, the estimation of $X$ can be obtained, denote it by $\wtd{X}_{t+1}$.
In order to update $r_t$, we need to compute the singular values of  $\wtd{X}_{t+1}Y_t^{\T}$. Noticing that $Y_t$ is orthonormal, then the top $r_t$ singular values of $\wtd{X}_{t+1}Y_t^{\T}$ will be the singular values of $\wtd{X}_{t+1}$.
Then the singular values can be obtained as follows:
first, compute the QR decomposition of $\wtd{X}_{t+1}$:
\begin{equation*}\label{qrx}
\wtd{X}_{t+1}=\wht{X}_{t+1}R_{x,t+1},
\end{equation*}
where $\wht{X}_{t+1}\in\R^{m\times r_t}$ is orthonormal, $R_{x,t+1}\in\R^{r_t\times r_t}$;
second, compute the SVD of $R_{x,t+1}$:
\begin{equation*}\label{svdyr}
R_{x,t+1}=Q_x\Sigma Q_y^{\T},
\end{equation*}
where $Q_x\in\R^{r_t\times r_t}$, $Q_y\in\R^{n\times r_t}$ are orthogonal,
$\Sigma=\diag(\hat{\sigma}_1,\dots,\hat{\sigma}_{r_t})$ with $\hat{\sigma}_1\ge \dots\ge \hat{\sigma}_{r_t}\ge 0$.
Then the singular values of $\wtd{X}_{t+1}Y_t^{\T}$ are $\hat{\sigma}_1,\dots, \hat{\sigma}_{r_t}$.
Similarly, when $X_t$ is the current estimation for $X$, and $X_t^{\T}X_t=I_{r_t}$,
we can compute an estimation for $Y$, its singular values can be obtained.

Let $L_t$ be the current estimation for $L_*$.
When the rank is underestimated, i.e., $r_t< r$, the residual $\tau_t=\|\PO(L_t+S_t-M)\|_F$ will stagnate,
in such case, we increase the estimated rank $r_t$.
When the rank is overestimated, i.e., $r_t>r$,
we expect to observe rank deficiency from the singular values $\hat{\sigma}_1,\dots,\hat{\sigma}_{r_t}$.
In such a case, we decrease the estimated rank $r_t$.
For the RMC problem, we prefer an overestimated rank over an underestimated rank due to the following reason.
The residual $\tau_t$ stagnates for two reasons:
one is that the estimated rank is smaller than the true rank;
the other is that $|\supp(S_*)\setminus\supp(S_t)|$ is large.
Then when the residual stagnates, it is difficult for us to make a good choice --
 to increase the estimated rank or to drop some equalities (of course, those equalities need to be carefully selected) in \eqref{eq:mc}.
Increasing the estimated rank when $|\supp(S_*)\setminus\supp(S_t)|$ is large
or dropping equalities in \eqref{eq:mc} when the rank is underestimated
will both lead to catastrophic consequences,
such as the estimated rank exceeds a prescribed limit,
too many ``correct'' equalities are dropped which will probably result in underdetermined linear systems when updating $X$ (or $Y$).
With an overestimated rank, when the residual stagnates, we decrease the estimated rank via the singular values of $L_t$;
if there is no rank deficiency in $L_t$, we drop some equalities in  \eqref{eq:mc}.

When an overestimated rank decreases to the actual rank, it is expected that the estimated rank will remain unchanged in the follow-up iterations.
Therefore, we do not need to check the singular values of $L_t$ in each iteration for the sake of efficiency.

\vspace{-0.11in}
\paragraph{Determine $\mathbf{\supp(S_*)}$ via outlier detection}
When a good approximation $\wht{L}$ of $L_*$ is obtained,
 $S_*=M-L_*\approx M-\wht{L}$.
 Thus, it is reasonable to detect $(i,j)\in\Omega\cap\supp(S_*)$
from the residual $\{R_{ij}=M_{ij} - \wht{L}_{ij}\}_{(i,j)\in\Omega}$.

Outlier detection has been used for centuries to remove abnormal data.
Various outlier detection techniques have been used~\citep{Proc:Ester_KDD96,hodge2004survey,Proc:Xu_NIPS10,Proc:Rahmani_NeurIPS19,Proc:Slawski_UAI19}.
In our implementation, we simply determine the outliers as follows:
find the top-$k$ values in each row and column of $|R|$ (unavailable entries of $|R|$ are set to zero),
and the entries in the intersection are taken as outliers.
Alternatively, simply find the top $k'$ values among all entries of $|R|$.
Here $k,k'$ are two parameters which can be tuned.
In what follows, we denote
\begin{align}
\mathcal{T}_s(A)=[b_{ij}],
\end{align}
where $s$ is the number of the removed outliers, $b_{ij}=A_{(i,j)}$ if $A_{(i,j)}$ is an outlier, $b_{ij}=0$, otherwise.
Of course, one can also try other outlier detection techniques.

\vspace{-0.05in}
\subsection{Algorithm details}

Now we present Algorithm~\ref{alg:rmc}, which summarizes the ADM for RMC described in the previous subsection.

\begin{algorithm}[ht]
 \caption{\textsc{ADM for RMC via NLEQ}\label{alg:rmc}}
  \begin{algorithmic}[1]
   \Require{The observed matrix $\PO(M)$,  a sparsity level parameter $s$, an estimated rank $r_0$,
   an upper bound $\kappa$ for the condition number of $L_*$, and a tolerance $\tol$.}
   \Ensure{$X\in\R^{m\times r_t}$, $Y\in\R^{n\times r_t}$ and $S\in\R^{m\times n}$ such that $\|\PO(XY^{\T}+S-M)\|_F\le \tol$, $\|S\|_0\le s$. }

\State Set $S_0=\mathcal{T}_{s}(M)$, $X_0=0$, $Y_0=0$, $\Sigma_0=0$, t=1;
\State Compute $[X_1,\Sigma_1,Y_1]=\mbox{SVD}_{r_0}((M-S_0)/p')$, where $p'=(|\Omega|-s)/mn$;
\State Compute $R_t=\PO(M - X_t\Sigma_tY_t^{\T})$;
\State Set $S_t = \mathcal{T}_{s}(R_t)$, $\Omega_t= \Omega\setminus\supp(S_t)$;
\State Compute $\tau_t = \| \PO(M-X_t\Sigma_tY_t^{\T}-S_t)\|_F$;

\While{${\tau_t} > \tol$}
\State Set $t=t+1$;  
\State Solve $\PT{t-1}(\wtd{X}_tY_{t-1}^{\T})=\PT{t-1}(M)$ for $\wtd{X}_t$;
\State Compute the QR decomposition $\wtd{X}_t=\wht{X}_tR_{x,t}$,
where $\wht{X}_t$ has orthonormal columns, $R_{x,t}$ is upper triangular;

\State Compute the SVD $R_{x,t}=Q_x\wht{\Sigma}Q_y^{\T}$,
where $\wht{\Sigma}=\diag(\hat{\sigma}_1,\dots,\hat{\sigma}_{r_{t-1}})$,
$Q_x, Q_y$ are orthogonal;
\State Set $r_{t}=r_{t-1} - | \{j \; | \; \kappa \, \hat{\sigma}_j< \hat{\sigma}_1\}|$;
\State Set $\wht{X}_t=[\wht{X}_tQ_x]_{(:,1:r_t)}$; 

\State Solve $\PT{t-1}(\wht{X}_t \wtd{Y}_t^{\T})=\PT{t-1}(M)$ for $\wtd{Y}_t$;
\State Compute the QR decomposition $\wtd{Y}_t=\wht{Y}_tR_{y,t}$,
where $Y_t$ has orthonormal columns, $R_{y,t}$ is upper triangular;
\State Compute the SVD $R_{y,t}^{\T}=Q_x\wht{\Sigma}Q_y^{\T}$,
where $\wht{\Sigma}=\diag(\hat{\sigma}_1,\dots,\hat{\sigma}_{r_{t}})$,
$Q_x, Q_y$ are orthogonal;
\State Set $r_t=r_{t} - | \{j \; | \; \kappa \, \hat{\sigma}_j< \hat{\sigma}_1\}|$;
\State Set $X_t=[\wht{X}_tQ_x]_{(:,1:r_t)}$, $Y_t=[\wht{Y}_tQ_y]_{(:,1:r_t)}$, $\Sigma_t=\wht{\Sigma}_{(1:r_t,1:r_t)}$;
\State Compute $R_t=\PO(M - X_t\Sigma_tY_t^{\T})$;
\State Set $S_t = \mathcal{T}_{s}(R_t)$, $\Omega_t= \Omega\setminus\supp(S_t)$;
\State Compute $\tau_t = \| \PO(M-X_t\Sigma_tY_t^{\T}-S_t)\|_F$;
\EndWhile
 \end{algorithmic}
\end{algorithm}

Some implementation details follows.

\vspace{-0.1in}
\paragraph{Initializing $Y_0$}
According to Theorem~\ref{thm2} below, good initial guesses for $X$ and $Y$ can be obtained by computing the SVD of  $\PT{0}(M)$.
An iterative procedure (e.g., Krylov subspace method) is usually adopted to accomplish the task,
in which matrix vector products $\PT{0}(M) v$ and $\PT{0}(M)^{\T} v$ are called several times.
A simpler way, which is more efficient and numerically proven to be reliable, is the following:
compute $W=\PT{0}(M)^{\T}\PT{0}(M)N$, compute an orthonormal basis for $W$, and set the columns of $Y_1$ as the basis.
Here $N\in\R^{n\times r_0}$ is a random matrix with entries drawn from the standard normal distribution.
Such a procedure is essentially one iteration of the subspace method (a generalization of power method to compute several dominant eigenvectors).
Since an initial guess for $X$ or $Y$ is sufficient for ADM to run in Algorithm~\ref{alg:rmc}, it is indeed unnecessary to compute the estimations for both $X$ and $Y$.

\vspace{-0.1in}
\paragraph{Solving $X_t$ and  $Y_t$}
On Lines 8 and 13, $\wtd{X}_t$ and $\wtd{Y}_t$ can both be solved row by row or simultaneously.
And to obtain one row of $\wtd{X}_t$ or $\wtd{Y}_t$, a small linear system needs to be solved.
When the linear system is underdetermined, Algorithm~\ref{alg:rmc} may break down.
Therefore, in each row and column, Algorithm~\ref{alg:rmc} requires the number of observed entries (after the removal of the corrupted entries) must be larger than the rank.
To be more precise, we need the small linear system to be good conditioned.
In general, it is difficult to determine how many rows/columns are needed to ensure the linear system to be good conditioned.
Numerically, for a random matrix $A\in\R^{s\times r}$ (generated from a standard normal distribution) with $s=\mathcal{O}(r)>2r$ is usually good conditioned.
So, we may declare that $\mathcal{O}(r)>2r$ observations in each row and column are sufficient.

In our implementation,  the linear systems are solved in the least square sense.
One may also choose to minimize $\ell_p$-norm ($p\ge 0$) of the residual as in ~\citep{Article:Zeng_TSP18}.

\vspace{-0.1in}
\paragraph{Computational complexity}
When the number of observations in each row and column is $\mathcal{O}(r)$, each linear system can be solved in $\mathcal{O}(r^3)$ FLOPS.
So, in each iteration, the computational complexity of the linear system solving on Lines 8 and 13 is $\mathcal{O}((m+n)r^3)$.
The computational  complexity of the QR decomposition on lines 9 and 14 is $\mathcal{O}((m+n)r^2)$.
The computational complexity of the SVD is $\mathcal{O}(r^3)$.
So, the overall of computational complexity of Algorithm~\ref{alg:rmc} is dominated by
the linear system solving.
When the number of observations in certain row/column is much larger than $r$,
we may randomly choose $\mathcal{O}(r)$ observations from the row/column,
then solve a much smaller linear system of equations.
Again, the overall computational complexity in each iteration is $\mathcal{O}((m+n)r^3)$.

Also, note that the linear systems on Line 8 and 13 can be solved in parallel.
Therefore, Algorithm~\ref{alg:rmc} are suitable for large scale problems.\\

\begin{remark}
When $\Omega_t$ is fixed, Algorithm~\ref{alg:rmc} essentially minimizes $\|\PT{t}(XY^{\T}-M)\|_F$ via ADM.
If gradient method is used to minimize $\|\PT{t}(XY^{\T}-M)\|_F$, Algorithm~\ref{alg:rmc} is  similar to the GD method in~\citep{Proc:Yi_NIPS16},
except the regularization term $\|U_t^{\T}U_t-V_t^{\T}V_t\|_F$ in the loss function.
\end{remark}

\vspace{-0.07in}
\section{Convergence}\label{sec:convergence}
\vspace{-0.07in}

This section analyzes the convergence of Algorithm~\ref{alg:rmc}.
We first study the full observation case, which serves as a motivation for the partial observation case next.

To present the results, we need to define the $k$ {\em canonical angles}.
Let $\mathcal{X}, \mathcal{Y}$ be two $k$-dimensional subspaces of $\R^n$.
Let $X, Y \in\R^{n\times k}$ be the orthonormal basis matrices of
$\mathcal{X}$ and $\mathcal{Y}$, respectively, i.e.,
\[
\mathcal{R}(X) = \mathcal{X},\ X^{\T}X=I_k,
\ \mbox{ and } \
\mathcal{R}(Y) = \mathcal{Y},\ Y^{\T}Y=I_k.
\]
Denote $\omega_j$ for $1\le j\le k$ the singular values of $Y^{\T}X$
in  ascending order, i.e., $\omega_1\le\dots\le\omega_k$.
The $k$ {\em canonical angles $\theta_j(\mathcal{X},\mathcal{Y})$
    between $\mathcal{X}$ and $\mathcal{Y}$ } are defined by
\begin{equation}\label{eq:indv-angles-XY}
0\le\theta_j(\mathcal{X},\mathcal{Y}):=\arccos\omega_j\le\frac {\pi}2\quad\mbox{for $1\le j\le k$}.
\end{equation}
They are in  descending order, i.e., $\theta_1(\mathcal{X},\mathcal{Y})\ge\cdots\ge\theta_k(\mathcal{X},\mathcal{Y})$.
Set
\begin{equation}\label{eq:mat-angles-XY}
\Theta(\mathcal{X},\mathcal{Y})=\diag(\theta_1(\mathcal{X},\mathcal{Y}),\ldots,\theta_k(\mathcal{X},\mathcal{Y})).
\end{equation}
In what follows, we sometimes place a vector or matrix in one or both
arguments of $\theta_j(\,\cdot\,,\,\cdot\,)$ and $\Theta(\,\cdot\,,\,\cdot\,)$ with the meaning that
it is about the subspace spanned by the vector or the columns of the matrix argument.

\vspace{-0.05in}
\subsection{Full observation case}

\begin{theorem}\label{thm1}
Let $M=L_*+S_*\in\R^{m\times n}$ ($m\ge n$), where $L_*$ is low rank, i.e., $r=\rank(L_*)\ll n$. 
Let the SVD of $M$ be $M=U{\Sigma}{V}^{\T}$,
where ${U}=[{U}_1\,|\,{U}_2]=[{u}_1,\dots,{u}_r\,|\, {u}_{r+1},\dots,{u}_m]$,
 ${V}=[{V}_1\,|\,{V}_2]=[{v}_1,\dots,{v}_r\,|\, {v}_{r+1},\dots,{v}_n]$ are orthogonal matrices,
${\Sigma}=\begin{bmatrix}\diag({\Sigma}_1,{\Sigma}_2)\\ 0\end{bmatrix}$,
${\Sigma}_1=\diag({\sigma}_1,\dots,{\sigma}_r)$,
${\Sigma}_2=\diag({\sigma}_{r+1},\dots,{\sigma}_n)$,
and ${\sigma}_1\ge\dots\ge{\sigma}_n$.
Let $M_r={U}_1{\Sigma}_1{V}_1^{\T}$ be the best rank $r$ approximation of $M$.
Let the economy sized SVD of $L$ be $L_*=U_*\Sigma_*V_*^{\T}$,
where $U_*\in\R^{m\times r}$ and $V_*\in\R^{n\times r}$ both have orthonormal columns,
$\Sigma_*=\diag({\sigma}_{1*},\dots,{\sigma}_{r*})$ with ${\sigma}_{1*}\ge\dots\ge {\sigma}_{r*}>0$.
If
\begin{subequations}\label{sss}
\begin{align}
&\|(I-U_*U_*^{\T})S_*(I-V_*V_*^{\T})\|<{\sigma}_{r*},\label{sss1}\\
&\max\{\|S_*V_*\|, \|S_*^{\T}U_*\|\}< {\sigma_r-\sigma_{r+1}},\label{sss2}
\end{align}
\end{subequations}
then
\begin{align*}
&\max\{\theta_u,\theta_v\}\le \eta,\\
&\frac{\|L_*-M_r\|_{\max}}{\|L_*\|}\le  (\|U_*\|_{2,\infty}\theta_v + \|V_*\|_{2,\infty}\theta_u)\\
& \mbox{}\hskip.9in +  (1+3\|U_*\|_{2,\infty} \|V_*\|_{2,\infty})\theta_u\theta_v,
\end{align*}
where $\theta_u=\|\sin\Theta(U_1,U_*)\|$, $\theta_v=\|\sin\Theta(V_1,V_*)\|$ and $\eta=\frac{\max\{\|S_*V_*\|, \|S_*^{\T}U_*\|\}}{\sigma_r-\sigma_{r+1}-\max\{\|S_*V_*\|, \|S_*^{\T}U_*\|\}}$.
\end{theorem}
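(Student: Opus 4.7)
The plan is to view $M = L_* + S_*$ as a perturbation of $L_*$ by $S_*$, and to prove the two inequalities in sequence: the sin-theta bound by a Wedin-type argument using the near-singular-vector residuals of $U_*,V_*$ relative to $M$, followed by an entrywise bound obtained from an orthogonal decomposition of $U_1,V_1$ relative to $U_*,V_*$.

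\textbf{Sin-theta bound.} The key observation is that $U_*,V_*$ are near-singular vectors of $M$: $MV_* - U_*\Sigma_* = S_*V_*$ and $M^{\T}U_* - V_*\Sigma_* = S_*^{\T}U_*$, each of norm at most $\epsilon := \max\{\|S_*V_*\|,\|S_*^{\T}U_*\|\}$. Writing $U_1 = U_*A + U_\perp$ with $A = U_*^{\T}U_1$, $U_\perp = (I-U_*U_*^{\T})U_1$ (so $\|U_\perp\| = \theta_u$, and also $\|U_*^{\T}U_2\|=\theta_u$), and analogously $V_1 = V_*C + V_\perp$ (with $\|V_\perp\| = \|V_*^{\T}V_2\| = \theta_v$), I would project the complementary SVD equations $MV_2 = U_2\Sigma_2$ and $M^{\T}U_2 = V_2\Sigma_2$ onto $U_*^{\T}$ and $V_*^{\T}$ respectively and use $L_* = U_*\Sigma_*V_*^{\T}$ to get the coupled pair
\[
\Sigma_*V_*^{\T}V_2 - U_*^{\T}U_2\Sigma_2 = -U_*^{\T}S_*V_2, \qquad \Sigma_*U_*^{\T}U_2 - V_*^{\T}V_2\Sigma_2 = -V_*^{\T}S_*^{\T}U_2.
\]
Taking operator norms, with the refined splits $\|U_*^{\T}S_*V_2\|\le\epsilon\theta_v + \epsilon$ and $\|V_*^{\T}S_*^{\T}U_2\|\le\epsilon\theta_u + \epsilon$ (obtained by inserting $V_*V_*^{\T} + (I-V_*V_*^{\T})$ between $S_*$ and $V_2$, and analogously on the other side), yields $(\sigma_{r*}-\epsilon)\theta_v \le \sigma_{r+1}\theta_u + \epsilon$ and $(\sigma_{r*}-\epsilon)\theta_u \le \sigma_{r+1}\theta_v + \epsilon$. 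Taking the larger of $\theta_u,\theta_v$ and absorbing one side into the other gives $(\sigma_{r*} - \sigma_{r+1} - \epsilon)\max\{\theta_u,\theta_v\}\le\epsilon$; the denominator is then replaced by $\sigma_r-\sigma_{r+1}-\epsilon$ via the max-min trial $v = V_*u$ in the characterization of $\sigma_r(M)$, which supplies $\sigma_r(M)\ge\sigma_{r*}-\epsilon$. Condition~(a) plays the role of keeping the spectral gap well defined via the bound $\sigma_{r+1}(M)\le\|(I-U_*U_*^{\T})S_*(I-V_*V_*^{\T})\|<\sigma_{r*}$, while condition~(b) keeps the denominator strictly positive.

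\textbf{Entrywise bound.} Substituting the above decompositions of $U_1,V_1$ into $M_r = U_1\Sigma_1V_1^{\T}$ and subtracting $L_* = U_*\Sigma_*V_*^{\T}$ gives
\[
L_* - M_r = U_*(\Sigma_* - A\Sigma_1C^{\T})V_*^{\T} - U_*A\Sigma_1V_\perp^{\T} - U_\perp\Sigma_1C^{\T}V_*^{\T} - U_\perp\Sigma_1V_\perp^{\T}.
\]
For the $(i,j)$-entry, the four terms are controlled using $\|e_i^{\T}U_*\|\le\|U_*\|_{2,\infty}$, $\|V_*^{\T}e_j\|\le\|V_*\|_{2,\infty}$, $\|e_i^{\T}U_\perp\|\le\theta_u$, $\|V_\perp^{\T}e_j\|\le\theta_v$, together with $\|A\|,\|C\|\le 1$ and $\|\Sigma_1\|\le\|M\|\approx\|L_*\|$. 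The three outer terms directly produce the $\|U_*\|_{2,\infty}\theta_v + \|V_*\|_{2,\infty}\theta_u + \theta_u\theta_v$ contribution (after normalization). For the "in--in" residual $\Sigma_* - A\Sigma_1C^{\T}$, I would use the analogous Sylvester-type identities $A\Sigma_1 = \Sigma_*C + U_*^{\T}S_*V_1$ (obtained by projecting $MV_1 = U_1\Sigma_1$ onto $U_*$) together with the $V_2$-side identity already derived, to rewrite $\Sigma_* - A\Sigma_1C^{\T}$ so that the dominant piece reads $(U_*^{\T}U_2\Sigma_2 - U_*^{\T}S_*V_2)V_2^{\T}V_*$; this exhibits one factor of $\theta_u$ (from $U_*^{\T}U_2$) and one of $\theta_v$ (from $V_2^{\T}V_*$), yielding the extra $3\|U_*\|_{2,\infty}\|V_*\|_{2,\infty}\theta_u\theta_v$ term.

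\textbf{Main obstacle.} The two most delicate points are: (i) producing the exact Wedin denominator $\sigma_r - \sigma_{r+1} - \epsilon$ in terms of singular values of $M$, rather than the textbook-natural $\sigma_{r*}-\sigma_{r+1}$ of $L_*$, which forces the refined splitting of $\|U_*^{\T}S_*V_2\|$ into an $\epsilon\theta_v$ term and an $\epsilon$ term and a careful Weyl-type estimate relating $\sigma_{r*}$ to $\sigma_r(M)$; and (ii) extracting the cross-factor $\theta_u\theta_v$ in the "in--in" residual $\|\Sigma_*-A\Sigma_1C^{\T}\|$, rather than a loose $\theta_u^2$ or $\theta_v^2$ that would follow from using only one Sylvester identity, which requires chaining the $V_1$- and $V_2$-side identities so that both canonical-angle factors appear in the same product.
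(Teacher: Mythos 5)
Your overall architecture --- a Wedin-type $\sin\Theta$ bound driven by the residuals $MV_*-U_*\Sigma_*=S_*V_*$ and $M^{\T}U_*-V_*\Sigma_*=S_*^{\T}U_*$, followed by an entrywise bound from an orthogonal decomposition of the singular vectors --- matches the paper's, which packages both steps into Lemma~\ref{lem:errbound} via the symmetric dilation of $M$, Davis--Kahan, and Weyl. The genuine gap is in your denominator conversion. Writing $\epsilon=\max\{\|S_*V_*\|,\|S_*^{\T}U_*\|\}$, your coupled inequalities yield $\max\{\theta_u,\theta_v\}\le \epsilon/(\sigma_{r*}-\sigma_{r+1}-\epsilon)$ (in fact the refined splitting is counterproductive: $\|U_*^{\T}S_*V_2\|\le\|U_*^{\T}S_*\|\le\epsilon$ directly, giving the cleaner $\epsilon/(\sigma_{r*}-\sigma_{r+1})$). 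To turn either of these into the stated $\epsilon/(\sigma_r-\sigma_{r+1}-\epsilon)$ you must bound the denominator from \emph{below}, i.e., you need $\sigma_{r*}\ge \sigma_r-\epsilon$, equivalently the \emph{upper} bound $\sigma_r(M)\le\sigma_{r*}+\epsilon$. The inequality you invoke, $\sigma_r(M)\ge\sigma_{r*}-\epsilon$ from the trial subspace $\mathcal{R}(V_*)$, points the wrong way and cannot close this step. The needed upper bound is not plain Weyl (that would cost $\|S_*\|$, which is not assumed small); it is precisely what the paper's dilation argument supplies: $\pm\sigma_{j*}$ are exact eigenvalues of the rank-$2r$-modified dilation, hypothesis \eqref{sss1} certifies that $\sigma_{1*},\dots,\sigma_{r*}$ are its $r$ largest eigenvalues, and the modification has spectral norm exactly $\epsilon$, so two-sided Weyl gives $|\sigma_j-\sigma_{j*}|\le\epsilon$ for $j\le r$. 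Without this (or an equivalent), your argument stops at $\epsilon/(\sigma_{r*}-\sigma_{r+1})$; and even with it, your extra $-\epsilon$ from the refined splitting would land on $\sigma_r-\sigma_{r+1}-2\epsilon$, which is weaker than claimed and may be negative under \eqref{sss2} alone.

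Two secondary problems. First, your reading of \eqref{sss1} as implying $\sigma_{r+1}(M)\le\|(I-U_*U_*^{\T})S_*(I-V_*V_*^{\T})\|$ is false in general: subtracting $U_*U_*^{\T}M+(I-U_*U_*^{\T})MV_*V_*^{\T}$ removes a rank-$2r$ matrix, so the projection argument only controls $\sigma_{2r+1}(M)$; in the paper \eqref{sss1} is used instead for the top-$r$ eigenvalue identification above. Second, in the entrywise bound your rewriting of $\Sigma_*-A\Sigma_1C^{\T}$ via $A\Sigma_1=\Sigma_*C+U_*^{\T}S_*V_1$ leaves behind the term $U_*^{\T}S_*V_1C^{\T}$, which is only of size $\epsilon$ and carries no canonical-angle factors, so it cannot be absorbed into the claimed $3\|U_*\|_{2,\infty}\|V_*\|_{2,\infty}\theta_u\theta_v$ coefficient. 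This is a consequence of expanding $U_1,V_1$ in the basis of $L_*$; the paper's Lemma~\ref{lem:errbound} avoids it by expanding the approximate vectors in the exact SVD basis of the observed matrix, so that the middle block $\Gamma_{u1}^{\T}\Sigma_1\Gamma_{v1}+\Gamma_{u2}^{\T}\Sigma_2\Gamma_{v2}$ involves no residual of $S_*$ at all.
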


Theorem~\ref{thm1} tells that when \eqref{sss} holds and $\eta$ is small,
the principal angles between $U_1$ and $U_*$, $V_1$ and $V_*$ will be small,
and the best rank-$r$ approximation of $M$ is a good approximation of $L_*$.
Notice that, \eqref{sss} does not necessarily implies $\|S_*\|$ is small (compared with $\|L_*\|$).
In fact, we have the following example,
in which  $\|S_*\|$ is comparable with $\|L_*\|$ and $\eta=0$.\\

\begin{example}
Let $M=L_*+S_*$, $L_*=\frac{1}{n}\bf{1}_n \bf{1}_n^{\T}$, \\
$S_*=\frac{\rho}{4}\left[\begin{smallmatrix}
2 & -1 & 0&\dots & 0& -1\\
-1 & 2& -1 & 0 &\dots & 0\\
0 & -1 & 2 & -1 & \ddots & \vdots\\
\vdots & \ddots & \ddots & \ddots & \ddots &0 \\
0& \dots & 0&-1 & 2& -1\\
-1 & 0& \dots & 0 &-1 & 2\\
\end{smallmatrix}\right]$,
where ${\bf 1}_n$ is an $n$-by-$1$

vector of ones, $n$ is even, $\rho\in(-1,1)$ is a real parameter.
One can verify that
$\|L_*\|=1$, $\|S_*\|=\rho$, $\|S_*{\bf 1}_n\|=0$, the first two singular values of $M$ are $\sigma_1=1$ and $\sigma_2=|\rho|$,
and the economy sized SVD of $L$ can be given by $L=U_*\Sigma_*V_*^{\T}$,
where $U_*=V_*=\frac{1}{\sqrt{n}}{\bf 1}_n$, $\Sigma_*={\sigma}_{1*}=1$.
Then $\|(I-U_*U_*^{\T})S_*(I-V_*V_*^{\T})\|=\|S_*\|=|\rho|<1={\sigma}_{1*}$,
and $\max\{\|S_*V_*\|, \|S_*^{\T}U_*\|\}=0< 1-|\rho|=\sigma_1-\sigma_2$.
In other words, the assumption \eqref{sss} holds.
Noticing that $\eta=\frac{\max\{\|S_*V_*\|, \|S_*^{\T}U_*\|\}}{\sigma_r-\sigma_{r+1}-\max\{\|S_*V_*\|, \|S_*^{\T}U_*\|\}}=0$,
by Theorem~\ref{thm1}, we can conclude that $\|\sin\Theta(U_1,U_*)\|=\|\sin\Theta(V_1,V_*)\|=0$, $M_1=L_*$,
where $U_1$, $V_1$ are the top left and right singular vectors of $M$, respectively,
and $M_1$ is the best rank-1 approximation of $M$.
\end{example}

Now let us assume all entries of $M$ are observed,
 $S_*$ is sufficiently small, and can be taken as a perturbation to $L_*$.
Let $Y_t$ be guess for $Y$ and have orthonormal columns, let us perform one iteration of ADM. For the ease of illustration, also assume that $r_{t+1}=r_t$.
Then the iteration reads:
{\bf (S1)} $\wtd{X}_{t+1}=MY_t$;
{\bf (S2)} $\wht{X}_{t+1}=MY_t G_x$, where $G_x$ is such that $\wht{X}_{t+1}$ is orthonormal;
{\bf (S3)} $\wtd{Y}_{t+1}=M^{\T} \wht{X}^{t+1}$;
{\bf (S4)} $\wht{Y}_{t+1}=M^{\T} \wht{X}^{t+1} G_y$, where $G_y$ is such that $\wht{Y}_{t+1}$ is orthonormal.
So, we get
\begin{align}
\wht{Y}_{t+1}=(M^{\T}M)Y_tG_xG_y,
\end{align}
which is just one iteration of subspace iteration (a generation of power iteration)
for computing the dominant eigenspace of $M^{\T}M$ (e.g., \citep{demmel1997applied,stewart2001matrix,van2012matrix}).
In fact, {\bf (S2)} and {\bf (S4)} are iterations for subspaces spanned by the dominant left and right singular vectors of $M$, respectively.
Classic results tell that the subspaces $\mathcal{R}(X_t)$ and $\mathcal{R}(Y_t)$ converge to the subspaces spanned by the left and right  singular vectors of $M$ corresponding with the dominant singular values. When the perturbation is small,
$\mathcal{R}(X_t)$ and $\mathcal{R}(Y_t)$ are good approximations for
$\mathcal{R}(L_*)$ and $\mathcal{R}(L_*^{\T})$, respectively.
In particular, when $S_*=0$ and $V_*^{\T}Y_t$ is nonsingular, we have $\|\sin\Theta(X_{t+1},U_*)\|=0$, $\|\sin\Theta(Y_{t+1},V_*)\|=0$, i.e., one iteration of ADM gives the true solution.

\vspace{-0.05in}
\subsection{Partial observation case}
For the partial observation case, we study the convergence of Algorithm~\ref{alg:rmc}
under the following assumptions:
\begin{enumerate}\vspace{-0.1in}
\item[{\bf (A1)}]
For $L_*$, the column  and row incoherence conditions with parameter $\mu$ hold, i.e.,
\begin{align*}
\max_{1\le i\le m}\|U_*^{\T}e_i\|^2\le \frac{\mu r}{m},\qquad \max_{1\le j\le n}\|V_*^{\T}e_j\|^2 \le \frac{\mu r}{n}.
\end{align*}
\item [{\bf (A2)}] $S_*$ has at most $\varrho$-fraction nonzero entries per row and column, i.e.,
\begin{align*}
&\|S_{*(i,:)}\|_0\le \varrho n,\quad  \|S_{*(:,j)}\|_0\le \varrho m, \ \mbox{ for all } i,j.
\end{align*}
\item[{\bf (A3)}] Each entry of $M$ is observed independently with probability $p$.
\end{enumerate}
Besides the notations in Algorithm~\ref{alg:rmc},
we also adopt the following notations:
\begin{align*}
\theta_{x,t}=\|\sin\Theta(X_t,U_*)\|,\quad \theta_{y,t}=\|\sin\Theta(Y_t,V_*)\|.
\end{align*}

The following theorem tells that the SVD of the partial observed matrix (after the removal of outlier) indeed gives good approximation for $U_*$, $V_*$.
Furthermore, $X_1$, $Y_1$ satisfy a incoherence condition with parameter $\mu_1$,
$\|L_*-X_1\Sigma_1Y_1^{\T}\|_{\max}$ is bounded.

\vspace{0.1in}

\begin{theorem}\label{thm2}
Assume {\bf (A1)}, {\bf (A2)}, {\bf (A3)} and $m\ge n$.
Let $M=L_*+S_*\in\R^{m\times n}$ with $\rank(L_*)=r$,
$S_0$ be obtained as in Algorithm~\ref{alg:rmc}. Denote $r_{s}' = \frac{\|S_0-S_*\|_F^2}{\|S_0-S_*\|^2}$, $\gamma=\frac{2}{1-\varrho} \sqrt{\frac{2\varrho }{r_s' p}}$.
If
\begin{align}\label{cck}
(\xi+ \gamma )  \mu r \kappa<\frac16,
\end{align}
then with probability  $\ge 1-1/m^{10+\log \alpha}$, it holds that
\begin{align}\label{sinb}
\max\{\theta_{x,1},\theta_{y,1}\}\le 3 (\xi + \gamma)\mu r \kappa,
\end{align}
where $\xi=6\sqrt{\frac{\alpha }{p' n}}$,
$\kappa=\frac{{\sigma}_{1*}}{{\sigma}_{r*}}$. 
Further assume $\mu\ll n$ and that there exists a positive constant $\mu_1'\ll n$ such that
\begin{align}\label{cck2}
(\xi+\gamma)\mu r\kappa\le \frac{1}{3}\sqrt{\frac{\mu_1' r}{m}},
\end{align}
then
\begin{align*}
& \|X_1\|_{2,\infty}\le \sqrt{\frac{\mu_1 r}{m}}, \qquad \|Y_1\|_{2,\infty}\le \sqrt{\frac{\mu_1 r}{n}},\\
& \|L_*-X_1\Sigma_1Y_1^{\T}\|_{\max}\le \|L_*\| \Big( \sqrt{\frac{\mu r}{m}} \theta_{y,1} + \sqrt{\frac{\mu r}{n}} \theta_{x,1} \\
&\mbox{}\hskip1.65in+ \theta_{x,1}\theta_{y,1}\Big) +\mathcal{O}(n^{-3/2}),
\end{align*}
where $\mu_1=2(\mu +\mu_1')$.
\end{theorem}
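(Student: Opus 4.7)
The core observation is that $X_1\Sigma_1Y_1^{\T}$ is, by Line~2 of Algorithm~\ref{alg:rmc}, the rank-$r_0$ truncated SVD of
$N := (M-S_0)/p' = \PT{0}(L_*)/p' + \PT{0}(S_*)/p' =: L_* + E,$
where $\Omega_0 = \Omega\setminus\supp(S_0)$ and $E = E_1 + E_2$ splits into the random-sampling residual $E_1 = \PT{0}(L_*)/p' - L_*$ and the uncaptured-outlier term $E_2 = \PT{0}(S_*)/p'$. All three conclusions reduce to careful control of $E$, so the plan has four steps: concentration bounds on $E$, a sine-theta deduction, an elementary bookkeeping step for the incoherence of $X_1,Y_1$, and an entrywise decomposition for the max-norm bound.

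To prove~\eqref{sinb}, I would apply a matrix-Bernstein-type inequality to $E_1$, exploiting~(A1) through $\|L_*\|_{\max}\le \mu r\|L_*\|/\sqrt{mn}$ together with the sampling assumption~(A3), to obtain $\|E_1\|\lesssim \xi\mu r\|L_*\|$ with probability at least $1-m^{-10-\log\alpha}$; a separate deterministic argument using~(A2) and the ``effective rank'' $r_s' = \|S_0-S_*\|_F^2/\|S_0-S_*\|^2$ of $S_0-S_*$ gives $\|E_2\|\lesssim \gamma\mu r\|L_*\|$. A standard Wedin $\sin\Theta$ inequality applied to the SVDs of $N$ and $L_*$, together with~\eqref{cck} to discharge the denominator through $\sigma_{r*}-\|E\|\ge \tfrac{2}{3}\sigma_{r*}$, then yields~\eqref{sinb}. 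Propagating the precise $\mu r$ factor through the concentration step is the main analytic obstacle.

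For the incoherence of $X_1$, writing $X_1=P_{U_*}X_1+(I-P_{U_*})X_1$ and using $(a+b)^2\le 2a^2+2b^2$ gives
$\|e_i^{\T}X_1\|^2 \le 2\|e_i^{\T}P_{U_*}X_1\|^2+2\|e_i^{\T}(I-P_{U_*})X_1\|^2\le 2\mu r/m+2\theta_{x,1}^2,$
using~(A1) and $\|U_*^{\T}X_1\|\le 1$ on the first term and $\|e_i^{\T}(I-P_{U_*})X_1\|\le \|(I-P_{U_*})X_1\|=\theta_{x,1}$ on the second. Assumption~\eqref{cck2} together with~\eqref{sinb} forces $\theta_{x,1}^2\le \mu_1' r/m$, so $\|X_1\|_{2,\infty}\le\sqrt{\mu_1 r/m}$ with $\mu_1=2(\mu+\mu_1')$; the bound on $\|Y_1\|_{2,\infty}$ is symmetric.

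For the max-norm bound, $X_1\Sigma_1Y_1^{\T}=P_XNP_Y$ with $P_X=X_1X_1^{\T}$ and $P_Y=Y_1Y_1^{\T}$ yields the identity
$L_*-X_1\Sigma_1Y_1^{\T}=(I-P_X)L_*+L_*(I-P_Y)-(I-P_X)L_*(I-P_Y)-P_XEP_Y.$
Substituting $L_*=U_*\Sigma_*V_*^{\T}$ and applying $|e_i^{\T}Ae_j|\le \|Ae_j\|$ together with $\|(I-P_X)U_*\|\le \theta_{x,1}$, $\|(I-P_Y)V_*\|\le \theta_{y,1}$, and the incoherence~(A1), the first three terms contribute entrywise bounds of $\sqrt{\mu r/n}\,\theta_{x,1}\|L_*\|$, $\sqrt{\mu r/m}\,\theta_{y,1}\|L_*\|$, and $\theta_{x,1}\theta_{y,1}\|L_*\|$ respectively. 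The residual $|e_i^{\T}P_XEP_Ye_j|\le \|X_1\|_{2,\infty}\|E\|\|Y_1\|_{2,\infty}$ is bounded by a constant times $\mu_1\mu r^2 (\xi+\gamma)\|L_*\|/\sqrt{mn}$, and the orders $\xi,\gamma\lesssim 1/\sqrt{n}$ combined with $\mu,\mu_1'\ll n$ produce the $\mathcal{O}(n^{-3/2})$ slack.
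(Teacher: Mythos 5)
Your proposal is correct and follows essentially the same route as the paper's proof: a concentration bound on $\frac{1}{p'}\PT{0}(L_*)-L_*$ (the paper's Lemma~\ref{lem:prob}), a deterministic stable-rank bound on the outlier residual $S_0-S_*$ (Lemmas~\ref{lem:ss} and \ref{lem:s0}), a Wedin/Davis--Kahan $\sin\Theta$ step (packaged in the paper as Lemma~\ref{lem:errbound}), the incoherence transfer $\|X_1\|_{2,\infty}\le\|U_*\|_{2,\infty}+\theta_{x,1}$ (Lemma~\ref{lem:2inf}), and the projector decomposition $L_*-P_XNP_Y=(L_*-P_XL_*P_Y)-P_XEP_Y$ for the max-norm bound. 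The only cosmetic difference is that you perform the entrywise decomposition of $L_*-P_XL_*P_Y$ directly rather than routing it through the max-norm conclusion of Lemma~\ref{lem:errbound}, which even spares you the extra $3\|U_*\|_{2,\infty}\|V_*\|_{2,\infty}\theta_{x,1}\theta_{y,1}$ term the paper absorbs into $\mathcal{O}(n^{-3/2})$.
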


\vspace{0.08in}

\begin{remark}
When there is no corruption, i.e., $\varrho=0$, then $\gamma=0$.
Furthermore, when $m=\mathcal{O}(n)\gg 1$, since $p'n=\mathcal{O}(1)r$, we know that $\xi=\sqrt{\frac{\alpha}{p'n}}$ is small, the larger $p'n$ is, the smaller $\xi$ is.
By Theorem~\ref{thm1}, $\theta_{x,1}$ and $\theta_{y,1}$ will be small.
In other words, the SVD $\frac{1}{p'}\PT{0}(M-S_0)=X_1\Sigma_1Y_1^{\T}$,
gives good approximation for both $U_*$ and $V_*$, by $X_1$ and $Y_1$, respectively.
\end{remark}

The following theorem, which is motivated by \cite[Lemma 55]{drineas2018lectures}, establish the bridge between the full observation case and the partial observation case. This gives an upper bound for the distance between the least square solutions between the full observation case and the partial observation~case.

\vspace{0.08in}

\begin{theorem}\label{lem:xx}
Let $m\ge n$, and denote
\begin{align*}
X_{\opt}&=\argmin_X\|X{Y}_t^{\T}- (M-S_t)\|, \\
\wtd{X}_{\opt}&=\argmin_X\|\PT{t}(X{Y}_t^{\T}-(M-S_t))\|.
\end{align*}
Assume that
$\Omega_t$ can be obtained by sampling each entry of $M$ with probability  $p'=p(1-\varrho)$,
$\|{Y}_t\|_{2,\infty}\le \sqrt{\frac{\mu' r}{n}}$ for some $\mu'>0$,
and
\begin{align}\label{infx}
\inf_{X\in\R^{m\times r}}\frac{\|\PT{t}(XY_t^{\T})\|}{\|X\|}\ge \sigma
\end{align}
 for some constant $\sigma>0$.
Then w.p. $\ge 0.99$, it holds
\begin{align*}
\|\wtd{X}_{\opt}-X_{\opt}\| &\le \Big(\frac{2}{3}\log(m+n)+5\Big) \frac{\sqrt{\mu' rp'}}{\sigma^2}\|R\|_{\max},
\end{align*}
where $R=(M-S_t)(I-Y_tY_t^{\T})$.
\end{theorem}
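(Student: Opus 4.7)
The plan is to reduce the error $\wtd{X}_{\opt} - X_{\opt}$ to a closed form and then control it by a matrix Bernstein bound in the spirit of Drineas's Lemma~55. Since $Y_t$ has orthonormal columns, the full-observation least-squares problem has the explicit solution $X_{\opt} = (M-S_t)Y_t$, so the residual $R=(M-S_t)(I-Y_tY_t^{\T})$ satisfies $RY_t = 0$. Introducing the linear operator $\mathcal{A}:X\mapsto\PT{t}(XY_t^{\T})$ with adjoint $\mathcal{A}^*(Z)=\PT{t}(Z)Y_t$, the normal equations for the partial-observation problem combined with the decomposition $M-S_t = X_{\opt}Y_t^{\T}+R$ yield the identity
\[
\wtd{X}_{\opt}-X_{\opt}=(\mathcal{A}^*\mathcal{A})^{-1}\,\PT{t}(R)\,Y_t.
\]

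The operator $\mathcal{A}^*\mathcal{A}$ decouples row-by-row, with $i$-th block the $r\times r$ Gram matrix $Y_t^{\T}D_iY_t$, where $D_i=\diag(\delta_{i,1},\dots,\delta_{i,n})$ is the sampling mask in row $i$. Hypothesis~\eqref{infx} propagates through this block structure to give $\|(\mathcal{A}^*\mathcal{A})^{-1}\|\le 1/\sigma^2$, hence $\|\wtd{X}_{\opt}-X_{\opt}\|\le \tfrac{1}{\sigma^2}\,\|\PT{t}(R)\,Y_t\|$. The remaining task is to bound $\|\PT{t}(R)\,Y_t\|$. Writing $y_j = Y_t^{\T}e_j$ and using the cancellation $RY_t=0$ to center for free,
\[
\PT{t}(R)\,Y_t\;=\;\sum_{i,j}(\delta_{ij}-p')\,R_{ij}\,e_i\,y_j^{\T},
\]
is a mean-zero sum of independent matrices. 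Each summand has spectral norm at most $\|R\|_{\max}\sqrt{\mu' r/n}$ by the incoherence hypothesis, and the two matrix variances are bounded using $\sum_j y_jy_j^{\T}=I_r$ together with $|R_{ij}|\le\|R\|_{\max}$. Matrix Bernstein then delivers, with probability at least $0.99$, $\|\PT{t}(R)\,Y_t\|\le(\tfrac{2}{3}\log(m+n)+5)\sqrt{\mu' rp'}\,\|R\|_{\max}$, from which the theorem follows.

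The main technical obstacle is the sharp Bernstein variance estimate: one must combine the orthonormality of $Y_t$, the incoherence $\|Y_t\|_{2,\infty}\le\sqrt{\mu' r/n}$, and the cancellation $RY_t=0$ simultaneously so that the variance depends on $\mu' r$ rather than on $m$ or $n$. Once that concentration inequality is in place, the rest of the argument is a short algebraic assembly of the closed-form error identity and the $1/\sigma^2$ operator-norm bound.
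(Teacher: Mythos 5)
Your architecture is the same as the paper's: write $X_{\opt}=(M-S_t)Y_t$, shift by $X_{\opt}$ so that the partial least-squares problem becomes one in $\Delta X$ with right-hand side $\PT{t}(R)$, invoke \eqref{infx} to pay a factor $1/\sigma^2$, and finish with matrix Bernstein on a centered sum whose mean vanishes because $RY_t=0$. The paper concentrates the quantity $\sup_X\|\PT{t}(R)\,\PT{t}(XY_t^{\T})^{\T}\|/\|X\|$ rather than $\|\PT{t}(R)Y_t\|$, but that is a cosmetic difference; your rank-one decomposition $\sum_{i,j}(\delta_{ij}-p')R_{ij}e_iy_j^{\T}$ is in fact the cleaner way to set up the concentration, since those summands really are independent.

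The genuine gap is in your variance claim. For $S_{ij}=(\delta_{ij}-p')R_{ij}e_iy_j^{\T}$ the two covariance statistics are
\[
\Big\|\sum_{i,j}\mathbb{E}[S_{ij}S_{ij}^{\T}]\Big\| \le p'\max_i\sum_j R_{ij}^2\|y_j\|^2 \le p'\mu' r\|R\|_{\max}^2,
\qquad
\Big\|\sum_{i,j}\mathbb{E}[S_{ij}^{\T}S_{ij}]\Big\| = p'(1-p')\Big\|\sum_j\|R_{(:,j)}\|^2\,y_jy_j^{\T}\Big\|,
\]
and the second one is controlled only by $p'\max_j\|R_{(:,j)}\|^2\le p'm\|R\|_{\max}^2$; neither $\sum_j y_jy_j^{\T}=I_r$ nor $\|y_j\|^2\le\mu'r/n$ removes the column-norm factor, and the theorem gives you no handle on $\max_j\|R_{(:,j)}\|$ beyond $\sqrt{m}\,\|R\|_{\max}$. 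Bernstein then produces a deviation term of order $\sqrt{p'm\log(m+n)}\,\|R\|_{\max}$, which dominates the claimed $\bigl(\tfrac23\log(m+n)+5\bigr)\sqrt{\mu'rp'}\,\|R\|_{\max}$ whenever $m\gg \mu'r\log(m+n)$. So the step you flag as "the main technical obstacle" is exactly where the argument as written fails: incoherence of $Y_t$, orthonormality, and $RY_t=0$ together do not make the variance depend only on $\mu'r$. (For what it is worth, the paper reaches the stated constant by grouping the summands differently and asserting that the covariance statistic of its sum is exactly zero, so that only the uniform bound $L=\sqrt{\mu'rp'}\|R\|_{\max}$ enters the exponent; you would need either that device or an additional assumption bounding the column norms of $R$ to close your version.) A secondary, smaller issue: \eqref{infx} and the two $\argmin$'s are stated in spectral norm, so the normal-equation identity $\wtd{X}_{\opt}-X_{\opt}=(\mathcal{A}^*\mathcal{A})^{-1}\PT{t}(R)Y_t$ and the bound $\|(\mathcal{A}^*\mathcal{A})^{-1}\|\le 1/\sigma^2$ require the row-wise Frobenius reduction you allude to; spell that out, since spectral-norm least squares has no normal equations per se.
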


\vspace{0.1in}

\begin{remark}
The requirement \eqref{infx} is critical.
The parameter $\sigma$ reflects the condition number of the least square problem on line 8 of Algorithm~\ref{alg:rmc}.
What's more, the larger $p'$ is, the larger $\sigma$ is (in particular, if $p'=1$, $\sigma=1$), the smaller the distance between $\wtd{X}_{\opt}$ and $X_{\opt}$ is,
which agrees with our intuition.
$R$ is the residual for the full observation case, i.e., $R=X_{\opt}Y_t^{\T}-(M-S_t)$.
If the residual is small, the distance between $\wtd{X}_{\opt}$ and $X_{\opt}$ will be small, too.\\
\end{remark}

\begin{definition}
Define $\mu'\triangleq\max\{\mu_u,\mu_v\}$, where
\begin{align*}
\mu_u&\triangleq\sup_{U\in\R^{m\times r}}\{\frac{m}{r} \|U\|_{2,\infty}^2 \; | \; \|\sin\Theta(U_*,U)\|\le \theta_{x,1}\},\\
\mu_v&\triangleq\sup_{V\in\R^{n\times r}}\{\frac{n}{r} \|V\|_{2,\infty}^2 \;| \; \|\sin\Theta(U_*,V)\|\le \theta_{y,1}\}.
\end{align*}
\end{definition}
By definition of $\mu'$, we know that if $\theta_{x,t}\le \theta_{x,1}$,
$\theta_{y,t}\le \theta_{y,1}$ for all $t$, then $X_t$, $Y_t$ satisfy the incoherence condition with parameter $\mu'$.
Recall Theorem~\ref{thm1},  under the assumption of \eqref{cck2}, $\theta_{x,1}$ and $\theta_{y,1}$ are quite small (at the order of $\frac{1}{\sqrt{m}}$), then we can show that $\mu'\le \mu_1$,
which implies that $\mu'$ is not large.

The next theorem establishes the convergence rate for the ADM,
which is the key  in our proof of Theorem~\ref{thm:main}.

\begin{theorem}\label{lem:theta}
Assume that
$\Omega_t$ can be obtained by sampling each entry of $M$ with probability  $p'$,
$\|{Y}_t\|_{2,\infty}\le \sqrt{\frac{\mu' r}{n}}$ for some $\mu'>0$,
\eqref{infx} and
\begin{align}
\|L_*-X_t\Sigma_tY_t^{\T}\|_{\max}\le c \|L_*\|\; \theta_{y,t} \sqrt{\frac{\mu r}{m}},\label{lxyt}
\end{align}
for some constants $c>0$.
Denote
\begin{align*}
r_s &= \inf_t\frac{\|S_t-S_*\|_F^2}{\|S_t-S_*\|^2}, \qquad \zeta=\sqrt{\frac{2s\mu r}{m r_s}},\\
C_{\rm LS}&=\Big(\frac{2}{3}\log(m+n)+5\Big) \frac{\sqrt{\mu' rp'}}{\sigma^2},\\
C&=C_{\rm LS}(1+ 2c\sqrt{2p\varrho n} ) \sqrt{\mu r},\\
\epsilon&=c\kappa\zeta, \qquad \phi=\frac{8\epsilon(\kappa+\sqrt{2}\epsilon)+\sqrt{2}C\kappa/\sqrt{m}}{1-2\epsilon-C\kappa/\sqrt{m}}.
\end{align*}
Further assume $\theta_{y,t}\le\frac{1}{\sqrt{2}}$,
then w.p. $\ge 0.99$,
\[
\theta_{x,t+1}\le \phi \; \theta_{y,t}.
\]
\end{theorem}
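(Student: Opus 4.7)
My plan is to compare the partial-observation update $\wtd{X}_{t+1}$ with the idealized full-observation least squares solution $X_{\opt} := \argmin_X \|X Y_t^{\T} - (M - S_t)\| = (M - S_t) Y_t$ (using $Y_t^{\T} Y_t = I$). Writing $\wtd{X}_{\opt}$ for the partial-observation minimizer produced on line~8, the column space of $X_{t+1}$ coincides with that of $\wtd{X}_{\opt}$ up to discarding directions of vanishing singular values; under \eqref{infx} this truncation is benign, so $\theta_{x,t+1} \le \sin\Theta(\wtd{X}_{\opt}, U_*)$, and a standard sin-theta triangle inequality gives
\[
\sin\Theta(\wtd{X}_{\opt}, U_*) \le \sin\Theta(X_{\opt}, U_*) + \frac{\|\wtd{X}_{\opt} - X_{\opt}\|}{\sigma_r(X_{\opt})}.
\]
The second term is directly controlled by Theorem~\ref{lem:xx}: $\|\wtd{X}_{\opt} - X_{\opt}\| \le C_{\rm LS} \|R\|_{\max}$ with $R = (M - S_t)(I - Y_t Y_t^{\T})$, and I will bound $\|R\|_{\max}$ using the hypothesis \eqref{lxyt} on $\|L_* - X_t\Sigma_t Y_t^{\T}\|_{\max}$ together with a max-norm contribution from $E := S_* - S_t$, which ultimately produces the $C\kappa/\sqrt{m}$ contribution to $\phi$.

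For the full-observation angle, write $X_{\opt} = U_*\Sigma_* V_*^{\T} Y_t + E Y_t$ and project by $I - U_*U_*^{\T}$ to kill the low-rank piece, so $(I - U_*U_*^{\T})X_{\opt} = (I - U_*U_*^{\T}) E Y_t$. Decomposing $Y_t = V_* V_*^{\T} Y_t + (I - V_* V_*^{\T}) Y_t$ and noting $\|(I - V_*V_*^{\T}) Y_t\| = \theta_{y,t}$ gives
\[
\|(I - U_*U_*^{\T})E Y_t\| \le \|E V_*\| + \|E\|\,\theta_{y,t},
\]
where the second term already carries the required $\theta_{y,t}$ factor. Since $\theta_{y,t} \le 1/\sqrt{2}$ implies $\sigma_{\min}(V_*^{\T} Y_t) \ge 1/\sqrt{2}$ and hence $\sigma_r(X_{\opt}) \ge \sigma_{r*}/\sqrt{2} - \|E Y_t\|$, a Davis--Kahan/Wedin-type inequality closes this step, yielding $\sin\Theta(X_{\opt}, U_*) \lesssim (\|EV_*\| + \|E\|\,\theta_{y,t})/\sigma_{r*}$.

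The main obstacle is converting the combinatorial outlier detection into quantitative spectral control of $\|E\|$ and $\|EV_*\|$. The optimality of the top-$s$ truncation $S_t = \mathcal{T}_s(R_t)$ applied to $R_t = \PO(S_*) + \PO(L_* - X_t\Sigma_t Y_t^{\T})$ yields the Frobenius bound $\|\PO(S_*) - S_t\|_F \le 2\|\PO(L_* - X_t\Sigma_t Y_t^{\T})\|_F$, and \eqref{lxyt} controls the right-hand side by $\mathcal{O}(\|L_*\|\,\theta_{y,t} \sqrt{pn\mu r})$. Converting to spectral norm through the stable-rank parameter $r_s$ produces $\|E\| \lesssim \|L_*\|\,\theta_{y,t}\zeta$, which is precisely the $\epsilon = c\kappa\zeta$ factor in $\phi$ after normalization by $\sigma_{r*}$. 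For $\|EV_*\|$, the row-sparsity of $E$ combined with $\|V_*\|_{2,\infty}^2 \le \mu r/n$ gives (by a Cauchy--Schwarz-on-the-support argument) $\|E V_*\|_F^2 \le (\text{row-sparsity})\cdot \|V_*\|_{2,\infty}^2\cdot \|E\|_F^2$, explaining the extra $\sqrt{\mu r}$ factor in $C$. A delicate point is that the top-$s$ optimality only controls $\PO(E)$, so the contribution of $S_*$ on $\Omega^c$ must be handled separately using \textbf{(A2)} and the incoherence of $V_*$. Feeding these bounds into the two estimates above and isolating $\theta_{y,t}$ produces $\theta_{x,t+1} \le \phi\,\theta_{y,t}$ once the small cross terms are absorbed into the denominator $1 - 2\epsilon - C\kappa/\sqrt m$.
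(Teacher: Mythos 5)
Your proposal follows essentially the same route as the paper's proof: compare the partial-observation least-squares solution with the full-observation one $\bar X_{t+1}=(M-S_t)Y_t$ via Theorem~\ref{lem:xx}, bound the full-observation angle by a perturbation argument with $G=S_t-S_*$, control $\|S_t-S_*\|$ through the top-$s$ truncation optimality and the stable rank $r_s$, and combine with a triangle inequality normalized by $\sigma_r(\wtd X_{t+1})$. One minor bookkeeping note: in the paper the $\sqrt{\mu r}$ in $C$ arises from the max-norm bound on $R=(M-S_t)(I-Y_tY_t^{\T})$ via incoherence of $U_*$ and $Y_t$ (not from a separate $\|EV_*\|$ estimate), and the split $\|EV_*\|+\|E\|\theta_{y,t}$ is unnecessary because $\|E\|=\|S_t-S_*\|$ already carries the $\theta_{y,t}$ factor through \eqref{lxyt} and the truncation lemma.
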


\begin{remark}
The assumption \eqref{lxyt} is not a strong requirement as it looks.
By Theorem~\ref{thm1}, \eqref{lxyt} is natural for $t=1$.
For general $t>1$, it can be shown that there exists a constant $c>0$ such that \eqref{lxyt} holds (see supplementary for details), as long as $X_t$, $Y_t$ satisfy the incoherence condition.
In general, the constant $c$ is at the order of $\mathcal{O}(1)$.
\end{remark}

\vspace{0.05in}

\begin{remark}
The constant $r_s$ is the infimum of the stable rank of $S_t-S_*$.
If we take a random matrix, whose entries are i.i.d. drawn from a normal distribution
$\mathcal{N}(0,\sigma^2)$, to approximate $S_t-S_*$.
Numerically, we found that $r_s=\mathcal{O}(n)$.
Therefore, $\epsilon \approx c \kappa \sqrt{2\varrho p \mu r}$ is a small number as long as $\varrho$ is small, $\mu$ and $\kappa$ are not large.
When $p'$ is sufficiently large, $C=\mathcal{O}(1)$.
Consequently, when $m=\mathcal{O}(n)$ is large, $\phi\ll 1$,
which agrees with our conclusion in the full observation case.
\end{remark}

\vspace{0.05in}

\begin{remark}\label{rem:thetay}
Under similar assumptions as in Theorem~\ref{lem:theta}, it can also be shown that
$\theta_{y,t+1}\le \phi\, \theta_{x,t+1}$.
Then $\theta_{y,t+1}\le \phi^2 \theta_{y,t}$.
When $\phi<1$, $\{\theta_{y,t}\}_t$ is a monotonically decreasing sequence.
Combining it with the definition of $\mu'$, we know that $Y_t$ satisfies the incoherence condition with parameter $\mu'$.
Similarly, $X_t$ also satisfies the incoherence condition.
\end{remark}

\begin{theorem}\label{thm:main}
Assume {\bf (A1)}, {\bf (A2)}, {\bf (A3)} and $m\ge n$.
Assume that
$\Omega_t$ can be obtained by sampling each entry of $M$ with probability  $p'=p(1-\varrho)$, $r_t\equiv r$ and
\begin{align*}
\inf_{X\in\R^{m\times r}}\frac{\|\PT{t}(XY_t^{\T})\|}{\|X\|}\ge \sigma,\
\inf_{Y\in\R^{n\times r}}\frac{\|\PT{t}(\wht{X}_tY^{\T})\|}{\|Y\|}\ge \sigma
\end{align*}
for some $\sigma>0$.
Let $r_s$, $\zeta$, $C$, $\epsilon$ be the same as in Theorem~\ref{lem:theta}.
Then with high probability, it holds that
\begin{align*}
\|M-S_{t+1}-X_{t+1}\Sigma_{t+1}Y_{t+1}^{\T}\|\le \psi \|M-S_t-X_t\Sigma_tY_t^{\T}\|,
\end{align*}
where
\[
\psi= \frac{2\sqrt{2}(\kappa +  2 \epsilon \sqrt{\frac{\mu r}{m}}  +   \frac{C\kappa}{\sqrt{m}}) (8\epsilon(\kappa+\sqrt{2}\epsilon)+\sqrt{2}\frac{C\kappa}{\sqrt{m}}) }{(1 - 4\sqrt{2} \epsilon  \sqrt{\frac{\mu r}{m}})(1-2\epsilon-\frac{C\kappa}{\sqrt{m}})}.
\]
\end{theorem}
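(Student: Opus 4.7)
The plan is to establish a one-step contraction of the residual $E_t := M - S_t - X_t\Sigma_tY_t^{\T}$ by (a)~upper-bounding $\|E_{t+1}\|$ by a linear function of the updated canonical angles $\theta_{x,t+1}$ and $\theta_{y,t+1}$, (b)~using Theorem~\ref{lem:theta} and its symmetric counterpart in Remark~\ref{rem:thetay} to replace those angles by at most $2\phi\,\theta_{y,t}$, and (c)~lower-bounding $\|E_t\|$ by a constant multiple of $\sigma_{r*}\theta_{y,t}$. The factor $\phi$ supplied by (b) exactly matches the second fraction in the numerator of $\psi$, while the remaining $2\sqrt{2}(\kappa+2\epsilon\sqrt{\mu r/m}+C\kappa/\sqrt{m})/(1-4\sqrt{2}\,\epsilon\sqrt{\mu r/m})$ arises as the ratio of the bounds in (a) and (c). Throughout I work with the split
\[
E_t = (L_*-X_t\Sigma_tY_t^{\T}) + (S_*-S_t)
\]
and treat the low-rank and sparse summands separately.

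For step (a), the low-rank piece $L_*-X_{t+1}\Sigma_{t+1}Y_{t+1}^{\T}$ decomposes into $(I-X_{t+1}X_{t+1}^{\T})L_*$ plus a partial-observation least-squares residual; the first term has norm at most $\sigma_{1*}\theta_{x,t+1}$ (analogously to the identities behind Theorem~\ref{thm1}), while the second term, by Theorem~\ref{lem:xx} together with $\|Y_t\|_{2,\infty}\le\sqrt{\mu' r/n}$ (guaranteed by Remark~\ref{rem:thetay}) and the non-degeneracy~\eqref{infx}, contributes only a $\sigma_{1*}\theta_{y,t+1}$ piece plus an additive $O(C\sigma_{1*}/\sqrt{m})$. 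For the sparse piece, since $S_{t+1}=\mathcal{T}_s(\PO(L_*-X_{t+1}\Sigma_{t+1}Y_{t+1}^{\T}+S_*))$ and $|\supp(S_*)|\le s$, one has $|\supp(S_{t+1}-S_*)|\le 2s$, and on each such entry the magnitude is controlled by $\|L_*-X_{t+1}\Sigma_{t+1}Y_{t+1}^{\T}\|_{\max}\le c\|L_*\|\theta_{y,t+1}\sqrt{\mu r/m}$ (assumption~\eqref{lxyt}). Converting the Frobenius bound into a spectral-norm bound via the stable rank $r_s$ yields $\|S_{t+1}-S_*\|\le 2\epsilon\sqrt{\mu r/m}\,\sigma_{r*}\theta_{y,t+1}$, and collecting both parts gives
\[
\|E_{t+1}\|\le\sqrt{2}\bigl(\kappa+2\epsilon\sqrt{\mu r/m}+C\kappa/\sqrt{m}\bigr)\sigma_{r*}(\theta_{x,t+1}+\theta_{y,t+1}).
\]

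Step (b) is an immediate application of Theorem~\ref{lem:theta} and Remark~\ref{rem:thetay}, which give $\theta_{x,t+1}\le\phi\,\theta_{y,t}$ and $\theta_{y,t+1}\le\phi^2\theta_{y,t}$, hence $\theta_{x,t+1}+\theta_{y,t+1}\le 2\phi\,\theta_{y,t}$. For step (c), note that $I-Y_tY_t^{\T}$ annihilates $X_t\Sigma_tY_t^{\T}$ on the right, so
\[
\|L_*-X_t\Sigma_tY_t^{\T}\|\ge\|L_*(I-Y_tY_t^{\T})\|\ge\sigma_{r*}\|V_*^{\T}(I-Y_tY_t^{\T})\|=\sigma_{r*}\theta_{y,t};
\]
the reverse triangle inequality then gives $\|E_t\|\ge\sigma_{r*}\theta_{y,t}-\|S_*-S_t\|$, and the step-$t$ analogue of the sparse-error bound from (a), refined using the row- and column-sparsity of {\bf (A2)} to avoid a $\sqrt{mn}$ blow-up, supplies $\|S_*-S_t\|\le 4\sqrt{2}\,\epsilon\sqrt{\mu r/m}\,\sigma_{r*}\theta_{y,t}$. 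Dividing the bound from (a) after (b) by the resulting $(1-4\sqrt{2}\,\epsilon\sqrt{\mu r/m})\sigma_{r*}\theta_{y,t}$ reproduces $\psi$.

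The main obstacle will be establishing the sparse-error bound $\|S_{t+1}-S_*\|$ (and its step-$t$ analogue used in (c)) with the correct $\sqrt{\mu r/m}$ scaling: the thresholding $\mathcal{T}_s$ is nonlinear, $\supp(S_{t+1})$ need not coincide with $\supp(S_*)$, and one must track entries in $\supp(S_*)\triangle\supp(S_{t+1})$ carefully, combining the row- and column-sparsity of {\bf (A2)} with the incoherence controlled by $\mu'\le\mu_1$ to convert max-norm estimates into spectral-norm estimates without losing a factor of $\sqrt{mn}$. A secondary difficulty is verifying that the incoherence hypothesis underpinning~\eqref{lxyt} propagates from step $t$ to step $t+1$, which is handled by the monotonicity of $\{\theta_{y,t}\}$ noted in Remark~\ref{rem:thetay} together with the definition of $\mu'$ from $\theta_{x,1},\theta_{y,1}$.
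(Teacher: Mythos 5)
Your overall architecture matches the paper's: sandwich the spectral residual between a lower bound proportional to $\sigma_{r*}\theta_{y,t}$ (your step (c) is essentially Lemma~\ref{lem:lb} plus the sparse correction from Lemma~\ref{lem:st}; your direct estimate $\|L_*-X_t\Sigma_tY_t^{\T}\|\ge\|L_*(I-Y_tY_t^{\T})\|\ge\sigma_{r*}\theta_{y,t}$ is in fact cleaner and avoids the $\tfrac{1}{2\sqrt2}$ loss the paper takes from Lemma~\ref{lem:lb}) and an upper bound proportional to $\phi\,\sigma_{r*}\theta_{y,t}$ coming from the contraction in Theorem~\ref{lem:theta} and the least-squares perturbation in Theorem~\ref{lem:xx}; the quotient then reproduces $\psi$. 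Where you genuinely diverge is in how the updated sparse part is handled. The paper never bounds $\|S_{t+1}-S_*\|$: it first estimates $\|M-S_t-\wht{X}_{t+1}\wtd{Y}_{t+1}^{\T}\|$ (keeping the \emph{old} $S_t$, so only $\theta_{x,t+1}\le\phi\,\theta_{y,t}$ and the $Y$-update residual via Lemma~\ref{lem:lxymax} are needed), and then passes to $S_{t+1}$ by the monotonicity step \eqref{eqm1} for the hard-thresholding. You instead decompose $E_{t+1}$ into the low-rank error plus $S_*-S_{t+1}$, which forces you to (i) invoke the max-norm bound \eqref{lxyt} at step $t+1$ rather than step $t$, and (ii) use the second contraction $\theta_{y,t+1}\le\phi^2\theta_{y,t}$ from Remark~\ref{rem:thetay}. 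This buys nothing in the final constant (the factor $2$ from $\theta_{x,t+1}+\theta_{y,t+1}\le 2\phi\,\theta_{y,t}$ exactly cancels the $2\sqrt2$ you gain in the lower bound) and costs you an extra hypothesis: Theorem~\ref{thm:main} does not grant \eqref{lxyt} at step $t+1$, and establishing it there requires propagating the incoherence of $X_{t+1},Y_{t+1}$, i.e., precisely the circularity you flag as your ``main obstacle'' without resolving it. The paper's routing through \eqref{eqm1} sidesteps this entirely, so if you keep your decomposition you must actually supply the step-$(t+1)$ max-norm bound (e.g., via an analogue of Lemma~\ref{lem:lxymax} and Lemma~\ref{lem:2inf}) rather than cite \eqref{lxyt}. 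The unexplained $\sqrt2$ prefactor in your step (a) also needs to be derived rather than asserted, though it has room to be absorbed given the slack elsewhere.
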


\begin{remark}
If $\psi<1$, then by Theorem~\ref{thm:main}, $\{\|M-S_t-X_t\Sigma_tY_t^{\T}\|\}_t$ is a monotonically deceasing sequence.
And in limit, with high probability, it holds
\[
\lim_{t\rightarrow\infty} \|M-S_t-X_t\Sigma_tY_t^{\T}\|=0.
\]
Recall the way we determine $S_t$, we get $(L_*-X_t\Sigma_tY_t^{\T})_{ij}=0$ for any $(i,j)\notin\supp(S_t)$. Then we can show that
\[
X_t\Sigma_tY_t^{\T}\rightarrow L_*, \quad S_t\rightarrow S_*,\quad \mbox{as}\quad t\rightarrow \infty,
\]
i.e., exact recovery is achieved, with high probability.
\end{remark}

\vspace{-0.07in}
\section{Experiments}\label{sec:numer}
\vspace{-0.07in}

We compare Algorithm~\ref{alg:rmc} (NLEQ) with the gradient descent (GD) method~\citep{Proc:Yi_NIPS16}
and the PG-RMC method~\citep{Proc:Cherapanamjeri_ICML17}. The codes of GD and PG-RMC are obtained from the lrslibrary~\citep{sobral2016lrslibrary} in Github.~\footnote{\url{https://github.com/andrewssobral/lrslibrary/tree/master/algorithms/mc}}

\vspace{-0.05in}
\subsection{Synthetic Data}
\vspace{-0.05in}

We generate the data matrix $M\in\R^{d\times d}$ as follows. The low rank matrix $L_*$ is given by $L_*=U_*V_*^{\T}$,
where the entries of $U_*, V_*\in\R^{d\times r}$ are drawn independently from the Gaussian distribution with mean zero, variance $1/d$.
Each entry of the sparse matrix $S_*$ are nonzero with probability $\rho$, and the nonzero entries of $S_*$ are uniformly drawn from $[-\frac{r}{2d},\frac{r}{2d}]$.
Each entry of $M=L_*+S_*$ is observed independently with probability $p$.

\begin{figure}[!h]
\vskip -0.1in
\begin{center}
\includegraphics[width=0.42\textwidth]{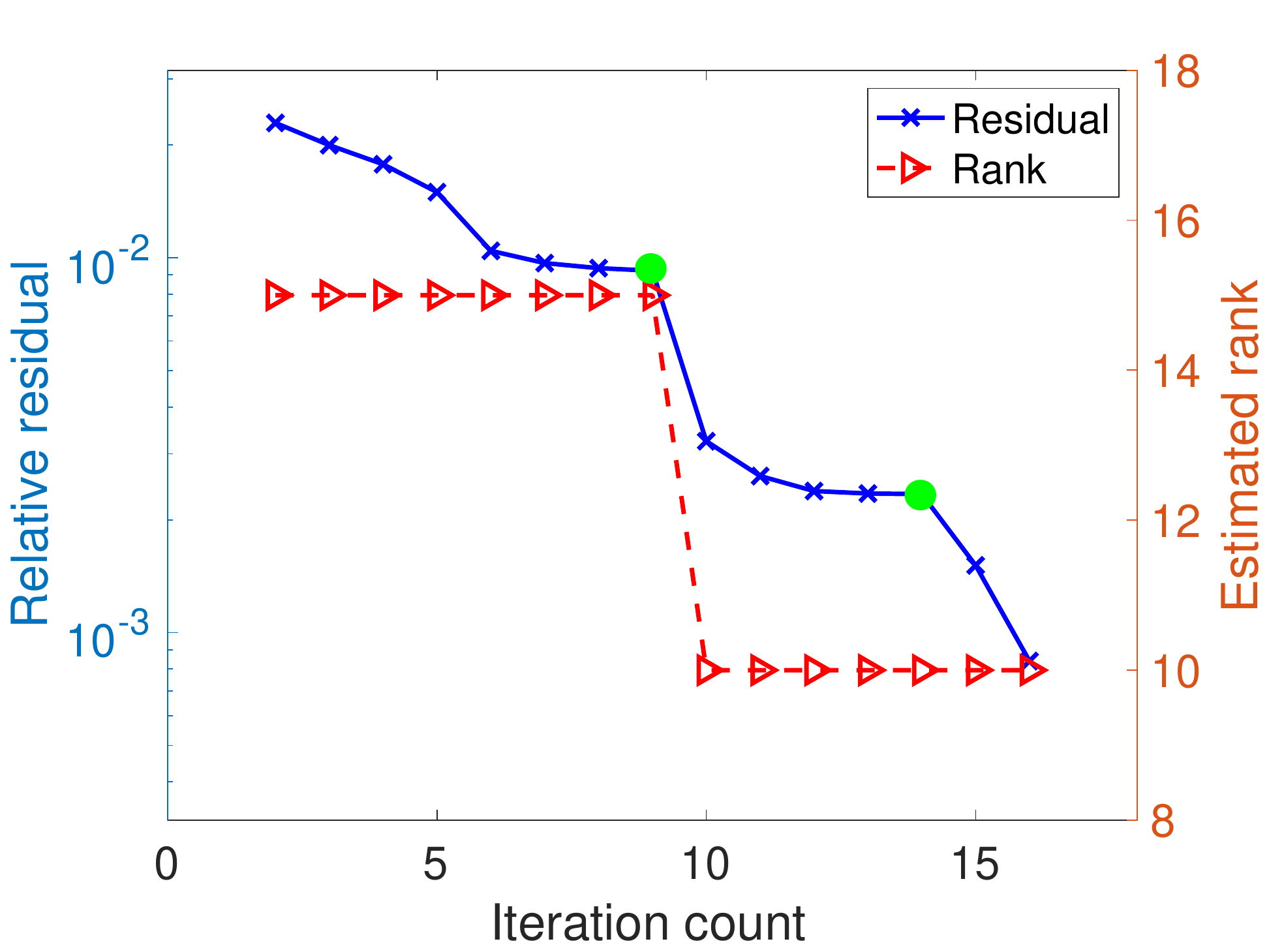}\\
\includegraphics[width=0.42\textwidth]{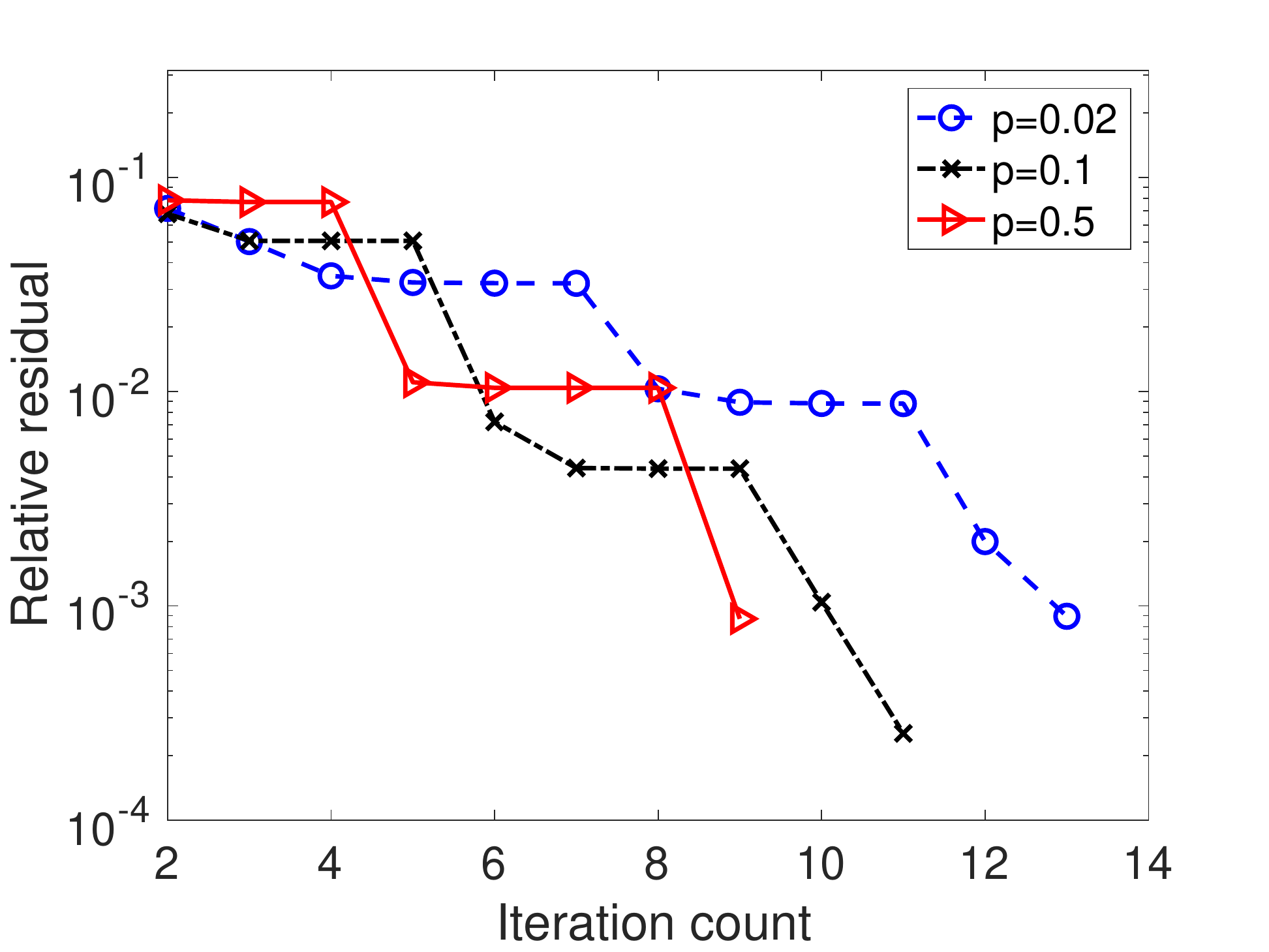}\\
\includegraphics[width=0.42\textwidth]{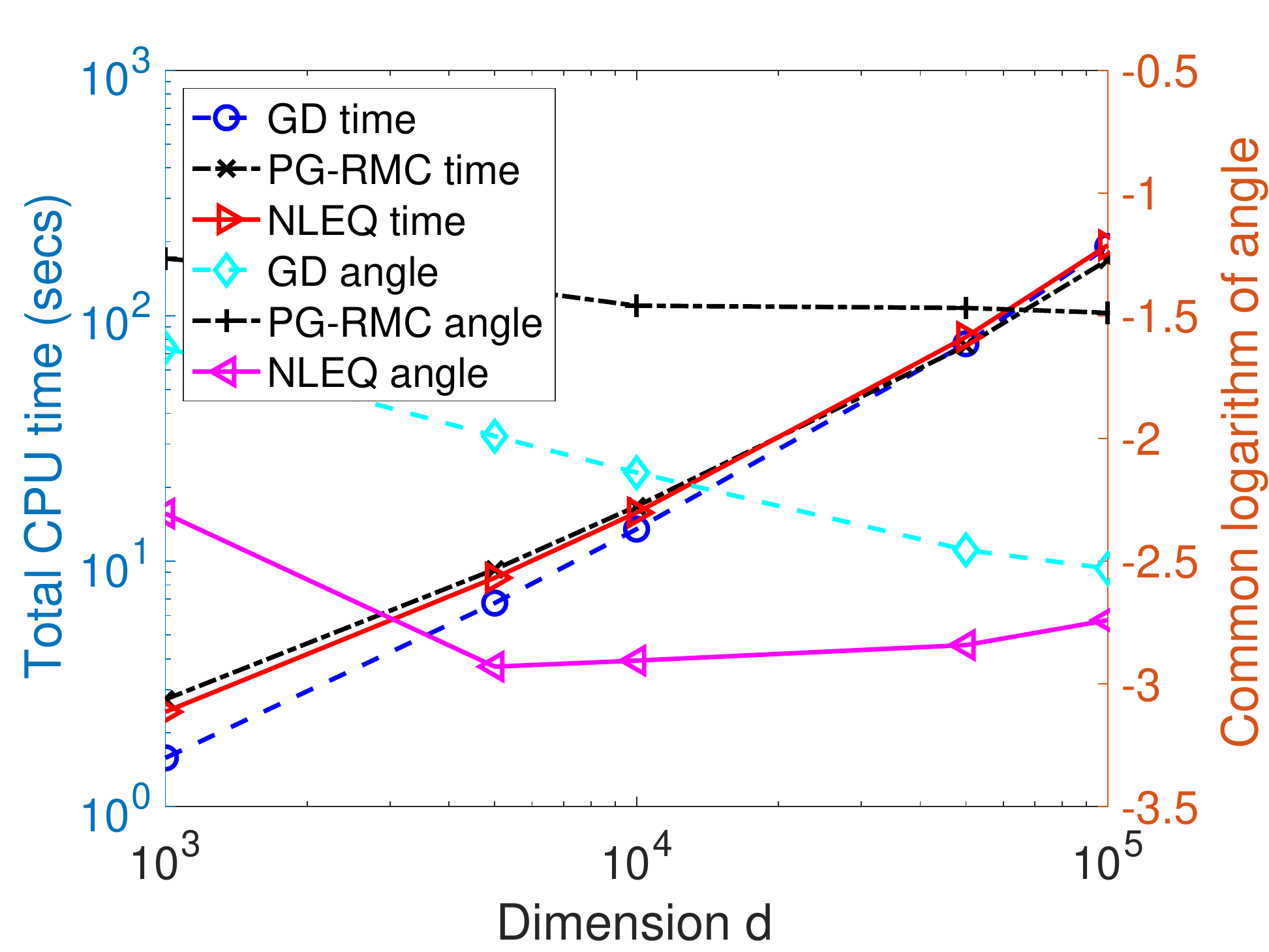}\\
\includegraphics[width=0.42\textwidth]{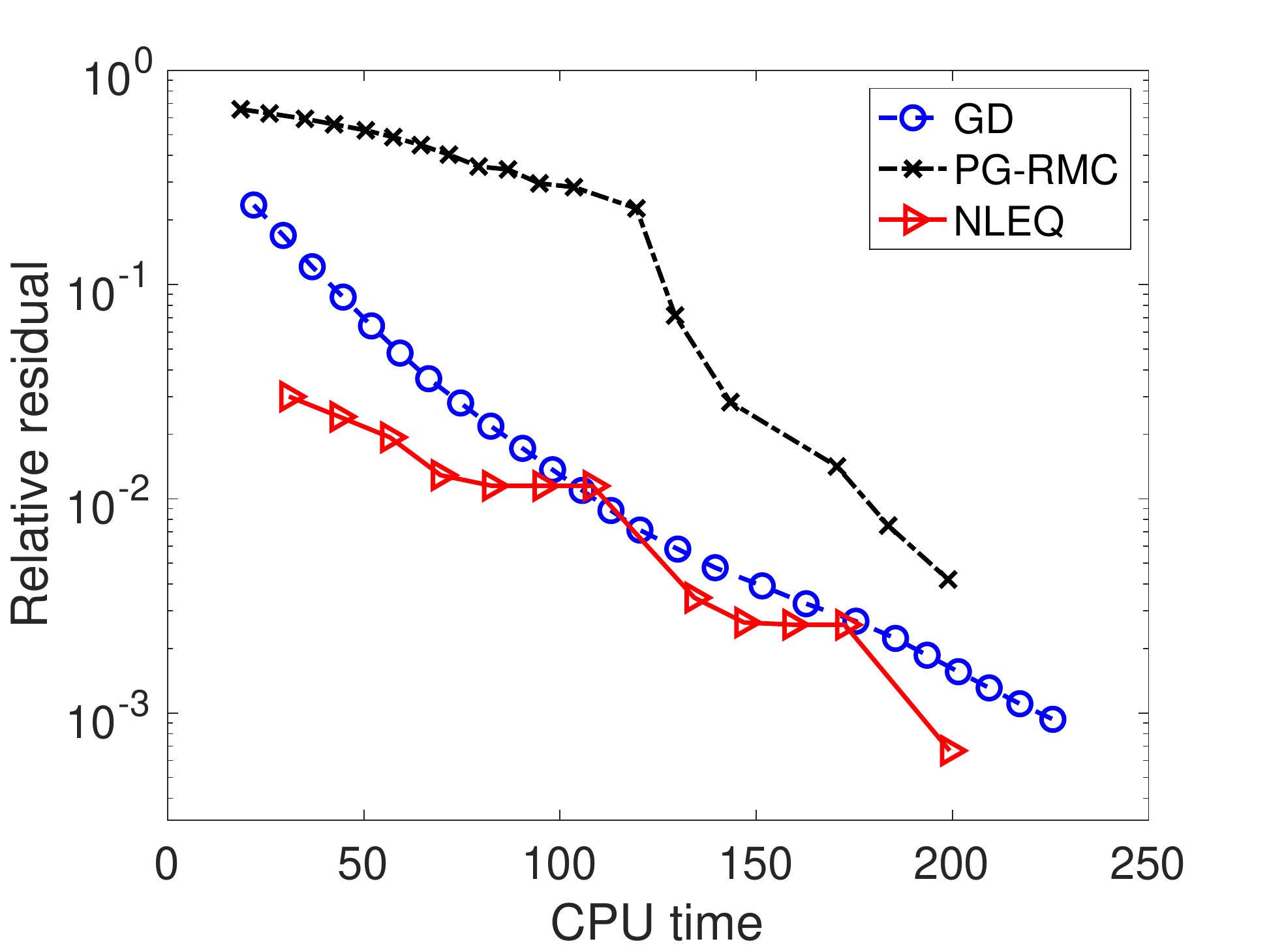}
\end{center}
\vspace{-0.2in}
\caption{Results on synthetic data. Up to down: (a)-(d).
(a) $d=1e5$, $r=10$, $p=0.0015$,  $\rho=0.1$.
(b) $d=1e4$, $r=10$, $\rho=0.1$, $p=0.02,0.1,0.5$.
(c) $d=1e3, 5e3,1e4,5e4,1e5$, $r=10$, $p=0.15r^2\log(m)/m$, $\rho=0.1$.
(d) $d=1e5$, $r=10$, $p=0.002$,  $\rho=0.1$.}
\vspace{-0.4in}
\label{fig:1}
\end{figure}

The results are presented in Figure~\ref{fig:1}.
In Figure~\ref{fig:1}-(a), we draw the relative residual $\frac{\|R_t\|_F}{\|\PT{t}(M)\|_F}$ and rank estimation $r_t$ vs. iteration number.
We can see that when the initial rank is larger than the true rank, as the iteration continues,
the true rank can be revealed from the singular values of $X_t(Y_t)^{\T}$, then the rank estimation drops to the true rank;
when the residual stagnates (represented by a big solid dot in the plot), outliers are removed and the residual decreases until convergence.
In Figure~\ref{fig:1}-(b), we plot the relative residual $\frac{\|R_t\|_F}{\|\PT{t}(M)\|_F}$ vs. total CPU time for different $p$.
We can see that Algorithm~\ref{alg:rmc} works for all three cases, the larger $p$ is, the less iteration number is needed.
In Figure~\ref{fig:1}-(c), we plot total CPU time and the angle $\max\{\|\sin\Theta(X_t,U_*)\|,\|\sin\Theta(Y_t,V_*)\|\}$ vs. matrix size $d$ for different methods.
We can see that the CPU time of all three methods grows linearly with respect to the matrix size, and are comparable with each other.
The angles of the three methods are all small, which confirm that all methods give the correct results; the angle produced by NLEQ is the smallest.
In Figure~\ref{fig:1}-(d), we plot the relative residual $\frac{\|R_t\|_F}{\|\POT(M)\|}$ vs. CPU time for all three methods.
The convergence behaviors of three methods are quite different: GD converges almost linearly;
PG-RMC at the beginning stage converges linearly with a low converge rate,
then converges almost linearly with a larger rate;
NLEQ has a zig-zag convergence, which is due to the removal of outliers.

\subsection{Foreground-background separation}

The next task is foreground-background separation.
By stacking up the vectorized video frames, we get a full data matrix. The static background will form a low rank matrix while the foreground can be taken as the sparse component. We apply our method NLEQ, and also GD and PG-RMC to two public benchmarks, the {\em Bootstrap} and {\em ShoppingMall}.\footnote{\url{
http://vis-www.cs.umass.edu/~narayana/castanza/I2Rdataset/}}
Each entry of the data matrix is observed independently w.p. $p=0.05$.
As presented in Figure~\ref{fig:2},
all three methods are able to separate the foreground from the background,
and the backgrounds obtained by three methods are similar. \\

\begin{figure}[!h]
\begin{turn}{90}
\quad\  Bootstrap
\end{turn}
\mbox{\hspace{-0.3in}
\includegraphics[height=0.22\textwidth,width=0.25\textwidth]{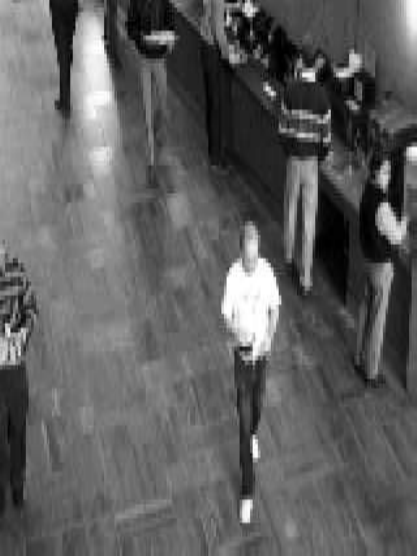}\
\includegraphics[height=0.22\textwidth,width=0.25\textwidth]{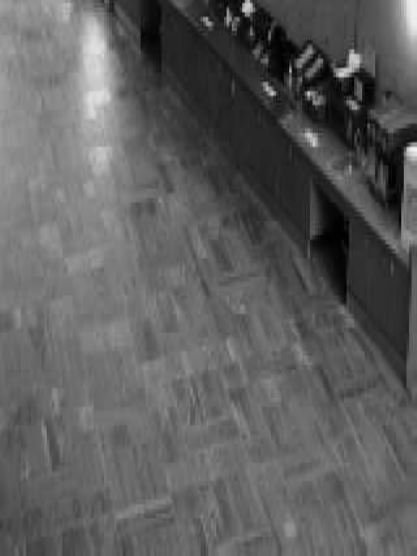}\\
\text{\hskip1.9cm Original \hskip 1.3cm  GD/PG-RMC/NLEQ}}\\
\begin{turn}{90}
\; ShoppingMall
\end{turn}
\mbox{\hspace{-0.3in}
\includegraphics[height=0.22\textwidth,
width=0.25\textwidth]{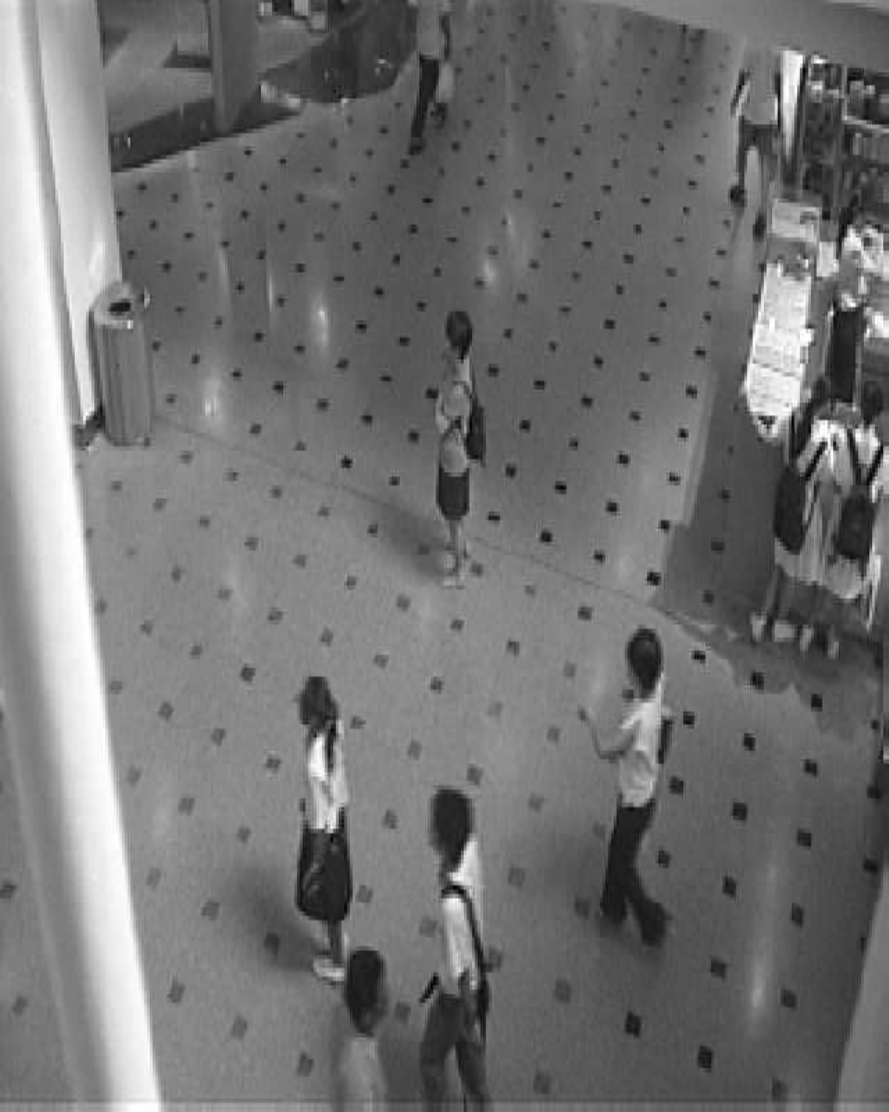}\
\includegraphics[height=0.22\textwidth,width=0.25\textwidth]{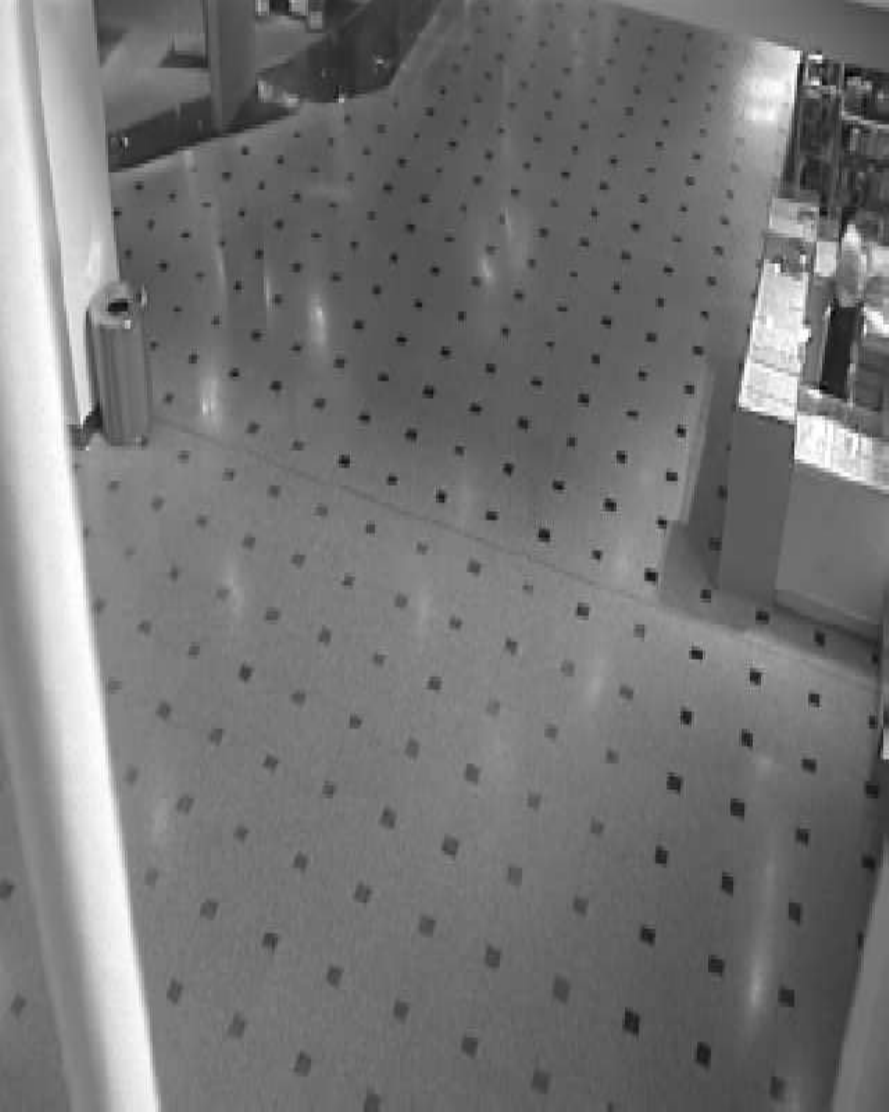}}\\
\text{\hskip1.9cm Original \hskip 1.3cm  GD/PG-RMC/NLEQ}\\
\vspace{-0.15in}
\caption{Foreground-background separation.}
\label{fig:2}
\end{figure}

\section{Conclusion}\label{sec:conclusion}

In this paper, we study the RMC problem from an algebraic point of view -- transform the RMC problem into a problem of solving an overdetermined nonlinear system of equations (with outliers).
This method does not require any objective function, convex relaxation or surrogate convex constraint.
Algorithmically, we propose to solve the NLEQ via ADM, in which the true rank and support set of the corruption are determined during the iteration.
The algorithm is highly parallelizable and suitable for large scale problems.
Theoretically, we characterize the sufficient conditions for when $L_*$ can be approximated by the low rank approximation of $M$  or $\frac1p\PO(M)$.
We establish sufficient conditions for $M_r\approx L_*$, where $M_r$ is the best rank $r$ approximation of the observed $M$.
The convergence of the algorithm is guaranteed, and exact recovery is achieved under proper assumptions.
Numerical simulations show that the algorithm is comparable with state-of-the-art methods in terms of efficiency and accuracy.

\clearpage
\balance
\bibliographystyle{plainnat}
\bibliography{myref,standard}

\onecolumn

{\centering\Large \bf Supplementary Materials}

\section{Preliminary lemmas}

The following lemma gives some fundamental results for $\sin\Theta(U,V)$,
which can be easily verified via definition.
\begin{lemma}\label{lem:sin}
    Let $[U,\, U_{\rm c}]$ and $[V,\, V_{\rm c}]$ be two orthogonal matrices with $U, V\in\R^{n\times k}$. Then
    \[
    \|\sin\Theta(U,V)\|_{\rm ui}=\|U_{\rm c}^{\T}V\|_{\rm ui}=\|U^{\T}V_{\rm c}\|_{\rm ui}. 
    \]
    Here $\|\cdot\|_{\rm ui}$ denotes any unitarily invariant norm, including the spectral norm and Frobenius norm.
    In particular, for the spectral norm, it holds $\|\sin\Theta(U,V)\|=\|UU^{\T}-VV^{\T}\|$;
    for the Frobenius norm, it holds $\|\sin\Theta(U,V)\|_F=\frac{1}{\sqrt{2}}\|UU^{\T}-VV^{\T}\|_F$.
\end{lemma}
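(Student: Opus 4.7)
The plan is to reduce the three-way equality to showing that the matrices $\sin\Theta(U,V)$, $U_{\rm c}^{\T}V$, and $U^{\T}V_{\rm c}$ share the same singular values; once that is established, equality of every unitarily invariant norm is immediate, and the second sentence of the lemma will drop out by computing the spectrum of $UU^{\T}-VV^{\T}$.

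The workhorse for the first identity is the completeness relation $UU^{\T}+U_{\rm c}U_{\rm c}^{\T}=I_n$ (and its twin $VV^{\T}+V_{\rm c}V_{\rm c}^{\T}=I_n$) that follows from the orthogonality of $[U,\,U_{\rm c}]$. I would begin with the SVD $V^{\T}U=Q_2CQ_1^{\T}$ with $C=\diag(\omega_1,\ldots,\omega_k)$, recalling $\omega_j=\cos\theta_j$, so $I-C^2=\diag(\sin^2\theta_j)$. Sandwiching $UU^{\T}+U_{\rm c}U_{\rm c}^{\T}=I_n$ between $V^{\T}$ and $V$ yields
\[
(U_{\rm c}^{\T}V)^{\T}(U_{\rm c}^{\T}V)=I_k-V^{\T}UU^{\T}V=Q_2(I-C^2)Q_2^{\T},
\]
so the singular values of $U_{\rm c}^{\T}V$ are exactly $\sin\theta_j$. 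Swapping roles and sandwiching $VV^{\T}+V_{\rm c}V_{\rm c}^{\T}=I_n$ between $U^{\T}$ and $U$ gives $(U^{\T}V_{\rm c})(U^{\T}V_{\rm c})^{\T}=Q_1(I-C^2)Q_1^{\T}$, so $U^{\T}V_{\rm c}$ has the same singular values. Since $\sin\Theta(U,V)=\diag(\sin\theta_j)$ by definition, all three matrices carry identical singular values, and any unitarily invariant norm agrees on them.

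For the spectral and Frobenius special cases I would compute the spectrum of the symmetric matrix $UU^{\T}-VV^{\T}$ directly. Conjugating by $[U,\,U_{\rm c}]$ puts it in block form with entries $I-\alpha\alpha^{\T}$, $-\alpha\beta^{\T}$, $-\beta\beta^{\T}$, where $\alpha:=U^{\T}V=Q_1CQ_2^{\T}$ and $\beta:=U_{\rm c}^{\T}V$. The identity $\beta^{\T}\beta=Q_2(I-C^2)Q_2^{\T}$ established above lets me write $\beta=PSQ_2^{\T}$ sharing the right factor $Q_2$, with $S=\diag(\sin\theta_j)$. A further orthogonal conjugation by $\diag(Q_1,[P,P_{\rm c}])$ simultaneously block-diagonalizes every entry, so that after pairing indices each $j$ contributes a $2\times2$ block $\bigl(\begin{smallmatrix}s_j^2&-c_js_j\\-c_js_j&-s_j^2\end{smallmatrix}\bigr)$ of trace $0$ and determinant $-s_j^2$, hence eigenvalues $\pm\sin\theta_j$, while the remaining rows and columns vanish. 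Reading off gives $\|UU^{\T}-VV^{\T}\|=\max_j\sin\theta_j=\|\sin\Theta(U,V)\|$ and $\|UU^{\T}-VV^{\T}\|_F^2=2\sum_j\sin^2\theta_j=2\|\sin\Theta(U,V)\|_F^2$, as required.

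The main obstacle is mostly bookkeeping rather than real difficulty: the clean block-diagonal reduction hinges on the shared right singular vector factor $Q_2$ of $\alpha$ and $\beta$, which emerges only because the completeness identity forces $\beta^{\T}\beta=I-\alpha^{\T}\alpha$. Degenerate dimension counts (e.g.\ $k>n-k$, so $\beta$ cannot have $k$ orthonormal rows) are benign: the rank constraint then forces $\omega_j=1$, hence $\sin\theta_j=0$, for the $2k-n$ excess indices, and the corresponding $2\times2$ blocks collapse to zero so the identity is preserved without modification.
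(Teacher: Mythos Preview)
Your argument is correct. The paper does not actually supply a proof of this lemma; it simply asserts that the identities ``can be easily verified via definition'' and moves on. Your proposal fills that gap cleanly: the reduction of the three-way equality to matching singular values via the completeness relations $UU^{\T}+U_{\rm c}U_{\rm c}^{\T}=I_n$ and $VV^{\T}+V_{\rm c}V_{\rm c}^{\T}=I_n$ is the standard computation, and your explicit block reduction of $UU^{\T}-VV^{\T}$ to $2\times2$ blocks with trace $0$ and determinant $-s_j^2$, hence eigenvalues $\pm\sin\theta_j$, correctly yields both the spectral and Frobenius identities. Your treatment of the dimension edge case $k>n-k$ is also right: the rank constraint on $\beta=U_{\rm c}^{\T}V$ forces the excess $\sin\theta_j$ to vanish, so the corresponding blocks collapse to zero and nothing changes.
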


The following lemma is the well-known Weyl theorem, which gives the perturbation bound for eigenvalues of Hermitian matrix.

\begin{lemma}\citep[p.203]{stewart1990matrix}
    \label{lem:eig}
    For two Hermitian matrices $A,\,\wtd{A}\in\mathbb{C}^{n\times n}$,
    let $\lambda_1\le \dots\le \lambda_n$, $\tilde{\lambda}_1\le \dots\le \tilde{\lambda}_n$ be eigenvalues of $A$, $\wtd{A}$, respectively.
    Then
    \[
    |\lambda_j-\tilde{\lambda}_j| \le \|A-\widetilde{A}\|,
    \quad\mbox{ for $1\le j\le n$}.
    \]
\end{lemma}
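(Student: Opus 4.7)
The plan is to prove the inequality via the Courant--Fischer min--max characterization of Hermitian eigenvalues, which is the standard and cleanest route. Set $E=\wtd{A}-A$; since both $A$ and $\wtd{A}$ are Hermitian, so is $E$, and $\|E\|=\|A-\wtd{A}\|$.

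First, I would recall Courant--Fischer in the form matched to the ascending ordering used in the statement: for a Hermitian $A\in\mathbb{C}^{n\times n}$ with eigenvalues $\lambda_1\le\dots\le\lambda_n$,
\[
\lambda_j \;=\; \max_{\substack{\mathcal{S}\subset\mathbb{C}^n\\ \dim\mathcal{S}=j}} \;\min_{\substack{x\in\mathcal{S}\\ \|x\|=1}} x^*Ax.
\]
Using the ascending form (rather than the more common descending one) is the bookkeeping point that deserves care, since picking the wrong version will flip a sign and force me to redo the chain of inequalities. With the correct form in hand, the key auxiliary fact is that for any unit vector $x$, Hermiticity of $E$ gives $|x^*Ex|\le\|E\|$ (the numerical radius of a Hermitian matrix coincides with its spectral norm), hence
\[
x^*Ax - \|E\| \;\le\; x^*\wtd{A}x \;\le\; x^*Ax + \|E\|.
\]

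Second, I would substitute this pointwise inequality into the min--max expression for $\tilde{\lambda}_j$. The uniform additive constant $\pm\|E\|$ commutes with both the $\min$ over unit vectors in $\mathcal{S}$ and the $\max$ over $j$-dimensional subspaces $\mathcal{S}$, so it factors out cleanly and yields
\[
\lambda_j - \|E\| \;\le\; \tilde{\lambda}_j \;\le\; \lambda_j + \|E\|,
\]
which is exactly $|\lambda_j-\tilde{\lambda}_j|\le\|A-\wtd{A}\|$.

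There is no real obstacle here; the result is classical and the proof is a two-line application of Courant--Fischer. The only pitfalls worth flagging are (i) choosing the $\max$--$\min$ rather than $\min$--$\max$ form consistent with the ascending ordering convention declared in the lemma, and (ii) invoking $|x^*Ex|\le\|E\|$ via Hermiticity rather than via the more general (and weaker) numerical radius bound. Alternatively, one could give a one-line proof by noting that the singular values of a Hermitian matrix are the absolute values of its eigenvalues and invoking Weyl's singular value inequality, but the Courant--Fischer route is more self-contained and matches the style of the preceding Lemma on $\sin\Theta$ characterizations.
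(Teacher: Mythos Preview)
The paper does not supply its own proof of this lemma; it is simply quoted as Weyl's theorem with a citation to Stewart (1990). Your Courant--Fischer argument is the standard route and is correct in substance.

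One minor bookkeeping slip worth flagging, since you yourself highlighted it as the point ``that deserves care'': with the ascending ordering $\lambda_1\le\cdots\le\lambda_n$, the correct variational formula is
\[
\lambda_j \;=\; \min_{\substack{\mathcal{S}\subset\mathbb{C}^n\\ \dim\mathcal{S}=j}}\ \max_{\substack{x\in\mathcal{S}\\ \|x\|=1}} x^*Ax,
\]
whereas the $\max$--$\min$ over $j$-dimensional subspaces that you wrote returns the $j$-th \emph{largest} eigenvalue, i.e.\ $\lambda_{n-j+1}$. This does not damage the argument: whichever variational characterization you fix, it applies verbatim to both $A$ and $\wtd{A}$, and the uniform bound $|x^*Ex|\le\|E\|$ passes through the inner and outer optimizations unchanged, yielding $|\lambda_j-\tilde{\lambda}_j|\le\|E\|$ either way. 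So the conclusion stands; only the label on the formula needs adjusting.
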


The following lemma is used to establish the perturbation bound for the invariant subspace of a Hermitian matrix,
which is due to Davis and Kahan.

\begin{lemma}\citep[Theorem 5.1]{davis1970rotation}\label{lem:syl}
    Let $H$ and $M$ be two Hermitian matrices, and let $S$ be a matrix of a compatible size as determined
    by the Sylvester equation
    $$
    HY-YM=S.
    $$
    If either all eigenvalues of $H$ are contained in a closed interval that
    contains no eigenvalue of $M$ or vice versa, then the Sylvester equation
    has a unique solution $Y$, and moreover
    $$
    \| Y\|_{\rm ui}\le\frac 1{\delta}\| S\|_{\rm ui},
    $$
    where $\delta=\min |\lambda-\omega|$ over all eigenvalues $\omega$ of $M$ and all eigenvalues $\lambda$ of $H$.

\end{lemma}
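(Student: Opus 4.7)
The plan is to derive an upper bound on $\|R_{t+1}\|$ proportional to $\theta_{x,t+1}$, a matching lower bound on $\|R_t\|$ proportional to $\theta_{y,t}$, and then collapse the ratio using the subspace contraction $\theta_{x,t+1}\le\phi\,\theta_{y,t}$ from Theorem~\ref{lem:theta}. That $\psi$ factors cleanly as
\[
\psi=\frac{2\sqrt{2}\bigl(\kappa+2\epsilon\sqrt{\mu r/m}+C\kappa/\sqrt{m}\bigr)}{1-4\sqrt{2}\epsilon\sqrt{\mu r/m}}\cdot\phi
\]
is the structural hint: the first factor is the ratio of the two residual prefactors, and the $\phi$ comes from the one-step subspace contraction.

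First I would verify that the hypotheses of Theorem~\ref{lem:theta} propagate from step $t$ to step $t+1$: the incoherence bound $\|Y_t\|_{2,\infty}\le\sqrt{\mu' r/n}$ and the max-norm control \eqref{lxyt} are preserved inductively via Remark~\ref{rem:thetay} (once $\phi<1$ the angle sequence decreases monotonically, so $Y_t$ remains in the sublevel set defining $\mu'$), with base case $t=1$ supplied by Theorem~\ref{thm2} under \eqref{cck2}. A union bound over iterations converts the per-step $0.99$ probability in Theorem~\ref{lem:theta} into a high-probability guarantee along the whole trajectory. For the upper bound on $\|R_{t+1}\|$, split $R_{t+1}=(L_*-X_{t+1}\Sigma_{t+1}Y_{t+1}^{\T})+(S_*-S_{t+1})$ and write the low-rank part as
\[
L_*-X_{t+1}\Sigma_{t+1}Y_{t+1}^{\T}=(I-X_{t+1}X_{t+1}^{\T})L_*+X_{t+1}\bigl(X_{t+1}^{\T}L_*-\Sigma_{t+1}Y_{t+1}^{\T}\bigr).
\]
The first summand has spectral norm at most $\sigma_{1*}\theta_{x,t+1}$ by Lemma~\ref{lem:sin}. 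The second summand is the discrepancy between the algorithm's partial-observation least-squares solution and the corresponding full-observation one, which is controlled by Theorem~\ref{lem:xx} (with residual $R=(I-X_{t+1}X_{t+1}^{\T})L_*$) and enters through the incoherence of $U_*,V_*$, producing the $C\kappa/\sqrt{m}$ term. The sparse part $S_*-S_{t+1}$ is bounded in spectral norm via the stable-rank definition of $r_s$ and $\zeta$, giving the $\epsilon\sqrt{\mu r/m}$ term. For the lower bound on $\|R_t\|$, the key identity $(L_*-X_t\Sigma_tY_t^{\T})(I-Y_tY_t^{\T})=L_*(I-Y_tY_t^{\T})$ together with $\|L_*(I-Y_tY_t^{\T})\|\ge\sigma_{r*}\theta_{y,t}$ (from $\sigma_{\min}(\Sigma_*)=\sigma_{r*}$ and Lemma~\ref{lem:sin}) and a matching subtraction of the sparse bound yield $\|R_t\|\ge(1-4\sqrt{2}\epsilon\sqrt{\mu r/m})\sigma_{r*}\theta_{y,t}$. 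Dividing the two bounds and substituting $\theta_{x,t+1}\le\phi\,\theta_{y,t}$ delivers exactly $\psi$.

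The main obstacle will be the sparse-residual bound and its interaction with the outlier detector. Because $S_{t+1}=\mathcal{T}_s(R_{t+1}^{\rm obs})$ is defined by a nonlinear top-$s$ thresholding that depends on the current low-rank estimate, a naive triangle inequality is not enough; one must exploit the row/column sparsity assumption (A2) together with the stable-rank parameter $r_s$ to convert Frobenius-type bounds on $S_*-S_{t+1}$ into spectral-norm bounds with the correct prefactor $\epsilon=c\kappa\zeta$. A secondary difficulty is propagating Theorem~\ref{lem:xx}'s partial-to-full correction cleanly through the QR and SVD-based truncation steps (Lines~9--12 and~14--17 of Algorithm~\ref{alg:rmc}) without amplification, which relies crucially on the assumption $r_t\equiv r$ in the theorem statement and on the incoherence of $X_{t+1}$, $Y_{t+1}$ inherited from Remark~\ref{rem:thetay}.
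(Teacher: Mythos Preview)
Your proposal addresses the wrong statement. The lemma in question is the Davis--Kahan $\sin\Theta$ bound for the Sylvester equation $HY-YM=S$, cited verbatim from \citep[Theorem~5.1]{davis1970rotation}; the paper does not prove it and simply invokes it as a preliminary tool inside the proof of Lemma~\ref{lem:errbound}. There is nothing here about residuals $R_t$, angles $\theta_{x,t}$, the contraction factor $\phi$, or the constant $\psi$.

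Everything you wrote is a sketch of the proof of Theorem~\ref{thm:main}, not of Lemma~\ref{lem:syl}. Your outline for that theorem is in fact quite close to what the paper does there (upper bound via splitting into $(I-\wht X_{t+1}\wht X_{t+1}^{\T})L_*$, the sparse discrepancy, and the partial-to-full correction from Theorem~\ref{lem:xx}; lower bound via Lemma~\ref{lem:lb}; then divide and insert $\theta_{x,t+1}\le\phi\,\theta_{y,t}$). But none of it pertains to the stated lemma. If you are asked to justify Lemma~\ref{lem:syl} itself, the correct response is that it is a classical result of Davis and Kahan, proved by diagonalizing $H$ and $M$, reducing to a diagonal Sylvester equation $\lambda_i y_{ij}-y_{ij}\omega_j=s_{ij}$, and using the fact that multiplication by a diagonal matrix with entries bounded in modulus by $1/\delta$ contracts any unitarily invariant norm by at most $1/\delta$.
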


\medskip

For a rectangular matrix $A\in\R^{m\times n}$ (without loss of generality, assume $m\ge n$),
let the SVD of $A$ be $A=U\Sigma V^{\T}$,
where $U=[U_1 \,|\, U_2 \,|\, U_3]=[u_1,\dots,u_k\,|\, u_{k+1},\dots, u_r\,|\, u_{r+1},\dots,u_m]\in\R^{m\times m}$,
$V=[V_1\, |\, V_2\,|\, V_3]=[v_1,\dots, v_k\, |\, v_{k+1},\dots, v_r\,|\, v_{r+1},\dots,v_n]\in\R^{n\times n}$ are orthogonal matrices,
and $\Sigma=\begin{bmatrix}\diag(\Sigma_1,\Sigma_2) & 0_{r\times (n-r)} \\ 0_{(m-r)\times r} & 0_{(m-r)\times (n-r)} \end{bmatrix}$,
$\Sigma_1=\diag(\sigma_1,\dots,\sigma_k)$, $\Sigma_2=\diag(\sigma_{k+1},\dots,\sigma_r)$ with $\sigma_1\ge\dots\ge \sigma_r>0$,
$k\le r=\rank(A)$.
Then the spectral decomposition of $\begin{bmatrix}0 &A \\ A^{\T} & 0\end{bmatrix}$ can be given by
\begin{equation}\label{wj}
\begin{bmatrix}0 &A \\ A^{\T} & 0\end{bmatrix}=
X \diag(\Sigma_1,\Sigma_2,-\Sigma_1, -\Sigma_2,0_{n-r} , 0_{m-r})X^{\T},
\end{equation}
where $X=\frac{1}{\sqrt{2}}\begin{bmatrix}U & -U  & 0 & \sqrt{2}U_3\\ V & V & \sqrt{2}V_3 & 0\end{bmatrix}$ is an orthogonal matrix.

With the help of \eqref{wj} and Lemmas~\ref{lem:eig} and \ref{lem:syl},
we are able to prove Lemma~\ref{lem:errbound},
which established an error bound for singular vectors.

\begin{lemma}\label{lem:errbound}
    Given $A\in\R^{m\times n}$ ($m\ge n$), let the SVD of $A$ be given as above.
    Let $\hat{\sigma}_j$, $\hat{u}_j$, $\hat{v}_j$ be respectively the approximate singular values, right and left singular vectors of $A$
    satisfying that
    $\wht{U}=[\hat{u}_1,\dots,\hat{u}_k]\in\R^{m\times k}$ and $\wht{V}=[\hat{v}_1,\dots,\hat{v}_k]\in\R^{n\times k}$ are both orthonormal ,
    $\wht{\Sigma}=\wht{U}^{\T}A\wht{V}=\diag(\hat{\sigma}_1,\dots,\hat{\sigma}_k)$ with $\hat{\sigma}_1\ge \dots\ge \hat{\sigma}_k>0$.
    Let
    \begin{equation}\label{eq:EF}
    E=A\wht{V}-\wht{U}\wht{\Sigma}, \qquad F=A^{\T}\wht{U}-\wht{V}\wht{\Sigma}.
    \end{equation}
    If
    \begin{align*}
\|(I_m-\wht{U}\wht{U}^{\T})A(I_n-\wht{V}\wht{V}^{\T})\|<\hat{\sigma}_k,\qquad \max\{\|E\|,\|F\|\}<\sigma_k-\sigma_{k+1},
    \end{align*}
    then
    \begin{align*}
    \max\{\Theta_u, \Theta_v\}\le \eta, 
    \qquad \frac{\|U_1\Sigma_1V_1^{\T} - \wht{U}\wht{\Sigma}\wht{V}^{\T}\|_{\max}}{\|A\|}
   \le (\|U_1\|_{2,\infty}\Theta_v + \|V_1\|_{2,\infty}\Theta_u) +  (1+3\|U_1\|_{2,\infty} \|V_1\|_{2,\infty})\Theta_u\Theta_v,
    \end{align*}
where $\Theta_u=\|\sin\Theta(U_1,\wht{U})\|$, $\Theta_v=\|\sin\Theta(V_1,\wht{V})\|$,
$\eta=\frac{\max\{\|E\|,\|F\|\}}{\sigma_k-\sigma_{k+1}-\max\{\|E\|,\|F\|\}}$.
\end{lemma}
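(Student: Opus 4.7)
The plan is to reduce the SVD perturbation to a Hermitian eigenvalue perturbation via the Jordan--Wielandt dilation \eqref{wj} and then combine the Sylvester-form Davis--Kahan result (Lemma~\ref{lem:syl}) with Weyl's inequality (Lemma~\ref{lem:eig}). Form
$J=\bigl[\begin{smallmatrix}0 & A\\ A^{\T} & 0\end{smallmatrix}\bigr]$,
$\hat X=\tfrac{1}{\sqrt2}\bigl[\begin{smallmatrix}\hat U\\ \hat V\end{smallmatrix}\bigr]$, and
$X_1=\tfrac{1}{\sqrt2}\bigl[\begin{smallmatrix}U_1\\ V_1\end{smallmatrix}\bigr]$.
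A direct check gives $\hat X^{\T}\hat X=I_k$, $\hat X^{\T}J\hat X=\hat\Sigma$, and the residual $R:=J\hat X-\hat X\hat\Sigma=\tfrac{1}{\sqrt2}\bigl[\begin{smallmatrix}E\\ F\end{smallmatrix}\bigr]$ satisfies $\|R\|\le\max\{\|E\|,\|F\|\}$, since $\hat U^{\T}E=0$ and $\hat V^{\T}F=0$ force the two blocks of $R$ to have mutually orthogonal column/row supports.

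For the angle bound, project the residual identity onto the complementary invariant subspace of $J$: with $C_2=X_{1c}^{\T}\hat X$, this yields the Sylvester equation $D_{1c}C_2-C_2\hat\Sigma=X_{1c}^{\T}R$, where $D_{1c}$ carries the remaining eigenvalues $\{\sigma_{k+1},\ldots,\sigma_r,-\sigma_1,\ldots,-\sigma_k,0,\ldots\}$. Since all $\hat\sigma_j>0$, the smallest gap separating $\mathrm{spec}(\hat\Sigma)$ from $\mathrm{spec}(D_{1c})$ equals $\hat\sigma_k-\sigma_{k+1}$, so Lemma~\ref{lem:syl} gives $\|\sin\Theta(\hat X,X_1)\|\le\max\{\|E\|,\|F\|\}/(\hat\sigma_k-\sigma_{k+1})$. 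To upgrade $\hat\sigma_k$ into $\sigma_k-\max\{\|E\|,\|F\|\}$, I would write $A$ in the bases $[\hat U,\hat U_\perp]$, $[\hat V,\hat V_\perp]$ as the block matrix $\bigl[\begin{smallmatrix}\hat\Sigma & G_1\\ G_2 & B\end{smallmatrix}\bigr]$ with $G_1=F^{\T}\hat V_\perp$, $G_2=\hat U_\perp^{\T}E$, $B=\hat U_\perp^{\T}A\hat V_\perp$; these satisfy $\|G_1\|\le\|F\|$, $\|G_2\|\le\|E\|$, and $\|B\|<\hat\sigma_k$ by the first hypothesis. The standard bound $|\sigma_j(A)-\sigma_j(\hat\Sigma\oplus B)|\le\max\{\|G_1\|,\|G_2\|\}$, combined with $\|B\|<\hat\sigma_k$, forces the top $k$ sorted singular values of $\hat\Sigma\oplus B$ to coincide with $\hat\sigma_1\ge\cdots\ge\hat\sigma_k$, yielding $\hat\sigma_k\ge\sigma_k-\max\{\|E\|,\|F\|\}$ and hence $\|\sin\Theta(\hat X,X_1)\|\le\eta$. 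Finally, since $X_{1c}$ explicitly contains the directions $\tfrac{1}{\sqrt2}\bigl[\begin{smallmatrix}-U_1\\ V_1\end{smallmatrix}\bigr]$ and $\tfrac{1}{\sqrt2}\bigl[\begin{smallmatrix}\pm U_j\\ V_j\end{smallmatrix}\bigr]$ for $j>k$, suitable linear combinations of the projections of $\hat X$ onto these directions recover $U_{1,c}^{\T}\hat U$ and $V_{1,c}^{\T}\hat V$, whose norms are $\Theta_u$ and $\Theta_v$ by Lemma~\ref{lem:sin}, giving $\max\{\Theta_u,\Theta_v\}\le\eta$.

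For the max-norm bound, use $U_1\Sigma_1V_1^{\T}=P_UAP_V$ (with $P_U=U_1U_1^{\T}$, $P_V=V_1V_1^{\T}$) and $\hat U\hat\Sigma\hat V^{\T}=\hat P_UA\hat P_V$, and telescope
\[
P_UAP_V-\hat P_UA\hat P_V=P_UA(P_V-\hat P_V)+(P_U-\hat P_U)AP_V-(P_U-\hat P_U)A(P_V-\hat P_V).
\]
The $(i,j)$ entry of the first term is bounded by $\|U_{1(i,:)}\|\|\Sigma_1\|\|V_1^{\T}(P_V-\hat P_V)e_j\|\le\|U_1\|_{2,\infty}\|A\|\Theta_v$, and symmetrically the second by $\|V_1\|_{2,\infty}\|A\|\Theta_u$. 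For the cross term, expand $\hat U=U_1W+U_{1,c}Z$ with $\|Z\|=\Theta_u$ and $\hat V=V_1W_V+V_{1,c}Z_V$ with $\|Z_V\|=\Theta_v$, and write $(P_U-\hat P_U)e_i=U_1\alpha+U_{1,c}\beta$, $(P_V-\hat P_V)e_j=V_1\gamma+V_{1,c}\delta$; the orthogonality identities $U_1^{\T}AV_{1,c}=\Sigma_1V_1^{\T}V_{1,c}=0$ and $U_{1,c}^{\T}AV_1=0$ collapse the cross term to $\alpha^{\T}\Sigma_1\gamma+\beta^{\T}(U_{1,c}^{\T}AV_{1,c})\delta$. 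Bounds of the form $\|\alpha\|\le\Theta_u^2\|U_1\|_{2,\infty}+\Theta_u$, $\|\beta\|\le\Theta_u\|U_1\|_{2,\infty}+\Theta_u^2$ (and analogues for $\gamma,\delta$) from the CS structure of $W,Z$, together with $\|\Sigma_1\|,\|U_{1,c}^{\T}AV_{1,c}\|\le\|A\|$, yield after collecting the resulting products the claimed coefficient $(1+3\|U_1\|_{2,\infty}\|V_1\|_{2,\infty})\|A\|\Theta_u\Theta_v$ on the cross term.

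The main obstacle is the eigenvalue-matching step needed for $\hat\sigma_k\ge\sigma_k-\max\{\|E\|,\|F\|\}$: Weyl's inequality alone only matches each Ritz value $\hat\sigma_j$ with some singular value of $A$, and the first hypothesis $\|(I-\hat U\hat U^{\T})A(I-\hat V\hat V^{\T})\|<\hat\sigma_k$, via the block-decomposition argument above, is exactly what prevents a wrong pairing with $\sigma_{j'}$ for $j'>k$. The max-norm cross-term accounting, while mechanical, critically depends on the exact orthogonality $U_1^{\T}AV_{1,c}=0$ so that the apparent $\|A\|$-sized cross contributions cancel and only the geometric $\Theta_u\Theta_v$ products survive.
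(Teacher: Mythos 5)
Your overall strategy matches the paper's: dilate to the Jordan--Wielandt matrix, bound the angle via the Sylvester/Davis--Kahan lemma together with a Weyl-type matching of $\hat{\sigma}_k$ to $\sigma_k$ (the first hypothesis playing exactly the role you identify), then do an entrywise decomposition for the $\max$-norm bound. Two of your sub-arguments are genuinely different and both work: the singular-value matching via the block form $\bigl[\begin{smallmatrix}\hat{\Sigma} & G_1\\ G_2 & B\end{smallmatrix}\bigr]$ plus Weyl for singular values is a clean substitute for the paper's additive rank-$2k$ modification $H-R\wht{X}_1^{\T}-\wht{X}_1R^{\T}$ of the dilated matrix; and your telescoping of $P_UAP_V-\hat{P}_UA\hat{P}_V$ is simpler than the paper's $\Gamma$-coordinate expansion --- in fact the cross term needs none of your CS-decomposition bookkeeping, since $\|(P_U-\hat{P}_U)A(P_V-\hat{P}_V)\|_{\max}\le\|P_U-\hat{P}_U\|\,\|A\|\,\|P_V-\hat{P}_V\|=\|A\|\Theta_u\Theta_v$ already, which even improves the stated coefficient.

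The one genuine issue is your choice of the $k$-dimensional trial subspace $\hat{X}=\tfrac{1}{\sqrt2}\bigl[\begin{smallmatrix}\hat{U}\\ \hat{V}\end{smallmatrix}\bigr]$. Davis--Kahan then controls $\|X_{1c}^{\T}\hat{X}\|$, but $U_{1,c}^{\T}\hat{U}$ only appears inside $X_{1c}^{\T}\hat{X}$ with a factor $\tfrac{1}{\sqrt2}$: the columns of $\bigl[\begin{smallmatrix}U_{1,c}\\ 0\end{smallmatrix}\bigr]$ lie in $\mathcal{R}(X_{1c})$ and their inner products with $\hat{X}$ give $\tfrac{1}{\sqrt2}U_{1,c}^{\T}\hat{U}$. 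So from $\|X_{1c}^{\T}\hat{X}\|\le\|R\|/\delta$, with $\|R\|$ as large as $\max\{\|E\|,\|F\|\}$, you only obtain $\max\{\Theta_u,\Theta_v\}\le\sqrt{2}\,\eta$, not $\eta$; your ``suitable linear combinations'' of the $\pm\sigma_j$ directions incur the same loss. The paper avoids this by taking the $2k$-dimensional subspace $\wht{X}_1=\tfrac{1}{\sqrt2}\bigl[\begin{smallmatrix}\hat{U} & -\hat{U}\\ \hat{V} & \hat{V}\end{smallmatrix}\bigr]$, for which $\wht{X}_1\wht{X}_1^{\T}=\diag(\hat{U}\hat{U}^{\T},\hat{V}\hat{V}^{\T})$, so that $\|\sin\Theta(X_1,\wht{X}_1)\|=\max\{\Theta_u,\Theta_v\}$ exactly and the residual norm is exactly $\max\{\|E\|,\|F\|\}$; switch to that dilation (your eigenvalue-matching and $\max$-norm arguments carry over unchanged) to recover the constant $\eta$ as stated.
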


\begin{proof}
    Let
    \begin{align*}
    &H=\begin{bmatrix} 0 & A\\ A^{\T} & 0\end{bmatrix},
    &&{X}_1=\frac{1}{\sqrt{2}}\begin{bmatrix}U_1 & -U_1\\ V_1 & V_1\end{bmatrix},\\
    &\wht{X}_1=\frac{1}{\sqrt{2}}\begin{bmatrix}\wht{U} & -\wht{U}\\ \wht{V} & \wht{V}\end{bmatrix},
    &&X_2=\frac{1}{\sqrt{2}}\begin{bmatrix}U_2 & -U_2  & 0 & \sqrt{2}U_3\\ V_2 & V_2 & \sqrt{2}V_3 & 0\end{bmatrix}.
    \end{align*}
 By calculations, we have
    \begin{align}\label{sinx1x1}
  \|X_1X_1^{\T} - \wht{X}_1\wht{X}_1^{\T}\|_{\rm ui}
    =\|\diag(U_1U_1^{\T}-\wht{U}\wht{U}^{\T}, V_1V_1^{\T}-\wht{V}\wht{V}^{\T})\|_{\rm ui}
    \end{align}

    By simple calculations, we have
    \begin{subequations}
        \begin{align}
        &H\wht{X}_1-\wht{X}_1\diag(\wht{\Sigma},-\wht{\Sigma})
        =\frac{1}{\sqrt{2}}\begin{bmatrix}A\wht{V}-\wht{U}\wht{\Sigma} & A\wht{V}-\wht{U}\wht{\Sigma}\\ A^{\T}\wht{U}-\wht{V}\wht{\Sigma}& -A^{\T}\wht{U}+\wht{V}\wht{\Sigma}\end{bmatrix}
        =\frac{1}{\sqrt{2}}\begin{bmatrix}E & E\\ F& -F\end{bmatrix}\triangleq R,\label{eq:r}\\
        &H X_2 - X_2\diag(\Sigma_2,-\Sigma_2,0,0)
        =0,\label{hx2}
        \end{align}
    \end{subequations}
    where \eqref{eq:r} uses \eqref{eq:EF},  \eqref{hx2} uses the SVD of $A$.
    Then it follows from \eqref{eq:r} that
    \begin{align}
    \|R\|
    &=\left\|\diag(E,F) \frac{1}{\sqrt{2}}\begin{bmatrix}I_k & I_k \\ I_k & -I_k\end{bmatrix}\right\|
    =\|\diag(E,F)\|=\max\{\|E\|,\|F\|\}.\label{normr}
    \end{align}

    Pre-multiplying \eqref{eq:r} by $X_2^{\T}$ and using \eqref{hx2}, we have
    \begin{align}\label{x2r}
    X_2^{\T}R&=X_2^{\T}H\wht{X}_1-X_2^{\T}\wht{X}_1\diag(\wht{\Sigma},-\wht{\Sigma})
    =\diag(\Sigma_2,-\Sigma_2,0,0)X_2^{\T}\wht{X}_1 - X_2^{\T}\wht{X}_1\diag(\wht{\Sigma},-\wht{\Sigma}).
    \end{align}
    To apply Lemma~\ref{lem:syl} to \eqref{x2r}, we need to estimate the gap between the eigenvalues of $\diag(\wht{\Sigma},-\wht{\Sigma})$
    and those of $\diag(\Sigma_2,-\Sigma_2,0,0)$.
    Using \eqref{eq:r} and $\wht{U}^{\T}A\wht{V}=\wht{\Sigma}$, we have
    \begin{align}
    (H-R\wht{X}_1^{\T}-\wht{X}_1R^{\T})\wht{X}_1=H\wht{X}_1 - R=\wht{X}_1\diag(\wht{\Sigma},-\wht{\Sigma}),
    \end{align}
    which implies that $\pm\hat{\sigma}_j$ are eigenvalues of $H-R\wht{X}_1^{\T}-\wht{X}_1R^{\T}$,
    and the corresponding eigenvectors are $\frac{1}{\sqrt{2}}\left[\begin{smallmatrix}\pm\hat{u}_j\\ \hat{v}_j\end{smallmatrix}\right]$, for $j=1,\dots,k$.
    Next, we declare that $\hat{\sigma}_1,\dots,\hat{\sigma}_k$ are the $k$ largest eigenvalues of $H-R\wht{X}_1^{\T}-\wht{X}_1R^{\T}$.
    This is because
    \begin{align*}
    &\max_{\wht{X}_1^{\T}x=0}\frac{x^{\T}(H-R\wht{X}_1^{\T}-\wht{X}_1R^{\T})x}{x^{\T}x}\\
    \le& \|(I-\wht{X}_1\wht{X}_1^{\T})(H-R\wht{X}_1^{\T}-\wht{X}_1R^{\T})(I-\wht{X}_1\wht{X}_1^{\T})\|\\
    =& \|(I-\wht{X}_1\wht{X}_1^{\T})H(I-\wht{X}_1\wht{X}_1^{\T})\|\\
    =&\left\| \begin{bmatrix}I_m-\wht{U}\wht{U}^{\T} & 0\\ 0 & I_n-\wht{V}\wht{V}^{\T}\end{bmatrix} \begin{bmatrix}0 & A\\ A^{\T} & 0\end{bmatrix}\begin{bmatrix}I_m-\wht{U}\wht{U}^{\T} & 0\\ 0 & I_n-\wht{V}\wht{V}^{\T}\end{bmatrix}\right\|\\
    =& \|(I_m-\wht{U}\wht{U}^{\T})A(I_n-\wht{V}\wht{V}^{\T})\|<\hat{\sigma}_k.
    \end{align*}
    Therefore, by Lemma~\ref{lem:eig}, we have
    \begin{align}
    |\sigma_j-\hat{\sigma}_j|\le \|R\wht{X}_1^{\T}+\wht{X}_1R^{\T}\|,\quad \mbox{for } j=1,\dots,k.
    \end{align}
    Together with \eqref{normr}, we get
    \begin{align}
    |\sigma_j-\hat{\sigma}_j|
    &\le \|R\wht{X}_1^{\T}+\wht{X}_1R^{\T}\|
    =\max_j|\lambda_j([R, \wht{X}_1]\left[\begin{smallmatrix}\wht{X}_1^{\T}\\ R^{\T}\end{smallmatrix}\right])|
    =\max_j|\lambda_j(\left[\begin{smallmatrix}\wht{X}_1^{\T}\\ R^{\T}\end{smallmatrix}\right][R, \wht{X}_1])|\notag\\
    &=\max_j|\lambda_j(\left[\begin{smallmatrix}0 & I_k\\ R^{\T}R & 0\end{smallmatrix}\right])|
    =\|R\|=\max\{\|E\|,\|F\|\}.
    \end{align}
    Here we uses the property that for any two matrix $A\in \R^{m\times n}$, $B\in\R^{n\times m}$, the nonzero eigenvalues of $AB$ and $BA$ are the same.

    Now by the assumption that $\max\{\|E\|,\|F\|\}<\sigma_k-\sigma_{k+1}$, we have
    \begin{align}\label{gapss}
    \hat{\sigma}_k-\sigma_{k+1}=\sigma_k-\sigma_{k+1}+\hat{\sigma}_k-\sigma_k\ge \sigma_k-\sigma_{k+1} - \max\{\|E\|,\|F\|\}>0,
    \end{align}
    therefore, the eigenvalues of $\diag(\Sigma_2,-\Sigma_2,0,0)$ lie in $[-\sigma_{k+1},\sigma_{k+1}]$,
    which has no eigenvalues of  $\diag(\wht{\Sigma},-\wht{\Sigma})$.
    We are able to apply Lemma~\ref{lem:syl} to \eqref{x2r},
    which yields
    \begin{align}\label{x2x1}
    \|X_2^{\T}\wht{X}_1\|_{\rm ui}\le \frac{\|X_2^{\T}R\|_{\rm ui}}{\sigma_k-\sigma_{k+1} - \max\{\|E\|,\|F\|\}}.
    \end{align}

    Using \eqref{sinx1x1}, Lemma~\ref{lem:sin}, \eqref{gapss} and \eqref{x2x1}, we get
    \begin{align}
    &\max\{\Theta_u,\Theta_v\}
    =\|\sin\Theta(X_1,\wht{X}_1)\|
    =\|X_2^{\T}\wht{X}_1\|
    \le  \frac{\|X_2^{\T}R\|}{\sigma_k-\sigma_{k+1} - \max\{\|E\|,\|F\|\}}
    \le \eta.\label{eq:theta}
    \end{align}

    Let
    \begin{align}\label{eq:uvhat}
    \wht{U}=U\Gamma_u=[U_1, U_2, U_3]\left[\begin{smallmatrix}\Gamma_{u1}\\ \Gamma_{u2}\\ \Gamma_{u3}\end{smallmatrix}\right],
    \quad \wht{V}=V\Gamma_v=[V_1, V_2, V_3]\left[\begin{smallmatrix}\Gamma_{v1}\\ \Gamma_{v2}\\ \Gamma_{v3}\end{smallmatrix}\right],
    \end{align}
    where $\Gamma_{u1}\in\R^{k\times k}$, $\Gamma_{u2}\in\R^{(r-k)\times k}$, $\Gamma_{u3}\in\R^{(m-r)\times k}$,
    $\Gamma_{v1}\in\R^{k\times k}$, $\Gamma_{v2}\in\R^{(r-k) \times k}$, $\Gamma_{v3}\in\R^{(n-r) \times k}$,
    and
    $\left[\begin{smallmatrix}\Gamma_{u1}\\ \Gamma_{u2}\\ \Gamma_{u3}\end{smallmatrix}\right]$,
    $\left[\begin{smallmatrix}\Gamma_{v1}\\ \Gamma_{v3}\\ \Gamma_{v3}\end{smallmatrix}\right]$ are both orthonormal.
    By \eqref{eq:theta}, we have
    \begin{align}\label{gammabound}
    \left\|\left[\begin{smallmatrix} \Gamma_{u2}\\ \Gamma_{u3}\end{smallmatrix}\right]\right\|=\Theta_u,\quad
    \sigma_{\min}(\Gamma_{u1})=\sqrt{1-\Theta_u^2},\quad
    \left\|\left[\begin{smallmatrix} \Gamma_{u2}\\ \Gamma_{u3}\end{smallmatrix}\right]\right\|=\Theta_v,\quad
    \sigma_{\min}(\Gamma_{v1})= \sqrt{1-\Theta_v^2}.
    \end{align}
    Substituting \eqref{eq:uvhat} into $\wht{U}^{\T}A\wht{V}=\wht{\Sigma}$ and using the SVD of $A$, we have
    \begin{align}\label{gammahat}
    \wht{\Sigma}=[\Gamma_{u1}^{\T}, \Gamma_{u2}^{\T}, \Gamma_{u3}^{\T}]
    \diag(\Sigma_1,\Sigma_2,0_{(m-r)\times (n-r)})
    \left[\begin{smallmatrix}\Gamma_{v1}\\ \Gamma_{v2}\\ \Gamma_{v3}\end{smallmatrix}\right]
    =\Gamma_{u1}^{\T} \Sigma_1\Gamma_{v1} + \Gamma_{u2}^{\T} \Sigma_2\Gamma_{v2}.
    \end{align}

    Then it follows that
        \begin{align}
    \|\Sigma_1 - \Gamma_{u1}\wht{\Sigma}\Gamma_{v1}^{\T}\|
    =& \|(\Sigma_1 - \Gamma_{u1}\Gamma_{u1}^{\T} \Sigma_1)+ (\Gamma_{u1}\Gamma_{u1}^{\T} \Sigma_1-\Gamma_{u1}\Gamma_{u1}^{\T} \Sigma_1\Gamma_{v1}\Gamma_{v1}^{\T})
    - \Gamma_{u1}\Gamma_{u2}^{\T} \Sigma_2\Gamma_{v2}\Gamma_{v1}^{\T}\|\notag\\
    \le & \|I - \Gamma_{u1}\Gamma_{u1}^{\T}\| \|\Sigma_1\|
    +\|\Gamma_{u1}\Gamma_{u1}^{\T}\| \|I - \Gamma_{v1}\Gamma_{v1}^{\T}\| \|\Sigma_1\|
    +  \|\Gamma_{u2}\| \|\Gamma_{v2}\| \|\Sigma_2\|\notag\\
    \le&  (\Theta_u^2+\Theta_v^2+\Theta_u\Theta_v)\|\Sigma_1\|.\label{ssdif}
    \end{align}

    Finally, using \eqref{gammabound}, \eqref{gammahat}, \eqref{ssdif} and $\|\Gamma_{u1}\|\le 1$, $\|\Gamma_{v1}\|\le 1$,  $\|\wht{\Sigma}\|\le \|A\|$,
we have
\begin{align*}
\|U_1\Sigma_1V_1^{\T} - \wht{U}\wht{\Sigma}\wht{V}^{\T}\|_{\max}
&=\max_{i,j}|e_i^{\T}(U_1\Sigma_1V_1^{\T} - \wht{U}\wht{\Sigma}\wht{V}^{\T})e_j|\\
&=\max_{i,j}|e_i^{\T}(U_1\Sigma_1V_1^{\T} - U\Gamma_u \wht{\Sigma}\Gamma_{v}^{\T}{V}^{\T})e_j|\\
&\le \max_{i,j} |e_i^{\T}(U_1\Sigma_1V_1^{\T}-U_1\Gamma_{u1}\wht{\Sigma}\Gamma_{v1}^{\T}V_1^{\T})e_j|
+\|[U_2,U_3]\left[\begin{smallmatrix} \Gamma_{u2}\\ \Gamma_{u3}\end{smallmatrix}\right]
\wht{\Sigma}\left[\begin{smallmatrix} \Gamma_{v2}\\ \Gamma_{v3}\end{smallmatrix}\right]^{\T}[V_2,V_3]^{\T}\|\\
& + \max_{i,j} \left(|e_i^{\T} [U_2,U_3]\left[\begin{smallmatrix} \Gamma_{u2}\\ \Gamma_{u3}\end{smallmatrix}\right]
\wht{\Sigma} \Gamma_{v1}^{\T}V_1^{\T} e_j|
+|e_i^{\T}U_1 \Gamma_{u1} \wht{\Sigma}\left[\begin{smallmatrix} \Gamma_{v2}\\ \Gamma_{v3}\end{smallmatrix}\right]^{\T}[V_2,V_3]^{\T}e_j|\right)\\
&\le  \max_{i,j}\left(3 \|e_i^{\T}U_1\| \|e_j^{\T}V_1\|  \|A\| \Theta_u\Theta_v +   \|A\| \Theta_u\Theta_v +  \|e_j^{\T}V_1\|  \|A\| \Theta_u+ \|e_i^{\T}U_1\|   \|A\| \Theta_v \right)\\
&\le \|A\| \left((\|U_1\|_{2,\infty}\Theta_v + \|V_1\|_{2,\infty}\Theta_u) +  (1+3\|U_1\|_{2,\infty} \|V_1\|_{2,\infty})\Theta_u\Theta_v\right),
\end{align*}
completing the proof.
\end{proof}

 \begin{lemma}\label{lem:bernstein}\citep[Corollary 6.1.2]{tropp2015introduction}
 Let ${\bf S}_1,\dots, {\bf S}_n$ be independent random matrices with common dimension $d_1\times d_2$,
 and assume that each matrix has uniformly bounded deviation from its mean:
 \[
\|{\bf S_k} - \mathbb{E}({\bf S}_k)\|\le L, \quad \mbox{ for each } k=1,\dots, n.
 \]
 Let ${\bf Z}=\sum_{k=1}^n {\bf S}_k$, $v({\bf Z})$ denote the matrix covariance statistic of the sum:
 \begin{align*}
 v({\bf Z})
 &=\max\{\|\mathbb{E}[({\bf Z} -\mathbb{E}(Z))({\bf Z} -\mathbb{E}(Z))^{\H}]\|, \|\mathbb{E}[({\bf Z} -\mathbb{E}(Z))^{\H}({\bf Z} -\mathbb{E}(Z))]\|\}\\
 &=\max\{\|\mathbb{E}[\sum_{k=1}^n({\bf S}_k-\mathbb{E}({\bf S}_k))({\bf S}_k-\mathbb{E}({\bf S}_k))^{\H}]\|,
\|\mathbb{E}[\sum_{k=1}^n({\bf S}_k-\mathbb{E}({\bf S}_k))^{\H}({\bf S}_k-\mathbb{E}({\bf S}_k))]\|\}.
 \end{align*}
 Then for all $t\ge 0$,
 \[
 \mathbb{P}\{\|{\bf Z}-\mathbb{E}({\bf Z})\|\ge t\}\le (d_1+d_2)\cdot \exp\Big(\frac{-t^2/2}{v({\bf Z})+Lt/3}\Big).
 \]
 \end{lemma}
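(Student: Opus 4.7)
\medskip

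\noindent\textbf{Proof proposal for Lemma~\ref{lem:bernstein}.}
Since this statement is quoted verbatim from Tropp's monograph, the plan is to reproduce the standard matrix Laplace-transform proof of the rectangular matrix Bernstein inequality, and to indicate only where the structural work lies rather than recomputing routine estimates. First, I would reduce the rectangular case to the Hermitian case via the self-adjoint dilation: for each ${\bf S}_k$ of size $d_1\times d_2$, set
\[
\mathcal{H}({\bf S}_k)=\begin{bmatrix} 0 & {\bf S}_k \\ {\bf S}_k^{\H} & 0 \end{bmatrix}\in\mathbb{C}^{(d_1+d_2)\times(d_1+d_2)},
\]
so that $\|\mathcal{H}(A)\|=\|A\|$ and $\|{\bf Z}-\mathbb{E}{\bf Z}\|=\lambda_{\max}(\mathcal{H}({\bf Z}-\mathbb{E}{\bf Z}))$. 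Centering each summand, we may from now on assume $\mathbb{E}\,\mathcal{H}({\bf S}_k)=0$, $\|\mathcal{H}({\bf S}_k)\|\le L$, and the variance parameter becomes $v({\bf Z})=\|\sum_k \mathbb{E}\,\mathcal{H}({\bf S}_k)^2\|$, using the block structure of $\mathcal{H}({\bf S}_k)^2$.

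Next, I would invoke the matrix Laplace transform method: for any $\theta>0$,
\[
\mathbb{P}\{\lambda_{\max}(\mathcal{H}({\bf Z}-\mathbb{E}{\bf Z}))\ge t\}\le e^{-\theta t}\,\mathbb{E}\,\operatorname{tr}\exp\bigl(\theta \mathcal{H}({\bf Z}-\mathbb{E}{\bf Z})\bigr),
\]
and bound the trace-MGF on the right by Lieb's concavity theorem plus the subadditivity that it implies for sums of independent Hermitian terms. This reduces the task to controlling $\log\mathbb{E}\exp(\theta \mathcal{H}({\bf S}_k))$ term by term. Under $\|\mathcal{H}({\bf S}_k)\|\le L$ and zero mean, the standard scalar-style Taylor expansion of $e^{\theta x}-1-\theta x$ with the bound $e^{\theta x}-1-\theta x\le \frac{\theta^2 x^2/2}{1-\theta L/3}$ (valid for $|x|\le L$, $0<\theta<3/L$) lifts to the operator level via the spectral mapping theorem, giving
\[
\log\mathbb{E}\exp(\theta\mathcal{H}({\bf S}_k))\preceq \frac{\theta^2/2}{1-\theta L/3}\,\mathbb{E}\,\mathcal{H}({\bf S}_k)^2.
\]

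Summing these semidefinite inequalities, using $\operatorname{tr}\exp(\cdot)\le (d_1+d_2)\,\lambda_{\max}\exp(\cdot)$, and substituting into the Laplace bound yields
\[
\mathbb{P}\{\lambda_{\max}(\mathcal{H}({\bf Z}-\mathbb{E}{\bf Z}))\ge t\}\le (d_1+d_2)\exp\!\Bigl(-\theta t+\frac{\theta^2 v({\bf Z})/2}{1-\theta L/3}\Bigr).
\]
Finally, I would optimize by choosing $\theta=t/(v({\bf Z})+Lt/3)$ (the standard Bernstein choice, which stays inside the admissible range $\theta<3/L$), at which point the exponent collapses to $-\tfrac{t^2/2}{v({\bf Z})+Lt/3}$, exactly the claimed tail bound. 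The only genuinely non-routine input is Lieb's concavity theorem (used once, to split the MGF across independent summands); every other step is a mechanical application of spectral monotonicity, the dilation trick, and the scalar Bernstein moment inequality lifted to operators. I would therefore expect the main obstacle in a self-contained write-up to be a careful statement of Lieb's theorem and verification that the chosen $\theta$ lies in the admissible range for all $t\ge 0$; everything else is bookkeeping, so for brevity I would simply cite \citep{tropp2015introduction}.
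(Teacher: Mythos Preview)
Your proof sketch is correct and follows the standard matrix Laplace-transform argument from Tropp's monograph: Hermitian dilation, subadditivity of cumulant generating functions via Lieb's concavity, the operator Bernstein moment bound, and optimization in $\theta$.

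However, there is nothing to compare against: the paper does not prove this lemma at all. It is stated with the citation \citep[Corollary 6.1.2]{tropp2015introduction} and used as a black box in the proof of Theorem~3. So your write-up goes well beyond what the paper does; in the paper's context the appropriate ``proof'' is simply the citation, which is exactly what you suggest doing in your final sentence.
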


 \begin{lemma}\label{lem:2fro}
For any linear homogeneous function $F:\R^{k}\rightarrow \R^{m\times n}$,
assume that the linear system of equations $F(x)=C$ either has a unique solution or has no solution at all.
Then it holds
\[
\argmin_x \|F(x)-C\|=\argmin_x\|F(x)-C\|_F.
\]
\end{lemma}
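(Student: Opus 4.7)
The plan is to handle the two branches of the hypothesis separately, treating the unique-solution case by direct substitution and the inconsistent case by exploiting the structure of the maps to which the lemma will actually be applied.

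In the unique-solution case, let $x^\star$ denote the unique element of $\R^k$ with $F(x^\star) = C$. Then $F(x^\star) - C = 0$, so $\|F(x^\star) - C\| = 0 = \|F(x^\star) - C\|_F$, which is the global minimum for either norm since both are nonnegative. For any $x \ne x^\star$, the uniqueness forces $F(x) \ne C$, making $F(x) - C$ a nonzero matrix and hence both $\|F(x) - C\|$ and $\|F(x) - C\|_F$ strictly positive. Consequently both $\argmin$ sets collapse to $\{x^\star\}$, establishing the equality trivially.

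In the no-solution case, the dichotomy rules out multiple solutions, so $F$ must be injective. The functional $x \mapsto \|F(x) - C\|_F^2$ is then strictly convex and coercive, yielding a unique Frobenius minimizer $x_F$ characterized by the normal equations $\langle F(y),\, F(x_F) - C\rangle_F = 0$ for every $y \in \R^k$. To argue that $x_F$ also minimizes the spectral norm, I would invoke the row-separable structure possessed by the maps to which the lemma is actually applied in Algorithm~\ref{alg:rmc} — namely $F(X) = \PT{t}(XY_t^{\T})$, where $F(X)_{(i,:)}$ depends only on the $i$th row of $X$. Under this structure, $\|F(X) - C\|_F^2 = \sum_i \|F(X)_{(i,:)} - C_{(i,:)}\|_2^2$ decomposes into independent per-row optimizations, and each row problem is a least-squares question for a \emph{vector} residual, on which the spectral norm, Frobenius norm, and Euclidean $2$-norm coincide. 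Hence each per-row minimizer is the same regardless of which of the two norms is used, and assembling them returns the same matrix $x_F$ for both matrix-level objectives.

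The main obstacle is Case 2 for completely general linear $F$, where the spectral and Frobenius argmins can genuinely differ when the residual is forced to be higher-rank; the proof therefore leans critically on the row-separable form of $F$ that is present in every invocation of the lemma in Algorithm~\ref{alg:rmc}. I would make this structural hypothesis explicit in the proof, verify it for the two concrete uses $F(X) = \PT{t}(XY_{t-1}^{\T})$ on Line 8 and $F(Y) = \PT{t}(\wht{X}_t Y^{\T})$ on Line 13, and then reduce the matrix-level minimization to independent vector least-squares where the spectral/Frobenius identification is automatic.
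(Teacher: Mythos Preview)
Your suspicion about Case 2 is well-founded: the lemma as stated is actually false for general injective linear $F$. Take $F:\R\to\R^{2\times 2}$, $F(x)=x\,\diag(2,-1)$, $C=I_2$; the Frobenius minimizer is $x=1/5$ (spectral residual $6/5$) while the unique spectral minimizer is $x=0$ (spectral residual $1$), so the two argmins disagree. However, your proposed repair via row-separability does not close the gap as written. Row-separability gives $\|F(X)-C\|_F^2=\sum_i\|F(X)_{(i,:)}-C_{(i,:)}\|_2^2$, so the \emph{Frobenius} objective decouples into independent per-row $\ell_2$ problems; but the matrix spectral norm is \emph{not} the maximum row $\ell_2$-norm (that is $\|\cdot\|_{2,\infty}$) and does not decompose across rows. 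Exhibiting the per-row $\ell_2$-optimal solution therefore says nothing about whether the assembled matrix minimizes the spectral norm of the full residual. Your sentence ``each per-row minimizer is the same regardless of which of the two norms is used, and assembling them returns the same matrix \dots\ for both matrix-level objectives'' is precisely the unjustified leap: agreement of vector norms row-by-row does not lift to agreement of matrix norms.

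The paper's proof takes an entirely different route and does not use row structure at all. It orthogonally decomposes $C=C_{\rm LS}+C_\perp$ with $C_{\rm LS}\in\mathcal{R}(F)$ and $C_\perp$ in the Frobenius-orthogonal complement of $\mathcal{R}(F)$, and then asserts that the minimizer for \emph{both} $\|\cdot\|$ and $\|\cdot\|_F$ is the unique solution of $F(x)=C_{\rm LS}$. For the Frobenius norm this is standard least squares; for the spectral norm the paper offers no justification, and the counterexample above shows that assertion cannot hold in general either. So neither your argument nor the paper's actually proves the lemma as stated. What the downstream uses (Theorem~3 and the step invoking this lemma in the proof of Theorem~5) really need is only that the Frobenius-optimal $X$ is \emph{a} spectral minimizer for the specific maps $X\mapsto\PT{t}(XY_t^{\T})$ and $Y\mapsto\PT{t}(\wht X_t Y^{\T})$; if you want to make your approach rigorous, aim for that narrower claim directly rather than the general lemma.
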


\begin{proof}
For any $A,B\in\R^{m\times n}$,
define $\langle A,B \rangle = \tr(A^{\T}B)$.
It is easy to see that $\langle\cdot,\cdot \rangle$ is an inner product over $\R^{m\times n}$.
Denote the range space of $F(\cdot)$ by $\mathcal{F}$,
and its orthogonal complement space by $\mathcal{F}^{\bot}$.
Write $C=C_{\rm LS}+C$ such that $C_{\rm LS}\in\mathcal{F}$, and $C\in\mathcal{F}^{\bot}$.
Then the solutions to $\min\|F(x)-C\|$ and $\min\|F(x)-C\|_F$ are nothing but the solutions to $F(x)=C_{\rm LS}$.
Since $C_{\rm LS}\in\mathcal{F}$, $F(x)=C_{\rm LS}$ has at least a solution.
By the assumption, the solution should be unique.
The proof is completed.
\end{proof}

\begin{lemma}\label{lem:lb}
Let $L_*\in\R^{m\times n}$ with $m\ge n$, let the SVD of $L_*$ be $L_*=U_*\Sigma_*V_*^{\T}$,
where $U_*\in\R^{m\times r}$, $V_*\in\R^{n\times r}$ are orthonormal, $\Sigma_*=\diag(\sigma_{1*},\dots,\sigma_{r*})$
with $\sigma_{1*}\ge\dots\ge\sigma_{r*}>0$.
Let $G\in\R^{m\times n}$ be a perturbation to $L_*$, $X\in\R^{m\times r}$, $Y\in\R^{n\times r}$ have full column rank.
Denote $\theta_x=\|\sin\Theta(U_*,X)\|$, $\theta_y=\|\sin\Theta(V_1,Y)\|$.
Then
\begin{align*}
\min_{X,Y}\|L_*-G-XY^{\T}\|
\ge  \sigma_{r*} \max\{\sqrt{1-\theta_x^2}\theta_y, \sqrt{1-\theta_y^2}\theta_x\} \sqrt{1-\theta_x^2}\sqrt{1-\theta_y^2} - \|G\|.
\end{align*}
\end{lemma}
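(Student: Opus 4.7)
The plan is to peel off the perturbation $G$ by the triangle inequality $\|L_*-G-XY^{\T}\|\ge\|L_*-XY^{\T}\|-\|G\|$, and then to establish the intermediate bound
\[
\|L_*-XY^{\T}\|\ge\sigma_{r*}\max\{\sqrt{1-\theta_x^2}\,\theta_y,\sqrt{1-\theta_y^2}\,\theta_x\}.
\]
Since $\sqrt{1-\theta_x^2},\sqrt{1-\theta_y^2}\in[0,1]$, this intermediate inequality already implies the lemma, as the extra factor $\sqrt{1-\theta_x^2}\sqrt{1-\theta_y^2}$ appearing in the statement only weakens the bound.

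For the intermediate bound I would exploit that $XY^{\T}$ has column space in $\mathcal R(X)$ and row space in $\mathcal R(Y)$. Let $Q_X,Q_Y$ be orthonormal bases of $\mathcal R(X),\mathcal R(Y)$ with orthogonal complements $Q_{Xc},Q_{Yc}$, and set $P_X=Q_XQ_X^{\T}$, $P_Y=Q_YQ_Y^{\T}$. Then $(I-P_X)XY^{\T}=0$, so
\[
(I-P_X)(L_*-XY^{\T})P_Y=(I-P_X)L_*P_Y,
\]
and the contractivity of orthogonal projections gives
\[
\|L_*-XY^{\T}\|\ge\|(I-P_X)L_*P_Y\|=\|Q_{Xc}^{\T}U_*\,\Sigma_*\,V_*^{\T}Q_Y\|.
\]
The mirror identity $XY^{\T}(I-P_Y)=0$ yields the symmetric bound $\|L_*-XY^{\T}\|\ge\|Q_X^{\T}U_*\,\Sigma_*\,V_*^{\T}Q_{Yc}\|$.

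Finally I would lower bound these two matrix norms via the elementary inequality $\|AB\|\ge\|A\|\,\sigma_{\min}(B)$ (valid for any $A$ whenever $B$ is square and invertible), combined with the CS-decomposition identities that the singular values of $Q_{Xc}^{\T}U_*$ are the sines of the canonical angles between $\mathcal R(U_*)$ and $\mathcal R(X)$ (so $\|Q_{Xc}^{\T}U_*\|=\theta_x$), and that the singular values of $V_*^{\T}Q_Y$ are the cosines of the canonical angles between $\mathcal R(V_*)$ and $\mathcal R(Y)$ (so $\sigma_{\min}(V_*^{\T}Q_Y)=\sqrt{1-\theta_y^2}$). Applying the inequality twice, first with $B=V_*^{\T}Q_Y$ and then with the inner square factor $B=\Sigma_*$ (whose smallest singular value is $\sigma_{r*}$), gives
\[
\|Q_{Xc}^{\T}U_*\Sigma_*V_*^{\T}Q_Y\|\ge\sigma_{r*}\,\theta_x\sqrt{1-\theta_y^2};
\]
the symmetric calculation produces the analogous bound with $x$ and $y$ exchanged, and taking the maximum finishes the intermediate bound. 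I do not anticipate a serious obstacle; the only point requiring care is that $X,Y$ need not themselves be orthonormal, but the argument depends only on $\mathcal R(X),\mathcal R(Y)$, so replacing $X,Y$ with orthonormal bases of their column spaces preserves both the canonical angles and the equality $P_XXY^{\T}P_Y=XY^{\T}$ that drives the projection step.
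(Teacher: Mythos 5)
Your proof is correct, and it actually establishes a slightly stronger inequality than the one stated: your intermediate bound $\|L_*-XY^{\T}\|\ge\sigma_{r*}\max\{\sqrt{1-\theta_x^2}\,\theta_y,\sqrt{1-\theta_y^2}\,\theta_x\}$ omits the extra factor $\sqrt{1-\theta_x^2}\sqrt{1-\theta_y^2}\le 1$ that appears in the lemma, so it implies it. The paper's argument is close in spirit but organized differently: it rotates into the coordinates $[U_*,U_{*,c}]^{\T}(\cdot)[V_*,V_{*,c}]$, writes orthonormal bases $\wht{X}=U_*C_x+U_{*,c}S_x$, $\wht{Y}=V_*C_y+V_{*,c}S_y$, substitutes the core matrix $D=\wht{X}^{\T}L_*\wht{Y}$, and then sandwiches the residual by $C_x(\cdot)S_y^{\T}$ and $S_x(\cdot)C_y^{\T}$ --- which is your pair of projections $P_X(\cdot)(I-P_Y)$ and $(I-P_X)(\cdot)P_Y$ in disguise. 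The paper's route, however, passes through $\sigma_{\min}(\wht{X}^{\T}L_*\wht{Y})=\sigma_{\min}(C_x^{\T}\Sigma_*C_y)\ge\sigma_{r*}\sqrt{1-\theta_x^2}\sqrt{1-\theta_y^2}$, which is where the superfluous cosine factors enter; your direct factorization $\|Q_{Xc}^{\T}U_*\,\Sigma_*\,V_*^{\T}Q_Y\|\ge\|Q_{Xc}^{\T}U_*\|\,\sigma_{\min}(\Sigma_*)\,\sigma_{\min}(V_*^{\T}Q_Y)$ and its mirror keep only one cosine per term. All the ingredients you invoke are sound: the CS-decomposition identities $\|Q_{Xc}^{\T}U_*\|=\theta_x$ and $\sigma_{\min}(V_*^{\T}Q_Y)=\sqrt{1-\theta_y^2}$ match the paper's Lemma on $\sin\Theta$; the inequality $\|AB\|\ge\|A\|\,\sigma_{\min}(B)$ for square $B$ needs no invertibility caveat since it is trivial when $B$ is singular; and your remark that only $\mathcal{R}(X)$, $\mathcal{R}(Y)$ matter correctly disposes of the non-orthonormality of $X$, $Y$ (which is also how the stated $\min_{X,Y}$ should be read: the bound holds for every $X$, $Y$ realizing the given canonical angles).
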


\begin{proof}
Let $U_{*,c}$, $V_{*,c}$ be such that $U=[U_*,U_{*,c}]$, $V=[V_*,V_{*,c}]$ are orthogonal.
Let $\wht{X}=U_*C_x+U_{*,c}S_x$, $\wht{Y}=V_*C_y+V_{*,c}S_y$, where the columns of $\wht{X}$, $\wht{Y}$
form the orthonormal basis for $\mathcal{R}(X)$ and $\mathcal{R}(Y)$, respectively,
$C_x^{\T}C_x+S_x^{\T}S_x=I_r$, $C_y^{\T}C_y+S_y^{\T}S_y=I_r$.
By Lemma~\ref{lem:sin}, we know that $\|S_x\|=\theta_x$, $\|S_y\|=\theta_y$.

Noticing that
\begin{align*}
\min_{X,Y}\|L_*-XY^{\T}\|^2
&=\min_D\|U^{\T}L_*V-U^{\T}\wht{X} D\wht{Y}^{\T}V\|^2
=\min_D\left\| \begin{bmatrix} \Sigma_1 & 0  \\ 0 & 0 \end{bmatrix}
-\begin{bmatrix}C_x\\ S_x\end{bmatrix}D [C_y^{\T}, S_y^{\T}]\right\|^2\\
&=\left\| \begin{bmatrix} \Sigma_1 & 0  \\ 0 & 0 \end{bmatrix}
-\begin{bmatrix}C_x\\ S_x\end{bmatrix} [C_x, S_x]^{\T}  \begin{bmatrix} \Sigma_1 & 0  \\ 0 & 0 \end{bmatrix}
\begin{bmatrix}C_y\\ S_y\end{bmatrix}[C_y^{\T}, S_y^{\T}]\right\|^2,
\end{align*}
we have
\begin{align*}
\min_{X,Y}\|L_*-XY^{\T}\|^2
&\ge \max\left\{\left\|C_x [C_x, S_x]^{\T}  \begin{bmatrix} \Sigma_1 & 0  \\ 0 & 0 \end{bmatrix}
\begin{bmatrix}C_y\\ S_y\end{bmatrix}S_y^{\T}\right\|^2
,\left\|S_x [C_x, S_x]^{\T}  \begin{bmatrix} \Sigma_1 & 0  \\ 0 & 0 \end{bmatrix}
\begin{bmatrix}C_y\\ S_y\end{bmatrix}C_y^{\T}\right\|^2\right\}\\
&\ge  \max\{\sigma^2_{\min}(C_x)\|S_y\|^2, \sigma_{\min}^2(C_y)\|S_x\|^2\}
\sigma_{\min}^2\left([C_x, S_x]^{\T}  \begin{bmatrix} \Sigma_1 & 0  \\ 0 & 0 \end{bmatrix}
\begin{bmatrix}C_y\\ S_y\end{bmatrix}\right)\\
&\ge \max\{(1-\theta_x^2)\theta_y^2, (1-\theta_y^2)\theta_x^2\} (\sigma_{r*} \sqrt{1-\theta_x^2}\sqrt{1-\theta_y^2} )^2
\end{align*}
Combining it with the fact that $\|L_*-G-XY^{\T}\|\ge \|L_*-XY^{\T}\| -\|G\|$ for any $X$, $Y$,
we get the conclusion.
\end{proof}

\begin{lemma}\label{lem:angle}
Let $L_*$, $G$ be the same as in Lemma~\ref{lem:lb}.
Let $X=(L_*-G)Y$, where $Y\in\R^{n\times r}$ is orthonormal.
Denote $\theta_x=\|\sin\Theta(U_*,X)\|$, $\theta_y=\|\sin\Theta(V_*,Y)\|$.
If $\|G\|< \sigma_{r*}\sqrt{1-\theta_y^2}$, then
\begin{align*}
\sigma_r(X)\ge \sigma_{r*} \sqrt{1- \theta_y^2} - \|G\|,\qquad
 \theta_x \le  \frac{\|G\|}{ \sigma_r \sqrt{1- \theta_y^2} - \|G\|}.
\end{align*}
\end{lemma}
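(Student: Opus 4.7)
The plan is to establish both bounds by reducing to standard singular-value perturbation facts about $L_* Y$ and then controlling the additive perturbation $GY$.

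First I would write $X = L_* Y - GY = U_*\Sigma_* V_*^{\T} Y - GY$. Let $U_{*,c}$ complete $U_*$ to an orthogonal matrix $[U_*,U_{*,c}]$. The key elementary identity is
\[
U_{*,c}^{\T} L_* Y = U_{*,c}^{\T} U_* \Sigma_* V_*^{\T} Y = 0,
\]
so the ``leakage'' of $X$ out of $\mathcal{R}(U_*)$ comes entirely from $G$:
\[
U_{*,c}^{\T} X = -U_{*,c}^{\T} G Y.
\]

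For the singular-value bound, I would use the fact that the smallest singular value of $V_*^{\T} Y$ equals $\cos\theta_1(V_*,Y) = \sqrt{1-\theta_y^2}$ (since $\theta_y$ is the spectral norm of $\sin\Theta(V_*,Y)$, which is $\sin$ of the \emph{largest} canonical angle). Then $\sigma_r(L_* Y) = \sigma_r(\Sigma_* V_*^{\T} Y) \ge \sigma_{r*}\sqrt{1-\theta_y^2}$, and Weyl's singular-value inequality gives
\[
\sigma_r(X) \ge \sigma_r(L_* Y) - \|GY\| \ge \sigma_{r*}\sqrt{1-\theta_y^2} - \|G\|,
\]
which is positive by hypothesis.

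For the angle bound, I would take a thin QR factorization $X = \widehat{X} R$ with $\widehat{X}\in\R^{m\times r}$ orthonormal and $R\in\R^{r\times r}$ upper triangular; because $X$ has full column rank (by the previous step), $R$ is invertible with $\sigma_{\min}(R) = \sigma_r(X)$. By Lemma~\ref{lem:sin}, $\theta_x = \|U_{*,c}^{\T}\widehat{X}\|$, and
\[
U_{*,c}^{\T}\widehat{X} = U_{*,c}^{\T} X R^{-1} = -U_{*,c}^{\T} G Y R^{-1},
\]
so $\theta_x \le \|G\| \cdot \|R^{-1}\| = \|G\|/\sigma_r(X)$. Substituting the lower bound from the first step yields the claimed inequality. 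The proof is essentially a QR-based Davis--Kahan style argument; no real obstacle arises beyond correctly identifying $\sqrt{1-\theta_y^2}$ as $\sigma_{\min}(V_*^{\T} Y)$ and using that $L_*$ annihilates $U_{*,c}$ on the left.
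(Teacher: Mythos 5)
Your proof is correct and follows essentially the same route as the paper: lower-bound $\sigma_r(X)$ via $\sigma_r(L_*Y)\ge\sigma_{r*}\sigma_{\min}(V_*^{\T}Y)=\sigma_{r*}\sqrt{1-\theta_y^2}$ plus Weyl, then bound $\theta_x=\|U_{*,c}^{\T}\widehat{X}\|\le\|G\|/\sigma_r(X)$ using that $U_{*,c}^{\T}L_*=0$. The only cosmetic difference is that you orthonormalize $X$ by QR while the paper uses $\widehat{X}=X(X^{\T}X)^{-1/2}$; both normalizing factors have spectral norm $1/\sigma_r(X)$, so the arguments coincide.
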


\begin{proof}
By Lemma~\ref{lem:eig} and Lemma~\ref{lem:sin}, we have
\begin{align}
\sigma_r(X)&=\sigma_r((L_*-G)Y)
\ge \sigma_r(L_*Y)-\|GY\|
\ge \sigma_r(\Sigma_*V_*^{\T}Y)-\|G\|
\ge \sigma_{r_*}\sigma_{\min}(V_*^{\T}Y)-\|G\|\notag\\
&= \sigma_{r*}\sigma_{\min}^{\frac12}(Y^{\T}V_*V_*^{\T}Y)-\|G\|
\ge \sigma_{r*}\sigma_{\min}^{\frac{1}{2}}(I_r- Y^{\T}(I-V_*V_*^{\T})Y)-\|G\|\notag\\
&=\sigma_{r*}\sqrt{1- \|(I-V_*V_*^{\T})Y\|^2}-\|G\|
=\sigma_{r*}\sqrt{1- \theta_y^2}-\|G\|
>0.\label{bnd:G}
\end{align}
Therefore, $X$ has full column rank.
Denote $G_x=(X^{\T}X)^{-\frac12}$, $\wht{X}=XG_x$.
Then $\wht{X}$ and $X=AY$ can be rewritten as $\wht{X}=AYG_x$.
Using Lemma~\ref{lem:sin} and \eqref{bnd:G}, we have
\begin{align*}
\|\theta_x\|
&=\|U_{*,c}^{\T}\wht{X}\|
=\|U_{*,c}^{\T}(L_*-G)YG_x\|
\le\|GYG_x\|
\le  \|G\|\|G_x\|
\le\frac{\|G\|}{\sigma_r(X)}\le \frac{\|G\|}{\sigma_{r*}\sqrt{1- \theta_y^2}-\|G\|}.
\end{align*}
The proof is completed.
\end{proof}

\begin{lemma}\label{lem:2inf}
Let $U$, $X\in\R^{m\times r}$ both have orthonormal columns.
It holds $\|X\|_{2,\infty}\le \|U\|_{2,\infty}+\|\sin\Theta(U,X)\|$.
\end{lemma}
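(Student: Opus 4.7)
The plan is to use the orthogonal decomposition of $X$ along and against the column span of $U$, and then bound each piece rowwise. Let $U_c\in\R^{m\times(m-r)}$ be a matrix whose columns complete those of $U$ to an orthonormal basis of $\R^m$. Then $I_m=UU^{\T}+U_cU_c^{\T}$, and for every index $i$,
\begin{equation*}
e_i^{\T}X=e_i^{\T}UU^{\T}X+e_i^{\T}U_cU_c^{\T}X.
\end{equation*}
Taking the 2-norm and applying the triangle inequality gives $\|e_i^{\T}X\|\le\|e_i^{\T}UU^{\T}X\|+\|e_i^{\T}U_cU_c^{\T}X\|$.

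Next I would bound each term separately. For the first term, I would use $\|v^{\T}A\|=\|A^{\T}v\|\le\|A\|\,\|v\|$ together with the fact that $U^{\T}X$ has spectral norm at most $1$ (being a submatrix of an orthogonal product), to obtain
\begin{equation*}
\|e_i^{\T}UU^{\T}X\|=\|X^{\T}UU^{\T}e_i\|\le\|X^{\T}U\|\,\|U^{\T}e_i\|\le\|e_i^{\T}U\|\le\|U\|_{2,\infty}.
\end{equation*}
For the second term, I would simply bound
\begin{equation*}
\|e_i^{\T}U_cU_c^{\T}X\|=\|(U_cU_c^{\T}X)^{\T}e_i\|\le\|U_cU_c^{\T}X\|=\|U_c^{\T}X\|,
\end{equation*}
and then invoke Lemma~\ref{lem:sin} to identify $\|U_c^{\T}X\|$ with $\|\sin\Theta(U,X)\|$.

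Combining the two pointwise bounds and taking the maximum over $i$ yields the claimed inequality $\|X\|_{2,\infty}\le\|U\|_{2,\infty}+\|\sin\Theta(U,X)\|$. There is no real obstacle here: the whole argument is just the orthogonal decomposition together with the standard identity $\|\sin\Theta(U,X)\|=\|U_c^{\T}X\|$ already recorded in Lemma~\ref{lem:sin}, and the (trivial) submultiplicativity estimate $\|X^{\T}U\|\le 1$. The only thing worth noting is that the factor $\|U^{\T}X\|\le 1$ on the first term is what allows the bound to collapse to the clean form $\|U\|_{2,\infty}$ without extra multiplicative overhead.
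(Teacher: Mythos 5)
Your proof is correct and follows essentially the same route as the paper: the paper writes $X=UC_x+U_cS_x$ with $C_x=U^{\T}X$, $S_x=U_c^{\T}X$, bounds $\|e_i^{\T}UC_x\|\le\|e_i^{\T}U\|$ and $\|e_i^{\T}U_cS_x\|\le\|S_x\|$, and invokes Lemma~\ref{lem:sin} to identify $\|S_x\|$ with $\|\sin\Theta(U,X)\|$, exactly as you do. The only cosmetic difference is that you phrase the decomposition via the resolution of identity $I_m=UU^{\T}+U_cU_c^{\T}$ rather than via explicit coefficient matrices.
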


\begin{proof}
Let $U_c$ be such that $[U,U_c]$ is an orthogonal matrix.
We can write $X=UC_x+U_c S_x$, where $C_x^{\T}C_x+S_x^{\T}S_x=I_r$.
By Lemma~\ref{lem:sin}, we have $\|\sin\Theta(U,X)\|=\|U_c^{\T}X\|=\|S_x\|$.
Then for any $1\le i\le m$, we have
\begin{align*}
\|e_i^{\T}X\|=\|e_i^{\T}UC_x+e_i^{\T}U_cS_x\|\le \|e_i^{\T}U\|+\|S_x\|,
\end{align*}
the conclusion follows.
\end{proof}

 \begin{lemma}\label{lem:prob}\citep[Lemmas 8,10]{Proc:Jain_COLT15} 
Let $A\in\R^{m\times n}$ with $m\ge n$. Suppose $\Omega$ is obtained by sampling each entry of $A$ with probability $p\in [\frac{1}{4m}, 0.5]$.
Then w.p. $\ge 1-1/m^{10+\log \alpha}$,
\begin{align*}
\|\frac1p \PO(A)-A\|\le \frac{6\sqrt{\alpha m}}{\sqrt{p}} \|A\|_{\max}.
\end{align*}
\end{lemma}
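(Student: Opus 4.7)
The plan is to realize $\frac{1}{p}\PO(A)-A$ as a sum of independent, mean-zero, rank-one random matrices and invoke the matrix Bernstein inequality stated in Lemma~\ref{lem:bernstein}. Concretely, let $\delta_{ij}$ be i.i.d.\ Bernoulli($p$) so that $[\PO(A)]_{ij}=\delta_{ij}A_{ij}$, and write
\begin{align*}
Z \;:=\; \frac{1}{p}\PO(A)-A \;=\; \sum_{i,j} S_{ij}, \qquad S_{ij} := \Bigl(\frac{\delta_{ij}}{p}-1\Bigr) A_{ij}\, e_i e_j^{\T}.
\end{align*}
Then $\mathbb{E}[S_{ij}]=0$ and, since $p\le 1/2$, the scalar $|\delta_{ij}/p-1|$ is bounded by $1/p$, so $\|S_{ij}\|\le \|A\|_{\max}/p =: L$. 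This sets up Lemma~\ref{lem:bernstein} with $d_1=m$, $d_2=n$.

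Next I would compute the variance statistic $v(Z)$. Direct evaluation gives $\mathbb{E}[S_{ij}S_{ij}^{\T}] = \frac{1-p}{p} A_{ij}^2 e_i e_i^{\T}$ and $\mathbb{E}[S_{ij}^{\T}S_{ij}] = \frac{1-p}{p} A_{ij}^2 e_j e_j^{\T}$, so the two moment sums are diagonal matrices with entries $\frac{1-p}{p}\|A_{(i,:)}\|^2$ and $\frac{1-p}{p}\|A_{(:,j)}\|^2$ respectively. Their spectral norms are bounded by $\frac{1-p}{p}\,n\|A\|_{\max}^2$ and $\frac{1-p}{p}\,m\|A\|_{\max}^2$, and because $m\ge n$ we may take $v(Z)\le \tfrac{m}{p}\|A\|_{\max}^2$.

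Plugging into Lemma~\ref{lem:bernstein} with the target deviation $t=\frac{6\sqrt{\alpha m}}{\sqrt{p}}\|A\|_{\max}$, the Bernstein exponent becomes
\begin{align*}
\frac{t^2/2}{v(Z)+Lt/3} \;\ge\; \frac{18\alpha\, m\|A\|_{\max}^2/p}{\frac{m}{p}\|A\|_{\max}^2 + \frac{2\sqrt{\alpha m}}{p^{3/2}}\|A\|_{\max}^2} \;=\; \frac{18\alpha}{1 + 2\sqrt{\alpha}/\sqrt{pm}} \;\ge\; \frac{18\alpha}{1 + 4\sqrt{\alpha}},
\end{align*}
where in the last step I would use the standing assumption $p\ge 1/(4m)$, which gives $\sqrt{pm}\ge 1/2$. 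Multiplying by the prefactor $d_1+d_2\le 2m$ and taking logarithms, it remains to verify that this exponent exceeds $\log(2m)+(10+\log\alpha)\log m$, which yields the stated tail $1-1/m^{10+\log\alpha}$.

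The main obstacle is really just bookkeeping of constants: the inequality is a textbook application of matrix Bernstein, but I must (a) correctly symmetrize and take the max of the two asymmetric moment sums to see why the hypothesis $m\ge n$ enters, (b) carefully use both endpoints $p\le 1/2$ (to control $L$) and $p\ge 1/(4m)$ (to absorb the linear term into the variance term), and (c) confirm that the prescribed constant $6$ is large enough to absorb the $\log(m+n)$ prefactor coming out of Lemma~\ref{lem:bernstein} for the stated form $1-1/m^{10+\log\alpha}$ of the probability.
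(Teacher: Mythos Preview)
The paper does not prove this lemma at all: it is imported wholesale from \citep[Lemmas~8,10]{Proc:Jain_COLT15}, so there is nothing to compare against. Your plan---write $\frac{1}{p}\PO(A)-A$ as a sum of independent mean-zero rank-one matrices $(\delta_{ij}/p-1)A_{ij}e_ie_j^{\T}$, bound $L$ and the two diagonal variance sums, and invoke the matrix Bernstein inequality of Lemma~\ref{lem:bernstein}---is exactly the standard proof of this type of entrywise-sampling concentration result and is the argument used in the cited source. The steps (a)--(c) you list are the right checkpoints.

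One cautionary remark on step (c): as you set it up, the Bernstein exponent you obtain is $\frac{18\alpha}{1+4\sqrt{\alpha}}$, which is a function of $\alpha$ alone, whereas the target $\log(2m)+(10+\log\alpha)\log m$ grows with $m$. So the inequality cannot hold for all $m$ unless $\alpha$ is allowed to scale with $m$ (in the cited paper $\alpha$ is a parameter tied to $\log m$ through the sampling rate). The statement here leaves $\alpha$ undefined, so when you carry out the bookkeeping you will need to either impose the implicit hypothesis $\alpha\gtrsim\log^2 m$ (so that the variance term dominates and the exponent scales like $18\alpha$) or, equivalently, strengthen the lower bound on $p$ beyond $p\ge 1/(4m)$. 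This is a feature of the cited lemma rather than a flaw in your argument, but be aware that the constant ``$6$'' alone will not close the gap without such an assumption.
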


\section{Proof for Main Theorems}

\subsection{Proof of Theorem 1}
\noindent{\bf Proof of Theorem~1.}
    First, it holds
    $\|(I-U_*U_*^{\T})M(I-V_*V_*^{\T})\|=\|(I-U_*U_*^{\T})S_*(I-V_*V_*^{\T})\|$.
    Then by assumption, we have $\|(I-U_*U_*^{\T})M(I-V_*V_*^{\T})\|<{\sigma}_{r*}$.

    Second, we have
    \begin{align*}
    \|E\|&=\|MV_*-U_*\Sigma_*\|
    =\|L_*V_*-U_*\Sigma_*+S_*V_*\|
    =\|S_*V_*\|,\\
    \|F\|&=\|M^{\T}U_*-V_*\Sigma_*\|
    =\|L_*^{\T}U_*-V_*\Sigma_*+S_*^{\T}U_*\|
    =\|S_*^{\T}U_*\|.
    \end{align*}
    It follows
    \[
    \max\{\|E\|,\|F\|\}=\max \{\|S_*V_*\|, \|S_*^{\T}U_*\|\}<\sigma_r-\sigma_{r+1}.
    \]
    Then applying Lemma~\ref{lem:errbound} gives the conclusion.
\qquad \hfill $\square$

\subsection{Proof of Theorem 2}
Throughout the rest of this section, we follow the notations in Algorithm~1.
Besides that, we will also adopt the following notations.
Denote
\begin{align}
r=\rank(L_*),\qquad \kappa_*=\kappa_2(L_*), \qquad p'=p(1-\varrho), \qquad \Omega_t=\Omega/\supp(S_t), \qquad G_t=S_t-S_*.
\end{align}
The SVDs of $L_*$ is given by
\begin{align}
L_*&=[U_*,U_{*,c}]\diag(\Sigma_*,0)[V_*,V_{*,c}]^{\T},
\end{align}
where $[U_*,U_{*,c}]$ and $[V_*,V_{*,c}]$ are orthogonal matrices
$U_*\in\R^{m\times r}$ and $V_*\in\R^{n\times r}$, $\Sigma_*=\diag(\sigma_{1*},\dots,\sigma_{r*})$ with $\sigma_{1*}\ge \dots\ge \sigma_{r*}>0$.
Further denote
\begin{align}
\theta_{x,t}=\|\sin\Theta(U_*,X_t)\|,\qquad
\theta_{y,t}=\|\sin\Theta(V_*,Y_t)\|.
\end{align}

 \begin{lemma}\label{lem:ss}
$\|S_t-S_*\|_{\max}\le 2\|\PO(X_t\Sigma_tY_t^{\T}-L_*)\|_{\max}$ for $t=0,1,\dots$.
\end{lemma}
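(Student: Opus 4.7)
The plan is to do a case analysis on each entry $(i,j)$ according to whether it lies in $\supp(S_t)$, $\supp(S_*)$, both, or neither. First I would set up the key decomposition: since $M = L_* + S_*$ and $R_t = \PO(M - X_t\Sigma_t Y_t^{\T})$, and since $\supp(S_*)\subseteq\Omega$ (the standing RMC convention, so that corruption lands only on observed entries), we have
\[
R_t \;=\; \PO(S_*) \;+\; E_t, \qquad E_t \;=\; \PO(L_* - X_t\Sigma_t Y_t^{\T}),
\]
and I will abbreviate $\eta := \|E_t\|_{\max} = \|\PO(X_t\Sigma_t Y_t^{\T}-L_*)\|_{\max}$, which is exactly the quantity on the right-hand side of the claim.

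The three easy cases are handled uniformly. If $(i,j)\notin \supp(S_t)\cup\supp(S_*)$, both entries are zero and there is nothing to show. If $(i,j)\in\supp(S_t)\cap\supp(S_*)$, then $S_t(i,j) = R_t(i,j) = S_*(i,j)+E_t(i,j)$, so $|S_t(i,j)-S_*(i,j)|=|E_t(i,j)|\le\eta$. If $(i,j)\in\supp(S_t)\setminus\supp(S_*)$, then $S_*(i,j)=0$ and $S_t(i,j)=R_t(i,j)=E_t(i,j)$, again bounded by $\eta$.

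The only delicate case, and where the factor of $2$ appears, is $(i,j)\in\supp(S_*)\setminus\supp(S_t)$. Here the plan is to exploit the definition of $\mathcal{T}_s$: since $\supp(S_t)$ consists of the top-$s$ magnitudes of $R_t$ and $(i,j)$ was not selected, there are at least $s$ index pairs with $|R_t|\ge |R_t(i,j)|$. Invoking the sparsity assumption $|\supp(S_*)|\le s$ (which is how the parameter $s$ is chosen), a pigeonhole argument produces a witness $(i',j')\in\supp(S_t)\setminus\supp(S_*)$ with $|R_t(i',j')|\ge|R_t(i,j)|$. Since $(i',j')\notin\supp(S_*)$, one has $R_t(i',j')=E_t(i',j')$, hence $|R_t(i',j')|\le\eta$; and on the other side, $R_t(i,j)=S_*(i,j)+E_t(i,j)$ gives $|S_*(i,j)|\le |R_t(i,j)|+\eta\le |R_t(i',j')|+\eta\le 2\eta$. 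Taking the maximum over all $(i,j)$ yields the claim.

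The only nontrivial obstacle is the pigeonhole step, which relies on the mild but essential input-parameter assumption $|\supp(S_*)|\le s$ (stated implicitly by calling $s$ the sparsity level). Under assumption \textbf{(A2)} this is automatic because $|\supp(S_*)|\le \varrho mn\le s$ for the chosen $s$; otherwise the lemma would need $s$ to be chosen no smaller than the true sparsity. Once that is in place the whole argument is just two lines per case.
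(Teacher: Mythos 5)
Your proof is correct and follows essentially the same route as the paper's: a case split on membership in $\supp(S_t)$ versus $\supp(S_*)$, with the easy cases giving a bound of $\eta$ and the hard case $(i,j)\in\supp(S_*)\setminus\supp(S_t)$ handled by the counting fact that $S_*$ corrupts at most $s$ observed entries while $\mathcal{T}_s$ keeps the top-$s$ magnitudes of $R_t$. The paper phrases that last step as a contradiction via the $(s+1)$st order statistic of $|R_t|$ rather than exhibiting an explicit witness $(i',j')\in\supp(S_t)\setminus\supp(S_*)$, but the underlying pigeonhole is identical, and you correctly flag the implicit hypotheses ($\supp(S_*)\subseteq\Omega$ and $|\supp(S_*)|\le s$) that the paper also uses without comment.
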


\begin{proof}
Denote  $\Phi_*=\supp(S_*)$, $\Phi_t=\supp(S_t)$, it is obvious that $S_t-S_*$ is supported on $\Phi_t\cup\Phi_*$
and $\Phi_t\cup\Phi_*\subset \Omega$.
Now we claim that
\[
\|\PO(S_t-S_*)\|_{\max}\le 2\|\PO(X_t\Sigma_tY_t^{\T}-L_*)\|_{\max}.
\]
To show the claim, it suffices to consider the following two cases.

{\bf Case (1)} For any $(i,j)\in \Phi_t$, it holds $(S_t)_{(i,j)}=(L_*+S_*-X_t\Sigma_tY_t^{\T})_{(i,j)}$.
Then it follows that
\begin{align*}
|(S_t-S_*)_{(i,j)}|
=|(L_*-X_t\Sigma_tY_t^{\T})_{(i,j)}|
\le \|\PO(X_t\Sigma_tY_t^{\T}-L_*)\|_{\max}.
\end{align*}

{\bf Case (2)} For any  $(i,j)\in\Phi_* \setminus \Phi_t$, it holds $(S_t)_{(i,j)}=0$.
If $|(S_t-S_*)_{(i,j)}| = |(S_*)_{(i,j)}|>2\|\PO(X_t\Sigma_tY_t^{\T}-L_*)\|_{\max}$, then
\[
|(L_*+S_*-X_t\Sigma_tY_t^{\T})_{(i,j)}|> \|\PO(L_* - X_t\Sigma_tY_t^{\T})\|_{\max}.
\]
Noticing that $S_*$ only changes $s$ entries of $\PO(L_*-X_t\Sigma_tY_t^{\T})$,
we know that the $(i,j)$ entry of $|\PO(L_*+S_*-X_t\Sigma_tY_t^{\T})|$ is larger than
the $(s+1)$st largest entry of $|\PO(L_*+S_*-X_t\Sigma_tY_t^{\T})|$.
This contradicts with $(i,j)\notin\Phi_t$.
\end{proof}



\begin{lemma}\label{lem:s0}
Assume {\bf (A1)}.
Denote $r_s' = \frac{\|S_0-S_*\|_F^2}{\|S_0-S_*\|^2}$.
Let $S_0$ be obtained as in Algorithm~1.
It holds
\begin{align*}
\|S_0-S_*\|\le 2\sqrt{\frac{2\varrho p}{r_s'}} \mu r \|L_*\|.
\end{align*}
\end{lemma}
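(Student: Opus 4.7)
The plan is to reduce the spectral norm to the Frobenius norm via the definition of $r_s'$, then split the Frobenius-norm bound into an entrywise bound and a support-size bound. By definition $r_s' = \|S_0-S_*\|_F^2/\|S_0-S_*\|^2$, we have $\|S_0-S_*\| = \|S_0-S_*\|_F/\sqrt{r_s'}$, so the target reduces to showing
\[
\|S_0-S_*\|_F \le 2\sqrt{2 p\varrho}\,\mu r\,\|L_*\|.
\]

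For the entrywise bound, I would apply Lemma~\ref{lem:ss} at $t=0$. Since $X_0=Y_0=0$ and $\Sigma_0=0$ in Algorithm~\ref{alg:rmc}, the lemma gives
\[
\|S_0-S_*\|_{\max}\le 2\|\PO(L_*)\|_{\max}\le 2\|L_*\|_{\max}.
\]
The column and row incoherence conditions in {\bf (A1)} then yield
\[
|(L_*)_{ij}|=|e_i^{\T}U_*\Sigma_*V_*^{\T}e_j|\le \|U_*^{\T}e_i\|\,\|L_*\|\,\|V_*^{\T}e_j\|\le \frac{\mu r}{\sqrt{mn}}\|L_*\|,
\]
so $\|S_0-S_*\|_{\max}\le \frac{2\mu r}{\sqrt{mn}}\|L_*\|$.

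For the support-size bound, $S_0=\mathcal{T}_s(\PO(M))$ satisfies $\supp(S_0)\subset\Omega$ with $|\supp(S_0)|\le s$, while (as in the proof of Lemma~\ref{lem:ss}) the corruption that actually enters the algorithm is $S_*$ restricted to $\Omega$. Under {\bf (A2)} the full matrix $S_*$ has at most $\varrho mn$ nonzeros, and combining this with the independent sampling in {\bf (A3)} via a standard Chernoff argument gives $|\supp(S_*)\cap\Omega|\le p\varrho mn$ with high probability. With the sparsity parameter chosen as $s\le p\varrho mn$, this yields
\[
|\supp(S_0-S_*)|\le |\supp(S_0)|+|\supp(S_*)\cap\Omega|\le 2 p\varrho mn.
\]

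Combining the previous two estimates,
\[
\|S_0-S_*\|_F^2\le |\supp(S_0-S_*)|\cdot\|S_0-S_*\|_{\max}^2\le 2 p\varrho mn\cdot\frac{4\mu^2 r^2}{mn}\|L_*\|^2=8 p\varrho\,\mu^2 r^2\|L_*\|^2,
\]
and dividing by $\sqrt{r_s'}$ produces the claimed bound. The main obstacle is the support-size step: one must justify treating $S_*$ as effectively restricted to $\Omega$ (matching the convention used implicitly in Lemma~\ref{lem:ss}) and combine the deterministic row/column sparsity from {\bf (A2)} with the random sampling from {\bf (A3)} to obtain the sharp $p\varrho mn$ bound, rather than the weaker deterministic $\varrho mn$ bound that would replace $\sqrt{p\varrho}$ with $\sqrt{\varrho}$ in the final inequality.
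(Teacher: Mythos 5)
Your proposal is correct and follows essentially the same route as the paper's proof: reduce to the Frobenius norm via $r_s'$, bound $\|S_0-S_*\|_{\max}$ by $2\|\PO(L_*)\|_{\max}$ using Lemma~\ref{lem:ss} and the incoherence bound $\|L_*\|_{\max}\le \frac{\mu r}{\sqrt{mn}}\|L_*\|$, and then count the support of $S_0-S_*$ to pass to $\|\cdot\|_F$. The paper compresses the support-counting step into $\|S_0-S_*\|_F\le\sqrt{2s}\,\|S_0-S_*\|_{\max}$ followed by the implicit identification $s\approx \varrho p\, mn$; your explicit justification of the $2p\varrho mn$ support bound (restricting $S_*$ to $\Omega$ and invoking the sampling) is exactly the gap the paper leaves unstated.
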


\begin{proof}

First, for any $i,j$, we have $L_{ij}=e_i^{\T}U_*\Sigma_*V_*^{\T}e_j$.
Using {\bf (A1)}, we have
\[
|L_{ij}| \le \|e_i^{\T}U_*\| \|\Sigma_*\| \|e_j^{\T}V_*\|\le \frac{\mu r}{\sqrt{mn}}\|L_*\|,
\]
and hence
\begin{align}\label{ll}
\|L_*\|_{\max}\le  \frac{\mu r}{\sqrt{mn}}\|L_*\|.
\end{align}
By Lemma~\ref{lem:ss}, we have
\begin{align}\label{s0s1}
\|S_0-S_*\|_{\max}\le 2\|\PO(L_*)\|_{\max}\le \frac{2 \mu r}{\sqrt{mn}}\|L_*\|.
\end{align}
Therefore, using \eqref{ll}, \eqref{s0s1} and {\bf (A2)}, we have
\begin{align}\label{s0s}
\|S_0-S_*\|_F
\le \sqrt{2s} \|S_0-S_*\|_{\max}
\le 2\sqrt{2s} \|\PO(L_*)\|_{\max}
\le 2\sqrt{2s} \|L_*\|_{\max}
\le 2\sqrt{2\varrho p} \mu r \|L_*\|.
\end{align}
By the definition of $r_s'$, it follows that
\begin{align*}
\|S_0-S_*\|&\le\frac{\|S_0-S_*\|_F}{\sqrt{r_s'}}\le 2\sqrt{\frac{2\varrho p}{r_s'}} \mu r \|L_*\|.
\end{align*}
The proof is completed.
\end{proof}

\noindent{\bf Proof of Theorem~2.}
By {\bf (A3)}, Lemma~\ref{lem:prob} and \eqref{ll}, w.p. $\ge 1-1/m^{10+\log \alpha}$, it holds
\begin{align}\label{poll}
\|\frac{1}{p'} \PT{0}(L_*) - L_*\| \le \frac{6\sqrt{\alpha m}}{\sqrt{p'}} \|L_*\|_{\max}
\le \xi \mu r \|L_*\|.
\end{align}

Using Lemma~\ref{lem:s0} and \eqref{poll}, we have w.p. $\ge 1-1/m^{10+\log \alpha}$,
\begin{align}\label{poms}
\|\frac{1}{p'} \PT{0}(M-S_0)  - L_*\|
\le \|\frac{1}{p'}\PT{0}(L_*)-L_*\| + \frac{1}{p'} \|S_0-S_*\|
\le (\xi+\gamma)\mu r\|L_*\|.
\end{align}

Let the SVD of $X_1^{\T}L_*Y_1$ be $X_1^{\T}L_*Y_1=P\wtd{\Sigma}Q^{\T}$,
where $P$, $Q$ are orthogonal matrices, $\wtd{\Sigma}=\diag(\tilde{\sigma}_1,\dots,\tilde{\sigma}_r)$.
Denote $\wtd{U}=X_1P$, $\wtd{V}=Y_1Q$, and let
\begin{align}
E =L_* \wtd{V} - \wtd{U} \wtd{\Sigma},\qquad F= L_*^{\T} \wtd{U} - \wtd{V} \wtd{\Sigma}.
\end{align}
Then it follows that
\begin{align}\label{sigsig}
\|\Sigma_1-X_1^{\T}L_*Y_1\|
=\|X_1^{\T}[\frac{1}{p'} \PT{0}(M-S_0) -L_*]Y_1\|
\le\| \frac{1}{p'} \PT{0}(M-S_0) -L_*\|.
\end{align}

Using \eqref{poms} and \eqref{sigsig}, by calculations, we get
\begin{align}\label{e}
\|E\|&=\|L_* \wtd{V} - \wtd{U} \wtd{\Sigma}\|=\|L_* Y_1 - X_1 P \wtd{\Sigma}Q^{\T}\|
=\|L_*Y_1 - X_1 X_1^{\T}L_*Y_1\|\notag\\
&\le  \|L_*Y_1 - X_1 {\Sigma}_1\|+ \|X_1({\Sigma}_1- X_1^{\T}L_*Y_1)\|\notag\\
& = \|L_*Y_1 - \frac{1}{p'} \PT{0}(M-S_0) Y_1  \| + \|{\Sigma}_1- X_1^{\T}L_*Y_1\|\notag\\
&\le 2\|\frac{1}{p'} \PT{0}(M-S_0)  - L_*\|
\le 2(\xi+\gamma)\mu r\|L_*\|,
\quad \mbox{w.p. } \ge 1-1/m^{10+\log \alpha}.
\end{align}
Similarly, we get
\begin{align}\label{f}
\|F\|\le 2(\xi+\gamma)\mu r\|L_*\|, \quad \mbox{w.p. } \ge 1-1/m^{10+\log \alpha}.
\end{align}

Next, we only need to show $\max\{\|E\|,\|F\|\}\le \sigma_{r*}$ and
$\|(I_m-\wtd{U}\wtd{U}^{\T})L_*(I_n-\wtd{V}\wtd{V}^{\T})\|<\tilde{\sigma}_{r}$.
Once these two inequalities hold, we may apply Lemma~\ref{lem:errbound}.

For the first inequality, using \eqref{e}, \eqref{f} and the assumption $(\xi+\gamma)\mu \kappa r<\frac16$, we get
\begin{align}\label{ef-con}
\max\{\|E\|,\|F\|\}
\le 2( \xi  + \gamma )\mu r\|L_*\| < \sigma_{r*}, \quad \mbox{w.p. } \ge 1-1/m^{10+\log \alpha}.
\end{align}

For the second inequality, using \eqref{s0s} and \eqref{poll}, we have
\begin{align}\label{pomsl}
\|\frac{1}{p'}\PO(M-S_0)-L_*\|\le \|\frac{1}{p'}\PO(L_*)-L_*\|+\frac{1}{p'}\|S_*-S_0\|\le (\xi+\gamma)\mu r \|L_*\|.
\end{align}
Then using Lemma~\ref{lem:eig}, \eqref{poms},\eqref{sigsig} and \eqref{pomsl}, we have
\begin{align*}
|\tilde{\sigma}_r - \sigma_{r*}|
\le |\tilde{\sigma}_r - \hat{\gamma}_{r,0}|+|\hat{\gamma}_{r,0} - \sigma_{r*}|
\le \|X_1^{\T}L_*Y_1 - {\Sigma}_1\| +\|\frac{1}{p'} \PT{0}(M-S_0)  - L_*\|
 \le 2(\xi + \gamma)\mu r \|L_*\|.
\end{align*}
It follows that
\begin{align}\label{sigr2}
\tilde{\sigma}_r\ge \sigma_{r*} - 2(\xi + \gamma)\mu r \|L_*\|.
\end{align}

Then using the assumption $(\xi+\gamma)\mu \kappa r<\frac16$, \eqref{poms} and \eqref{sigr2}, we have
\begin{align*}
&\|(I_m-\wtd{U}\wtd{U}^{\T})L_*(I_n-\wtd{V}\wtd{V}^{\T})\|
=\|(I_m-X_1X_1^{\T})[L_*- \frac{1}{p'} \PT{0}(M-S_0)](I_n-Y_1Y_1^{\T})\|\\
\le &\|L_*- \frac{1}{p'} \PT{0}(M-S_0)\|
\le (\xi+\gamma)\mu r \|L_*\|
<\sigma_{r*} - 2(\xi + \gamma)\mu r \|L_*\|
\le \tilde{\sigma}_{r}.
\end{align*}

Now using \eqref{e}, \eqref{f}, the assumption $(\xi+\gamma)\mu \kappa r<\frac16$ and Lemma~\ref{lem:errbound},
we have
\begin{subequations}
\begin{align}
&\max\{\theta_{x,1},\theta_{y,1}\}
=\max\{\|\sin\Theta(U_*,\wtd{U})\|,\|\sin\Theta(V_*,\wtd{V})\|\}
\le \frac{2(\xi+\gamma)\mu r \kappa}{1-1/3}=3(\xi+\gamma)\mu r \kappa,\label{thetaxy1}\\
&\|L_*-\wtd{U}\wtd{\Sigma}\wtd{V}^{\T}\|_{\max}/\|L_*\|
\le (\|U_*\|_{2,\infty}\theta_{y,1} + \|V_*\|_{2,\infty}\theta_{x,1}) +  (1+3\|U_*\|_{2,\infty} \|V_*\|_{2,\infty})\theta_{x,1}\theta_{y,1}.\label{lusv}
\end{align}
\end{subequations}

Using the assumption $(\xi+\gamma)\mu \kappa r<\frac13\sqrt{\frac{\mu_1'r}{m}}$, by \eqref{thetaxy1}, we have $\max\{\theta_{x,1},\theta_{y,1}\}\le\sqrt{\frac{\mu_1' r}{m}}$.
On the other hand, assumption {\bf (A1)} implies that
\begin{align}\label{uvinf}
\|U_*\|_{2,\infty}\le \sqrt{\frac{\mu r}{m}}, \qquad \|V_*\|_{2,\infty}\le \sqrt{\frac{\mu r}{n}}.
\end{align}
Then it follows from Lemma~\ref{lem:2inf} that
\begin{subequations}\label{x1y1}
\begin{align}
\|X_1\|_{2,\infty}&\le \|U_*\|_{2,\infty}+\|\sin\Theta(X_1,U_*)\|\le \sqrt{\frac{\mu r}{m}}+ \sqrt{\frac{\mu_1' r}{m}}\le \sqrt{\frac{\mu_1 r}{m}},\\
\|Y_1\|_{2,\infty}&\le \|V_*\|_{2,\infty}+\|\sin\Theta(Y_1,V_*)\|\le \sqrt{\frac{\mu r}{n}}+ \sqrt{\frac{\mu_1' r}{m}}\le \sqrt{\frac{\mu_1 r}{n}}.
\end{align}
\end{subequations}

Using the assumption $(\xi+\gamma)\mu \kappa r<\frac13\sqrt{\frac{\mu_1'r}{m}}$, \eqref{poms}, \eqref{sigsig}, \eqref{lusv}, \eqref{uvinf} and \eqref{x1y1}, by calculations, we have
\begin{align*}
\|L_*-X_1\Sigma_1Y_1^{\T}\|_{\max}/\|L_*\|
\le &\|L_*-\wtd{U}\wtd{\Sigma}\wtd{V}^{\T}\|_{\max}/\|L_*\| + \|\wtd{U}\wtd{\Sigma}\wtd{V}^{\T}-X_1\Sigma_1Y_1^{\T}\|_{\max}/\|L_*\|\\
= &\|L_*-\wtd{U}\wtd{\Sigma}\wtd{V}^{\T}\|_{\max}/\|L_*\| + \|X_1(X_1^{\T}L_*Y_1 -\Sigma_1)Y_1^{\T}\|_{\max}/\|L_*\|\\
\le & \|L_*-\wtd{U}\wtd{\Sigma}\wtd{V}^{\T}\|_{\max}/\|L_*\| + \|X_1\|_{2,\infty}\|X_1^{\T}L_*Y_1 -\Sigma_1\| \|Y_1\|_{2,\infty}/\|L_*\|\\
\le &\|L_*-\wtd{U}\wtd{\Sigma}\wtd{V}^{\T}\|_{\max}/\|L_*\| + \|X_1\|_{2,\infty}\|Y_1\|_{2,\infty} (\xi+\gamma)\mu r \kappa,\\
\le & (\|U_*\|_{2,\infty}\theta_{y,1} + \|V_*\|_{2,\infty}\theta_{x,1}) +  (1+3\|U_*\|_{2,\infty} \|V_*\|_{2,\infty})\theta_{x,1}\theta_{y,1} \\
&\mbox{}\hskip1.72in+ \|X_1\|_{2,\infty}\|Y_1\|_{2,\infty}\frac13\sqrt{\frac{\mu_1' r}{m}}\\
\le & \left( \sqrt{\frac{\mu r}{m}} \theta_{y,1} + \sqrt{\frac{\mu r}{n}} \theta_{x,1} + \theta_{x,1}\theta_{y,1}\right) +\mathcal{O}(n^{-3/2}),
\end{align*}
which completes the proof.
\hfill $\square$

\subsection{Proof of Theorem 3}
\noindent{\bf Proof of Theorem~3.}
First, we give an upper bound for $\sup_{X\in\R^{m\times r}}\|\PT{t}(R)  \PT{t}(XY_t^{\T})^{\T}\|/\|X\|$.
Let $\{\delta_{ij}\}$ be an independent family of {\sc Bernoulli}($p'$) random variables, $X^{\T}=[x_1,\dots,x_m]\in\R^{r\times m}$ be arbitrary nonzero matrix with $\|X\|=1$, and $Y_t^{\T}=[y_1,\dots,y_n]$.
Denote $E_{ij}=e_ie_j^{\T}$, $R=[r_{ij}]$, $\bw_{il}= \sum_{j,k}\delta_{ij}r_{ij}E_{ij} \delta_{lk} x_k^{\T}y_l E_{kl}^{\T}$,
$\bz=\sum_{i,l} \bz_{i,l}$.
By calculations, we have
\begin{align*}
&\mathbb{E}(\bw_{il})=p'^2 \sum_{j,k}r_{ij} E_{ij} y_k^{\T}x_l E_{kl},
=p'^2 \sum_{j} r_{ij} E_{ij} y_j^{\T}x_l E_{jl}=p'^2 R_{(i,:)} Y_tx_l E_{il}=0, \\
&\|\bw_{il}\|\le \sqrt{p'n} \max|r_{ij}x_j^{\T}y_l| \le \sqrt{\mu' r p'} \|R\|_{\max},\\
&\|\mathbb{E}[\sum_{i,l}\bw_{il}\bw_{il}^{\T}]\|
=\|\mathbb{E}[\sum_{i,l}(\sum_{j,k}\delta_{ij}r_{ij}E_{ij} \delta_{lk} x_k^{\T}y_l E_{kl}^{\T})(\sum_{j',k'}\delta_{ij'}r_{ij'}E_{ij'} \delta_{lk'} x_{k'}^{\T}y_l E_{k'l}^{\T})^{\T}]\|=0,\\
&\|\mathbb{E}[\sum_{i,l}\bw_{il}^{\T}\bw_{il}]\|=0.
\end{align*}

By Lemma~\ref{lem:bernstein}, we have
$\mathbb{P}\{\|\bw\|>t\} \le (m+n)\exp\Big(- \frac{3t/2}{\sqrt{\mu' r p'}\|R\|_{\max}} \Big)$.
Let $t=\frac{2}{3}(\log(m+n)+5)\sqrt{\mu' r p'}\|R\|_{\max}$, then w.p. $\ge 0.99$, it holds
\begin{align}\label{ubw}
\|\bw\|\le\frac{2}{3}(\log(m+n)+5)\sqrt{\mu' r p'}\|R\|_{\max}.
\end{align}

Second, It is easy to see that $X_{\opt}=(M-S_t)Y_t$.
By calculations, we have
\begin{align}
&\min_X\|\PT{t}(XY_t^{\T}) - \PT{t}(M-S_t)\|^2 \label{xyt}\\
=& \min_{\Delta X} \|\PT{t}((X_{\opt}+\Delta X)Y_t^{\T}) - \PT{t}\big((M-S_t)Y_tY_t^{\T} + (M-S_t)(I-Y_tY_t^{\T})\big)\|^2\notag\\
=& \min_{\Delta X} \|\PT{t}(\Delta XY_t^{\T}) - \PT{t}(R)\|^2.\label{deltax}
\end{align}
Then we declare that \eqref{deltax} is minimized when $\Delta X=\wtd{X}_{\opt}-X_{\opt}$.
This is because \eqref{xyt} is minimized when $X=\wtd{X}_{\opt}$ and $X=X_{\opt}+\Delta X$.
Thus, we have
\begin{align}\label{xdx}
\|\wtd{X}_{\opt}-X_{\opt}\|=\|\Delta X\|\le \frac{\sup_{X\in\R^{m\times r}}\|\PT{t}(R)  \PT{t}(XY_t^{\T})^{\T}\|}{\sigma^2}.
\end{align}
Substituting \eqref{ubw} into \eqref{xdx}, we get the conclusion.
\hfill $\square$

\subsection{Proof of Theorem~4}
\begin{lemma}\label{lem:st}
Denote $r_s = \inf_t\frac{\|S_t-S_*\|_F^2}{\|S_t-S_*\|^2}$, $\zeta=\sqrt{\frac{2s\mu r}{m r_s}}$.
If $\|L_*-X_t\Sigma_tY_t^{\T}\|_{\max}\le c_t\|L_*\|\sqrt{\frac{\mu r}{m}}$ for some positive parameter $c_t$,
then
\begin{align*}
\|S_t-S_*\|\le 2 c_t \|L_*\| \zeta,
\qquad |\gamma_{j,t}-\sigma_{j*}|\le 2 c_t \|L_*\|\zeta.
\end{align*}
\end{lemma}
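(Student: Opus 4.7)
\medskip

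\noindent\textbf{Proof proposal.} The two bounds split naturally, and I would tackle them in order, leveraging the already-established Lemma~\ref{lem:ss}, the sparsity structure of $S_t-S_*$, and Weyl's inequality (Lemma~\ref{lem:eig}).

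\smallskip

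\noindent\emph{Bound on $\|S_t-S_*\|$.} I would begin by chaining the hypothesis through Lemma~\ref{lem:ss}:
$$
\|S_t-S_*\|_{\max}\ \le\ 2\|\PO(X_t\Sigma_tY_t^{\T}-L_*)\|_{\max}\ \le\ 2\|X_t\Sigma_tY_t^{\T}-L_*\|_{\max}\ \le\ 2c_t\|L_*\|\sqrt{\tfrac{\mu r}{m}}.
$$
Next, I would exploit that both $S_t$ and $S_*$ are $s$-sparse (the former by construction of $\mathcal{T}_s$, the latter by assumption \textbf{(A2)} with $s\ge\varrho mn$), so $S_t-S_*$ has at most $2s$ nonzero entries. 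This immediately gives the Frobenius bound
$$
\|S_t-S_*\|_F\ \le\ \sqrt{2s}\,\|S_t-S_*\|_{\max}\ \le\ 2c_t\|L_*\|\sqrt{\tfrac{2s\mu r}{m}}.
$$
Finally, invoking the definition of $r_s$ (the infimum of the stable rank of $S_t-S_*$), $\|S_t-S_*\|\le \|S_t-S_*\|_F/\sqrt{r_s}$, which collapses to $2c_t\|L_*\|\zeta$.

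\smallskip

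\noindent\emph{Bound on $|\gamma_{j,t}-\sigma_{j*}|$.} Here I would apply Weyl's inequality (Lemma~\ref{lem:eig}). Reading the analogous argument in the proof of Theorem~2, where $\hat\gamma_{r,0}$ plays the role of a singular value of $\frac{1}{p'}\PT{0}(M-S_0)$, I expect $\gamma_{j,t}$ to denote the $j$th singular value of $M-S_t$ (or an equivalent algorithm-internal matrix whose singular values agree with the diagonal of $\Sigma_t$). Under this reading, Weyl yields
$$
|\gamma_{j,t}-\sigma_{j*}|\ =\ |\sigma_j(M-S_t)-\sigma_j(L_*)|\ \le\ \|(M-S_t)-L_*\|\ =\ \|S_*-S_t\|,
$$
and the desired bound follows from the first part of the lemma.

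\smallskip

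\noindent\emph{Main obstacle.} The delicate point is the Weyl step for the singular value bound: if $\gamma_{j,t}$ is instead to be interpreted as the $j$th singular value of $X_t\Sigma_tY_t^{\T}$ (equivalently, the $j$th diagonal entry of $\Sigma_t$), then Weyl produces $|\gamma_{j,t}-\sigma_{j*}|\le \|X_t\Sigma_tY_t^{\T}-L_*\|$, and the hypothesis only controls this difference in the $\max$-norm, not the spectral norm; a naive conversion would cost a spurious factor of $\sqrt{mn}$. In that case one would have to argue that on $\Omega_t$ the iterate satisfies $X_t\Sigma_tY_t^{\T}=M-S_t$ (by the exact least-squares fit) and then propagate this agreement from $\Omega_t$ to the whole matrix using the rank-$r$ structure together with incoherence, ultimately reducing $\|X_t\Sigma_tY_t^{\T}-L_*\|$ to a constant multiple of $\|S_t-S_*\|$. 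This is the step I expect to require the most care, and is the natural place to invoke the definitions of $\zeta$ and $r_s$ a second time.
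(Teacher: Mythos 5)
Your proposal is correct and follows essentially the same route as the paper: the chain $\|S_t-S_*\|\le\|S_t-S_*\|_F/\sqrt{r_s}\le\sqrt{2s}\,\|S_t-S_*\|_{\max}/\sqrt{r_s}$ combined with Lemma~\ref{lem:ss} and the hypothesis, followed by Weyl's inequality applied to $(M-S_t)-L_*=S_*-S_t$ with $\gamma_{j,t}$ read as the $j$th singular value of $M-S_t$ (the reading the paper intends, as you correctly inferred). The alternative interpretation you flag as an obstacle does not arise in the paper's argument.
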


\begin{proof}
Using Lemma~\ref{lem:ss}, by simple calculations, we have
\begin{align*}
\|S_t-S_*\|&\le\frac{\|S_t-S_*\|_F}{\sqrt{r_s}}
\le \sqrt{\frac{2s}{r_s}} \|S_t-S_*\|_{\max}
\le 2\sqrt{\frac{2s}{r_s}} \|\PO(L_*-X_t\Sigma_t Y_t^{\T})\|_{\max}
\le 2 c_t \|L_*\| \sqrt{\frac{2s\mu r}{m r_s}}
=2 c_t  \|L_*\|\zeta.
\end{align*}
 Then by Lemma~\ref{lem:eig}, we know that
 \[
 |\gamma_{j,t}-\sigma_{j*}|\le \|(M-S_t)-L_*\|=\|S_t-S_*\|\le 2 c_t \|L_*\| \zeta.
 \]
The proof is completed.
\end{proof}

\noindent{\bf Proof of Theorem~4.}
First, denote $\bar{X}_{t+1}=(M-S_{t})Y_{t}$, then we know that $\bar{X}_{t+1}$ is the solution to
$\min_X\|M-S_{t}-XY_{t}^{\T}\|$.
Also note that $\wtd{X}_{t+1}$ on line 8 of Algorithm~1 is the solution to $\min_X\|\PT{t}(M-S_{t}-XY_{t}^{\T})\|$.
Then by Theorem~3, we have
\begin{align*}
\|\bar{X}_{t+1}-\wtd{X}_{t+1}\|\le C_{\rm LS}\|(M-S_{t})(I-Y_tY_{t}^{\T})\|_{\max},\quad \mbox{w.p. } \ge 0.99.
\end{align*}
Then it follows that from Lemma~\ref{lem:sin}, Lemma~\ref{lem:ss} and Lemma~\ref{lem:st} that
\begin{align}
\|\bar{X}_{t+1}-\wtd{X}_{t+1}\|
&\le C_{\rm LS}(\|L_*(I-Y_tY_t^{\T})\|_{\max}+\|(S_t-S_*)(I-Y_tY_t^{\T})\|_{\max})\notag\\
&\le C_{\rm LS}(\|L_*\|\sqrt{\frac{\mu r}{m}} \theta_{y,t} + \|S_t-S_*\|_{2,\infty})
\le C_{\rm LS}(\|L_*\|\sqrt{\frac{\mu r}{m}} \theta_{y,t} + \sqrt{2p\varrho n}\|S_t-S_*\|_{\max})\notag\\
&\le C_{\rm LS}(\|L_*\|\sqrt{\frac{\mu r}{m}} \theta_{y,t} + 2\sqrt{2p\varrho n}\|L_*-X_t\Sigma_tY_t^{\T}\|_{\max})
\le \frac{C}{\sqrt{m}} \|L_*\|\theta_{y,t}.
\label{xwtdx}
\end{align}


Second, using Lemma~\ref{lem:st} and $4c\kappa\zeta<1$, we have
 \begin{align}\label{stss}
 \|S_t-S_*\|< 2c \theta_{y,t}\|L_*\|\zeta\le \sqrt{2}c\|L_*\|\zeta < \frac{\sigma_{r*}}{\sqrt{2}}\le\sigma_{r*}\sqrt{1-\theta_{y,t}^2},
 \end{align}
Then by Lemma~\ref{lem:angle}, we know that
\begin{align}\label{xt1u}
\|\sin\Theta(\bar{X}_{t+1},U_*)\|
\le \frac{\|S_{t}-S_*\|}{ \sigma_{r*}\sqrt{1- \theta_{y,t}^2} - \|S_t-S_*\|}.
\end{align}
Using \eqref{xt1u}, the assumption $\|L_*-X_t\Sigma_tY_t^{\T}\|_{\max}\le c \|L_*\|\; \theta_{y,t} \sqrt{\frac{\mu r}{m}}$, Lemma~\ref{lem:st} and $\theta_{y,t}\le \frac{1}{\sqrt{2}}$, we get
\begin{align}\label{xuf}
\|\sin\Theta(\bar{X}_{t+1},U_*)\|
\le  \frac{2c \|L_*\|\zeta \theta_{y,t}}{\frac{\sigma_{r*}}{\sqrt{2}} - 2c \|L_*\|\zeta \theta_{y,t}}
\le \frac{2\sqrt{2} c \kappa\zeta \theta_{y,t}}{1 - 2c \kappa\zeta}<4\sqrt{2}c\kappa\zeta\theta_{y,t}.
\end{align}

Therefore, using Lemma~\ref{lem:sin},  \eqref{xwtdx} and \eqref{xuf}, we have
\begin{align}
\|\theta_{x,t+1}\| &= \|U_{*,c}^{\T}X_{t+1}\|= \|U_{*,c}^{\T}\wtd{X}_{t+1}R_{x,t+1}^{-1}\|
\le  \|U_{*,c}^{\T}\bar{X}_{t+1}R_{x,t+1}^{-1}\| +\|U_{*,c}^{\T}(\wtd{X}_{t+1}-\bar{X}_{t+1})R_{x,t+1}^{-1}\| \notag\\
&\le \|R_{x,t+1}^{-1}\| (\|\sin\Theta(\bar{X}_{t+1},U_*)\|\|\bar{X}_{t+1}\| + \|\wtd{X}_t-\bar{X}_t\|)\notag\\
&\le \frac{1}{\sigma_r(\wtd{X}_{t+1})} \Big( 4\sqrt{2}c\kappa\zeta \|\bar{X}_{t+1}\| + \frac{C}{\sqrt{m}} \|L_*\|\Big) \theta_{y,t}.\label{thetaxt1}
\end{align}

Now using Lemma~\ref{lem:eig}, \eqref{xwtdx}, $\theta_{y,t}\le \frac{1}{\sqrt{2}}$ and \eqref{stss}, we have
\begin{align*}
\|\bar{X}_{t+1}\|&=\|(M-S_t)Y_t\|\le \|L_*Y\|+\|S_t-S_*\|\le\|L_*\|+\sqrt{2}c\|L_*\|\zeta,\\
\sigma_r(\wtd{X}_{t+1})
&\ge\sigma_r(\bar{X}_{t+1})-\frac{C}{\sqrt{m}}\|L_*\|\theta_{y,t}
\ge \sigma_r((M-S_t)Y_t)-\frac{C}{\sqrt{2m}}\|L_*\|
\ge \sigma_r(L_*Y_t) -\|S_t-S_*\|-\frac{C}{\sqrt{2m}}\|L_*\|\\
&\ge \sigma_{r*}\sqrt{1-\theta_{y,t}^2} -\sqrt{2}c\|L_*\|\zeta-\frac{C}{\sqrt{2m}}\|L_*\|
\ge \frac{\sigma_{r*}}{\sqrt{2}} -\sqrt{2}c\|L_*\|\zeta-\frac{C}{\sqrt{2m}}\|L_*\|.
\end{align*}
Substituting them into \eqref{thetaxt1}, we get the conclusion.
\hfill $\square$

\subsection{Proof of Theorem~5}
\begin{lemma}\label{lem:lxymax}
Follow the notations and assumptions in Lemma~1.
Then
\begin{align*}
\|L_*-\wht{X}_{t+1}R_{x,t+1}Y_t^{\T}\|_{\max}\le
\Big((1+C_{\rm LS} \sqrt{\frac{\mu' r}{n}}) \sqrt{\frac{\mu r}{m}} + (1+C_{\rm LS} \sqrt{2\varrho p n})\sqrt{\frac{\mu' r}{n}} 2c\zeta\Big)\|L_*\|\theta_{y,t}.
\end{align*}
\end{lemma}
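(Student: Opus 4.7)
The plan is to split $L_* - \wht{X}_{t+1}R_{x,t+1}Y_t^{\T} = L_* - \wtd{X}_{t+1}Y_t^{\T}$ into three pieces whose $\max$-norms are controlled by different ingredients already available. Introduce the full-observation least-squares solution $\bar{X}_{t+1}=(M-S_t)Y_t$ and write
\begin{align*}
L_*-\wtd{X}_{t+1}Y_t^{\T}
&= L_*(I-Y_tY_t^{\T}) \;+\; (S_t-S_*)Y_tY_t^{\T} \;+\; (\bar{X}_{t+1}-\wtd{X}_{t+1})Y_t^{\T},
\end{align*}
using the identity $L_*Y_t=\bar{X}_{t+1}+(S_t-S_*)Y_t$ that comes from $M=L_*+S_*$. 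Each summand is then bounded separately in $\|\cdot\|_{\max}$.

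For the first piece, I would factor through the SVD $L_*=U_*\Sigma_*V_*^{\T}$ and use Cauchy--Schwarz together with the incoherence assumption $\|U_*\|_{2,\infty}\le\sqrt{\mu r/m}$ and the identity $\|V_*^{\T}(I-Y_tY_t^{\T})\|=\|\sin\Theta(V_*,Y_t)\|=\theta_{y,t}$ from Lemma~\ref{lem:sin}; this gives a clean bound of $\sqrt{\mu r/m}\,\|L_*\|\theta_{y,t}$. For the second piece I would apply $|e_i^{\T}(S_t-S_*)Y_tY_t^{\T}e_j|\le\|(S_t-S_*)Y_t\|\,\|Y_t^{\T}e_j\|\le\|S_t-S_*\|\,\|Y_t\|_{2,\infty}$ and then invoke Lemma~\ref{lem:st} with $c_t=c\theta_{y,t}$ (together with the standing hypothesis $\|Y_t\|_{2,\infty}\le\sqrt{\mu' r/n}$) to produce the $\sqrt{\mu' r/n}\cdot 2c\zeta\,\|L_*\|\theta_{y,t}$ contribution.

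For the third piece the idea is to bound $|e_i^{\T}(\bar{X}_{t+1}-\wtd{X}_{t+1})Y_t^{\T}e_j|\le\|\bar{X}_{t+1}-\wtd{X}_{t+1}\|\,\|Y_t\|_{2,\infty}$ and then invoke Theorem~\ref{lem:xx} to get $\|\bar{X}_{t+1}-\wtd{X}_{t+1}\|\le C_{\rm LS}\|R\|_{\max}$ with $R=(M-S_t)(I-Y_tY_t^{\T})$. Writing $R=L_*(I-Y_tY_t^{\T})-(S_t-S_*)(I-Y_tY_t^{\T})$, the first summand is already bounded by Piece~1's argument and for the second I would use $\|(S_t-S_*)(I-Y_tY_t^{\T})\|_{\max}\le\|S_t-S_*\|_{2,\infty}\le\sqrt{2p\varrho n}\,\|S_t-S_*\|_{\max}$ (as in the proof of Theorem~\ref{lem:theta}) and then Lemma~\ref{lem:ss} with the standing assumption \eqref{lxyt}.

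This produces an intermediate bound of the form $C_{\rm LS}\sqrt{\mu' r/n}(\sqrt{\mu r/m}+2c\sqrt{2p\varrho n}\sqrt{\mu r/m})\|L_*\|\theta_{y,t}$ for Piece~3, and the main cosmetic issue is that the target statement packages the last term through $\zeta$ rather than $\sqrt{\mu r/m}$. I would close this gap by noting that the support bound $|\supp(S_t-S_*)|\le 2s$ forces the rank, hence the stable rank, $r_s\le 2s$, which yields $\sqrt{\mu r/m}\le\sqrt{2s\mu r/(mr_s)}=\zeta$; upgrading $\sqrt{\mu r/m}$ to $\zeta$ in the second summand only loosens the estimate. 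Summing the three pieces and factoring $\|L_*\|\theta_{y,t}$ then matches the claimed bound exactly. The main obstacle is not any single step but the careful coordination of the spectral/max/$2,\infty$ interchanges, in particular choosing the spectral route (via Lemma~\ref{lem:st}) rather than the $2,\infty$ route in Piece~2 so that the leading $\sqrt{\mu'r/n}\cdot 2c\zeta$ term appears with the advertised constant.
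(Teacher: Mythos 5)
Your proposal is correct and follows essentially the same route as the paper: the same three-way decomposition via $\bar{X}_{t+1}=(M-S_t)Y_t$, the same use of Theorem~\ref{lem:xx} with $\|Y_t\|_{2,\infty}$ for the least-squares discrepancy term, and the same incoherence/$\sin\Theta$ bounds for $L_*(I-Y_tY_t^{\T})$ and $S_t-S_*$ (the paper likewise absorbs $\sqrt{\mu r/m}\le\zeta$ via $r_s\le 2s$ to reach the stated constant). The only cosmetic difference is that you bound the $(S_t-S_*)Y_tY_t^{\T}$ piece through the spectral norm of Lemma~\ref{lem:st} while the paper passes through $\|S_t-S_*\|_{\max}$; both yield the same coefficient.
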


\begin{proof}
Direct calculations give rise to
\begin{subequations}
\begin{align}
\|L_*-\wtd{X}_{t+1}Y_t^{\T}\|_{\max}
&\le \|L_*-(M-S_t)Y_tY_t^{\T}\|_{\max}+\|(M-S_t)Y_tY_t^{\T}-\wtd{X}_{t+1}Y_t^{\T}\|_{\max}\notag\\
&\le  \|L_*-(M-S_t)Y_tY_t^{\T}\|_{\max}+ \|(M-S_t)Y_t-\wtd{X}_{t+1}\| \sqrt{\frac{\mu' r}{n}}\label{eq1}\\
&\le \|L_*-(M-S_t)Y_tY_t^{\T}\|_{\max} +C_{\rm LS} \sqrt{\frac{\mu' r}{n}}\|(M-S_t)(I-Y_tY_t^{\T})\|_{\max}\label{eq2}\\
&\le (1+C_{\rm LS} \sqrt{\frac{\mu' r}{n}})\|L_*(I-Y_tY_t^{\T})\|_{\max}+ (1+C_{\rm LS} \sqrt{2\varrho p n})\sqrt{\frac{\mu' r}{n}}\|S_t-S_*\|_{\max}\notag\\
&\le \Big((1+C_{\rm LS} \sqrt{\frac{\mu' r}{n}}) \sqrt{\frac{\mu r}{m}} + (1+C_{\rm LS} \sqrt{2\varrho p n})\sqrt{\frac{\mu' r}{n}} 2c\zeta\Big)\|L_*\|\theta_{y,t}\label{eq3}
\end{align}
\end{subequations}
where \eqref{eq1} uses $\|Y_t\|_{2,\infty}\le \sqrt{\frac{\mu' r}{m}}$,
\eqref{eq2} uses Theorem~3,
\eqref{eq3} uses the SVD of $L_*$, $\|U_*\|_{2,\infty}\le \sqrt{\frac{\mu r}{m}}$ and Lemme~\ref{lem:sin}.
\end{proof}

\noindent{\bf Proof of Theorem~5.}
First, by Lemma~\ref{lem:lb}, we have
\begin{align*}
\|M-S_t - X_t\Sigma_tY_t^{\T}\|
&\ge \sigma_{r*} \max\{\sqrt{1-\theta_{x,t}^2}\theta_{y,t}, \sqrt{1-\theta_{y,t}^2}\theta_{x,t}\} \sqrt{1-\theta_{x,t}^2}\sqrt{1-\theta_{y,t}^2} - \|S_t-S_*\|
\end{align*}
Then using \eqref{stss}, $\theta_{x,t}\le \frac{1}{\sqrt{2}}$ and $\theta_{y,t}\le \frac{1}{\sqrt{2}}$, we get
\begin{align}\label{mxxy}
\|M-S_t - X_t\Sigma_tY_t^{\T}\|
&\ge \frac{\sigma_{r*}}{2\sqrt{2}}\theta_{y,t} - 2 c \|L_*\|\;  \sqrt{\frac{\mu r}{m}} \theta_{y,t}.
\end{align}

Second, by calculations, we have
\begin{align}
\|M-S_t-\wht{X}_{t+1}\wtd{Y}_{t+1}^{\T}\|
&\le \|(I-\wht{X}_{t+1}\wht{X}_{t+1}^{\T})(M-S_t)\| + \|\wht{X}_{t+1}\wht{X}_{t+1}^{\T}(M-S_t) - \wht{X}_{t+1}\wtd{Y}_{t+1}^{\T}\|\notag\\
&\le \|(I-\wht{X}_{t+1}\wht{X}_{t+1}^{\T})L_*\| +\|S_t-S_*\|+ \|\wht{X}_{t+1}^{\T}(M-S_t) - \wtd{Y}_{t+1}^{\T}\|\notag\\
&\le   \|L_*\|\theta_{x,t+1} +  2 c \|L_*\|\zeta  \sqrt{\frac{\mu r}{m}} \theta_{x,t+1} +   \frac{C}{\sqrt{m}} \|L_*\|\theta_{x,t+1}\label{eqr2}\\
&\le (1 +  2 c \zeta \sqrt{\frac{\mu r}{m}}  +   \frac{C}{\sqrt{m}}) \phi \|L_*\|\theta_{y,t},\label{eqr3}
\end{align}
where the first two terms of \eqref{eqr2} use Lemma~\ref{lem:sin} and \eqref{stss}, respectively, and the last term can obtained similar to \eqref{xwtdx},
with the help of Lemma~\ref{lem:lxymax}.

Then it follows that
\begin{subequations}
\begin{align}
\|M-S_{t+1}-X_{t+1}\Sigma_{t+1}Y_{t+1}^{\T}\|
&\le \|M-S_t-X_{t+1}\Sigma_{t+1}Y_{t+1}^{\T}\|\label{eqm1}\\
&\le (1 +  2 c  \sqrt{\frac{\mu r}{m}}  +   \frac{C}{\sqrt{m}}) \phi \|L_*\|\theta_{y,t}\label{eqm2}\\
&\le \frac{(1 +  2 c \zeta \sqrt{\frac{\mu r}{m}}  +   \frac{C}{\sqrt{m}}) \phi \|L_*\|}{\frac{\sigma_{r*}}{2\sqrt{2}} - 2 c \zeta \|L_*\|\;  \sqrt{\frac{\mu r}{m}}} \|M-S_t - X_t\Sigma_tY_t^{\T}\|\label{eqm3}\\
&= \psi\|M-S_t - X_t\Sigma_tY_t^{\T}\|,\notag
\end{align}
\end{subequations}
where \eqref{eqm1} uses Lemma~\ref{lem:2fro},
\eqref{eqm2} uses \eqref{eqr3},
\eqref{eqm3} uses \eqref{mxxy}.
The proof is completed.
\hfill $\square$

\end{document}